\theoremstyle{plain}
\newtheorem{lemma}{Lemma}[section]
\newtheorem*{lemma*}{Lemma}
\newtheorem{corollary}[lemma]{Corollary}
\newtheorem*{corollary*}{Corollary}
\theoremstyle{definition}
\newtheorem{theorem}[lemma]{Theorem}
\newtheorem*{theorem*}{Theorem}
\newtheorem*{problem*}{Problem}
\DeclareMathAlphabet\mathbb{U}{msb}{m}{n}
\def\Rset{\mathbb{R}}
\let\Pr\undefined
\let\P\undefined
\DeclareMathOperator*{\Pr}{\mathbb{P}}
\DeclareMathOperator*{\P}{\mathbb{P}}
\DeclareMathOperator*{\E}{\mathbb E}
\DeclareMathOperator*{\argmin}{argmin}
\DeclareMathOperator*{\dom}{dom}
\DeclareMathOperator*{\cont}{cont}
\DeclareMathOperator*{\core}{core}
\DeclareMathOperator*{\cone}{cone}
\DeclareMathOperator*{\conv}{conv}
\DeclareMathOperator*{\diam}{diam}
\DeclareMathOperator*{\poly}{poly}
\DeclareMathOperator{\Reg}{\mathsf{Reg}}
\DeclareMathOperator{\SwapReg}{\mathsf{SwapReg}}
\DeclareMathOperator{\BSReg}{\mathsf{BayesSwapReg}}
\DeclareMathOperator{\PReg}{\mathsf{ProcrustesReg}}
\DeclarePairedDelimiter{\bracket}{[}{]}
\DeclarePairedDelimiter{\curl}{\{}{\}}
\DeclarePairedDelimiter{\norm}{\lVert}{\rVert}
\DeclarePairedDelimiter{\paren}{(}{)}
\DeclarePairedDelimiter{\tri}{\langle}{\rangle}
\newcommand{\cA}{\mathcal{A}}
\newcommand{\cC}{\mathcal{C}}
\newcommand{\cD}{\mathcal{D}}
\newcommand{\cF}{\mathcal{F}}
\newcommand{\cL}{\mathcal{L}}
\newcommand{\cX}{\mathcal{X}}
\newcommand{\cY}{\mathcal{Y}}
\newcommand{\sA}{{\mathscr A}}
\newcommand{\sC}{{\mathscr C}}
\newcommand{\sH}{{\mathscr H}}
\newcommand{\sK}{{\mathscr K}}
\newcommand{\sL}{{\mathscr L}}
\newcommand{\sP}{{\mathscr P}}
\newcommand{\sS}{{\mathscr S}}
\newcommand{\sT}{{\mathscr T}}
\newcommand{\sX}{{\mathscr X}}
\newcommand{\sY}{{\mathscr Y}}
\newcommand{\sZ}{{\mathscr Z}}
\newcommand{\bdc}{{\mathbf c}}
\newcommand{\bx}{{\mathbf x}}
\newcommand{\ODelta}{{\overline{\Delta}}}
\newcommand{\sfp}{{\mathsf p}}
\newcommand{\sfq}{{\mathsf q}}
\newcommand{\wt}{\widetilde}
\newcommand{\eps}{\varepsilon}
\newcommand{\ignore}[1]{}
\newcommand{\wtu}{\widetilde{u}}
\newcommand{\wttheta}{\widetilde{\theta}}
\newcommand{\wtcA}{\widetilde{\cA}}
\newcommand{\wtcF}{\widetilde{\cF}}
\newcommand{\wtR}{\widetilde{R}}
\newtheorem{question}{Open Question}
\title{Pseudonorm Approachability and \\
Applications to Regret Minimization}
\author{Christoph Dann  \\ Google Research \and Yishay Mansour \\ Google Research \and Mehryar Mohri \\ Google Research \and Jon Schneider \\ Google Research \and Balasubramanian Sivan \\ Google Research}
\begin{document}

\maketitle

\begin{abstract}%

Blackwell's celebrated approachability theory provides a general
framework for a variety of learning problems, including regret
minimization. However, Blackwell's proof and implicit algorithm
measure approachability using the $\ell_2$ (Euclidean) distance. We
argue that in many applications such as regret minimization, it is
more useful to study approachability under other distance metrics,
most commonly the $\ell_\infty$-metric. But, the time and space
complexity of the algorithms designed for
$\ell_\infty$-approachability depend on the dimension of the space of
the vectorial payoffs, which is often prohibitively large. Thus, we
present a framework for converting high-dimensional
$\ell_\infty$-approachability problems to low-dimensional
\emph{pseudonorm} approachability problems, thereby resolving such
issues.  We first show that the $\ell_\infty$-distance between the
average payoff and the approachability set can be equivalently defined
as a \emph{pseudodistance} between a lower-dimensional average vector
payoff and a new convex set we define. Next, we develop an algorithmic
theory of pseudonorm approachability, analogous to previous work on
approachability for $\ell_2$ and other norms, showing that it can be
achieved via online linear optimization (OLO) over a convex set given
by the Fenchel dual of the unit pseudonorm ball. We then use that to
show, modulo mild normalization assumptions, that there exists an
$\ell_\infty$-approachability algorithm whose convergence is
independent of the dimension of the original vectorial payoff. We
further show that that algorithm admits a polynomial-time complexity,
assuming that the original $\ell_\infty$-distance can be computed
efficiently. We also give an $\ell_\infty$-approachability algorithm
whose convergence is logarithmic in that dimension using an FTRL
algorithm with a maximum-entropy regularizer. Finally, we illustrate
the benefits of our framework by applying it to several problems in
regret minimization.

\end{abstract}

% \begin{keywords}%
%   Blackwell's approachability, regret minimization, swap regret.
% \end{keywords}

\section{Introduction}

The notion of approachability introduced by \cite{Blackwell1956} can
be viewed as an extension of von Neumann's minimax theorem
\citep{vonNeumann1928} to the case of vectorial
payoffs. \citeauthor{Blackwell1956} gave a simple example showing that
the straightforward analog of von Neumann's minimax theorem does not
hold for vectorial payoffs. However, in contrast with this negative
result for one-shot games, he proved that, in repeated games, a player
admits an adaptive strategy guaranteeing that their average payoff
\emph{approaches} a closed convex set in the limit, provided that the
set satisfies a natural separability condition.

The theory of Blackwell approachability is intimately connected with
the field of online learning for the reason that the problem of
\textit{regret minimization} can be viewed as an approachability
problem: in particular, the learner would like their vector of regrets
(with respect to each competing benchmark) to converge to a
non-positive vector.  In this vein, \cite{AbernethyBartlettHazan2011}
demonstrated how to use algorithms for approachability to solve a
general class of regret minimization problems (and conversely, how to
use regret minimization to construct approachability
algorithms). However, applying their reduction sometimes leads to
suboptimal regret guarantees -- for example, for the specific case of
minimizing external regret over $T$ rounds with $K$ actions, their
reduction results in an algorithm with $O(\sqrt{TK})$ regret (instead
of the optimal $O(\sqrt{T\log K})$ regret bound achievable by
e.g. multiplicative weights).

One reason for this suboptimality is the choice of distance used to
define approachability. Both \citeauthor{Blackwell1956} and
\citeauthor{AbernethyBartlettHazan2011}\! consider approachability
algorithms that minimize the Euclidean ($\ell_2$) distance between
their average payoff and the desired set. We argue that, for
applications to regret minimization, it is often more useful to study
approachability under other distance metrics, most commonly
approachability under the \textit{$\ell_{\infty}$ metric} to the
\textit{non-positive orthant}, which is well suited to capture the
fact that regret is a \textit{maximum} over various competing
benchmarks. This has been observed in the past in several recent
publications \citep{Perchet2015,Shimkin2016,Kwon2021}.  In particular,
by constructing algorithms for $\ell_{\infty}$ approachability, it is
possible to naturally recover a $O(\sqrt{T\log K})$ external regret
learning algorithm (and algorithms with optimal regret guarantees for
many other problems of interest).

However, there is still one significant problem with developing regret
minimization algorithms via $\ell_{\infty}$ approachability (or any of
the forms of approachability previously mentioned): the time and space
complexity of these algorithms depends polynomially on the dimension
$d$ of the space of vectorial payoffs, which in turn equals the number
of benchmarks we compete against in our regret minimization
problem. In some regret minimization settings, this can be
prohibitively expensive. For example, in the setting of swap regret
(where the benchmarks are parameterized by the $d = K^K$ swap
functions mapping $[K]$ to $[K]$), this results in algorithms with
complexity exponential in $K$. On the other hand, there exist
algorithms, e.g.\ \citep{BlumMansour2007}, which are both efficient
($\poly(K)$ time and space) and obtain optimal regret guarantees.

\subsection{Main results}

In this paper, we present a framework for converting
\textit{high-dimensional} $\ell_{\infty}$ approachability problems to
\textit{low-dimensional} ``pseudonorm'' approachability problems, in
turn resolving many of these issues. To be precise, recall that the
setting of approachability can be thought of as a $T$-round repeated
game, where in round $t$ the learner chooses an action $p_t$ from some
convex ``action'' set $\sP \subseteq \Rset^n$, the adversary
simultaneously chooses an action $\ell_t$ from some convex ``loss''
set $\sL \subseteq \Rset^m$, and the learner receives a vector-valued
payoff $u(p_t, \ell_t)$, where $u\colon \sP \times \sL \rightarrow
\Rset^{d}$ is a $d$-dimensional bilinear function. The learner would
like the $\ell_{\infty}$ distance between their average payoff
$\frac{1}{T}\sum u(p_t, \ell_t)$ and some convex set\footnote{For
simplicity, throughout this paper we assume $\sS$ to be the negative
orthant $(-\infty, 0]^d$, as this is the case most relevant to regret
minimization. However, much of our results extend straightforwardly
to arbitrary convex sets.} $\sS$ to be as small as possible.

We first demonstrate how to construct a new $d'$-dimensional bilinear
function $\wtu\colon \sP \times \sL \rightarrow \Rset^{d'}$, a new
convex set $\sS' \subseteq \Rset^{d'}$, and a pseudonorm\footnote{In
this paper a \textit{pseudonorm} is a function $f\colon \Rset^{d'}
\rightarrow \Rset_{\geq 0}$ which satisfies most of the properties of
a norm (e.g. positive homogeneity, triangle inequality), but may be
asymmetric (there may exist $x$ where $f(x) \neq f(-x)$) and may not
be definite (there may exist $x \neq 0$ where $f(x) = 0$). Just as a
norm $\norm{\cdot}$ defines a distance between $x, y \in \Rset^{d'}$
via $\norm{x - y}$, a pseudonorm $f$ defines the pseudodistance $f(x -
y)$.} $f$ such that the ``pseudodistance'' between the average
modified payoff $\frac{1}{T}\sum \wtu(p_t, \ell_t)$ and $\sS'$ is
equal to the $\ell_{\infty}$ distance between the original average
payoff and $\sS$. Importantly, the new dimension $d'$ is equal to $mn$
and is independent of the original dimension $d$.

We then develop an algorithmic theory of pseudonorm approachability
analogous to that developed in \citep{AbernethyBartlettHazan2011} for
the $\ell_2$ norm and \citep{Shimkin2016, Kwon2021} for other norms,
showing that, in order to perform pseudonorm approachability, it
suffices to be able to perform online linear optimization (OLO) over a
convex set given by the Fenchel dual of the unit pseudonorm ball (and
that the rate of approachability is directly related to the regret
guarantees of this OLO subalgorithm). This has the following
consequences for approachability:

\begin{itemize}
    \item First, by solving this OLO problem with a quadratic
      regularized Follow-The-Regularized-Leader (FTRL) algorithm, we
      show (modulo mild normalization assumptions on the sizes of
      $\sP$, $\sL$, and $u$) that there exists a pseudonorm
      approachability algorithm (and hence an $\ell_{\infty}$
      approachability algorithm for the original problem) which
      converges at a rate of $O(nm/\sqrt{T})$. We additionally provide
      a stronger bound on the rate which scales as
      $O(D_{u}D_{p}D_{\ell}/\sqrt{T})$ where $D_{p} = \diam \sP$,
      $D_{\ell} = \diam \sL$, and $D_{u}$ is the maximum $\ell_2$ norm
      of the set of vectors formed by taking the coefficients of
      components of $u$ (Theorem \ref{thm:poly_regret}). In
      comparison, the best-known generic guarantee for
      $\ell_{\infty}$-approachability prior to this work converged at
      a $d$-dependent rate of $O(\sqrt{(\log d)/T})$.
      
    \item \sloppy{Second, we show that as long as we can evaluate the
      original $\ell_{\infty}$-distance between $\frac{1}{T}\sum
      u(p_t, \ell_t)$ and $(-\infty, 0]^d$ efficiently, we can
      implement the above algorithm in $\poly(m, n)$ time per round
      (Theorem \ref{thm:efficient_alg}). This has the following
      natural consequence for the class of regret minimization
      problems that can be written as $\ell_{\infty}$-approachability
      problems: \textit{if it is possible to efficiently compute some
        notion of regret for a sequence of losses and actions, then
        there is an efficient (in the dimensions of the actions and
        losses) learner that minimizes this regret}.}

      \item Finally, in some cases, the $O(\sqrt{(\log d)/T})$
        approachability rate from (inefficient)
        $\ell_{\infty}$-approachability outperforms the rate obtained
        by the quadratic regularized FTRL algorithm. We define a new
        regularizer whose value is given by finding the maximum
        entropy distribution of a subset of distributions of support
        $d$, and show that, by using this regularizer, we recover this
        $O(\sqrt{(\log d)/T})$ rate. In particular, whenever we can
        efficiently compute this maxent regularizer, there is an
        efficient learning algorithm with a $O(\sqrt{(\log d)/T})$
        approachability rate.
\end{itemize}

\noindent
We then apply our framework to various problems in regret minimization:

\begin{itemize}
    \item We show that our framework straightforwardly recovers a
      regret-optimal and efficient algorithm for swap regret
      minimization. Doing so requires computing the above maximum
      entropy regularizer for this specific case, where we show that
      it has a nice closed form. In particular, to our knowledge. this
      is the first approachability-based algorithm for swap regret
      that both is efficient and has the optimal minimax regret.
    
    \item In Section~\ref{sec:bayes_correlated}, we apply our
      framework to develop the first efficient contextual learning
      algorithms with low \textit{Bayesian swap regret}. Such
      algorithms have the property that if learners employ them in a
      repeated Bayesian game, the time-average of their strategies
      will converge to a Bayesian correlated equilibrium, a
      well-studied equilibrium notion in game theory (see
      e.g. \cite{bergemann2016bayes}).
    
    This notion of Bayesian swap regret was recently introduced by
    \cite{mansour2022}, who also provided an algorithm with low
    Bayesian swap regret, albeit one that is not computationally
    efficient. By applying our framework, we easily obtain an
    efficient contextual learning algorithm with $O(CK\sqrt{T})$
    Bayesian swap regret, resolving an open question of
    \cite{mansour2022} (here $C$ is the number of ``contexts'' /
    ``types'' of the learner).

    \item In Section~\ref{sec:rl}, we further analyze the application
      of our general $\ell_\infty$-approachability theory and
      algorithm to the analysis of reinforcement learning (RL) with
      vectorial losses. We point out how our framework can provide a
      general solution in the full information setting with known
      transition probabilities and how we can recover the best known
      solution for standard regret minimization in episodic RL. More
      importantly, we show how our framework and algorithm can lead to
      an algorithm for constrained MDPs with a significantly more
      favorable regret guarantee, logarithmic in the number of
      constraints $k$, in contrast with the $\sqrt{k}$-dependency of
      the results of \citet{MiryoosefiBrantleyDaumeDudikSchapire2019}.

\end{itemize}

% This has led to Blackwell approachability being an essential tool in
% the toolbox of learning theorists. For example, many of the earliest
% methods for constructing algorithms with sublinear internal regret
% are based off of this idea of approachability \textbf{(CITE)}.

\subsection{Related Work}

There is a wide literature dealing with various aspects of Blackwell's
approachability, including its applications to game theory,
regret minimization, reinforcement learning, and multiple extensions.

\citet{HartMasColell2000} described an adaptive procedure for players
in a game based on Blackwell's approachability, which guarantees that
the empirical distributions of the plays converges to the set of a
correlated equilibrium. This procedure is related to internal regret
minimization, for which, as shown by \citet{FosterVohra1999}, the
existence of an algorithm follows the proof of
\citet{HartMasColell2000}.  \citet{HartMasColell2001} further gave a
general class of adaptive strategies based on approachability.
Approachability has been widely used for calibration \citep{Dawid82},
see \cite{FosterHart2017} for a recent work on the topic.
Approachability and partial monitoring were studied in a series of
publications by \citet{Perchet2010,MannorPerchetStoltz2014,
  MannorPerchetStoltz2014Bis,
  PerchetQuincampoix2015,PerchetQuincampoix2018,KwonPerchet2017}. More
recently, approachability has also been used in the analysis of
fairness in machine learning \citep{ChzhenGiraudStoltz2021}.

Approachability has also been extensively used in the context of
reinforcement learning. \citet{MannorShimkin2003} discussed an
extension of regret minimization in competitive Markov decision
processes (MDPs) whose analysis is based on Blackwell's
approachability theory. \citet{MannorShimkin2004} presented a
geometric approach to multiple-criteria reinforcement learning
formulated as approachability conditions.
\citet{KalathiBorkarJain2014} presented strategies for approachability
for MDPs and Stackelberg stochastic games based on
\citeauthor{Blackwell1956}'s approachability theory.  More recently,
\citet{MiryoosefiBrantleyDaumeDudikSchapire2019} used approachability
to derive solutions for reinforcement learning with convex
constraints. 

The notion of approachability was further extended in several studies.
\citet{Vieille1992} used differential games with a fixed duration to
study weak approachability in finite dimensional
spaces. \citet{Spinat2002} formulated a necessary and sufficient
condition for approachability of non-necessary convex sets.
\citet{Lehrer2003Bis} extended \citeauthor{Blackwell1956}'s
approachability theory to infinite-dimensional spaces.

The most closely related work to this paper, which we build upon,
is that of \citet{AbernethyBartlettHazan2011} who showed that,
remarkably, any algorithm for Blackwell's approachability could be
converted into one for online convex optimization and vice-versa.
\citet{BernsteinShimkin2015} also discussed a related response-based
approachability algorithm.

\citet{Perchet2015} presented a specific study of $\ell_\infty$
approachability, for which they gave an exponential weight algorithm.
\citet[][Section~5]{Shimkin2016} studied approachability for an
arbitrary norm and gave a general duality result for an arbitrary norm
using Sion's minmax theorem. The pseudonorm duality theorem we prove,
using Fenchel duality, can be viewed as a generalization.
\citet{Kwon2016,Kwon2021} also presented a duality theorem similar to
that of \citet{Shimkin2016} which they used to derive a FTRL algorithm
for general norm approachability. They further treated the special
case of internal and swap regret. However, unlike the algorithms
derived in this work, the computational complexity of their swap
regret algorithm is in $O(K^K)$. This is also true of the paper of
\citet{Perchet2015} which also analyzes the swap regret problem.

% \textbf{TODO: add Joon Kwon + other neurips rebuttal papers; MM: I
% added Kwon's thesis
% \url{https://hal.inrae.fr/tel-02795606/document}; I looked at it and
% it seems that the claim about distance-like quantities is related to
% the choice of general generators of the cone (section IV.2 page 72);
% my suggestion for this paragraph is to say something along the lines
% of what we included in our rebuttal.}
% \citep{Perchet2015,Shimkin2016,Kwon2016,Kwon2021}

It is known that if all players follow a swap regret minimization
algorithm, then the empirical distribution of their play converges to
a correlated equilibrium \citep{BlumMansour2007}.
\cite{HazanKale2008} showed a result generalizing this property to the
case of $\Phi$-regret and $\Phi$-equilibrium, where the $\Phi$-regret
is the difference between the cumulative expected loss suffered by the
learner and that of the best $\Phi$-modification of the sequence in
hindsight.
\citet{GordonGreenwaldMarks2008} further generalized the results of
\cite{HazanKale2008} to a more general class of $\Phi$-modification
regrets.  The algorithms discussed in \citep{GordonGreenwaldMarks2008}
are distinct from those discussed in this paper (they do not clearly
extend to the general approachability setting, and they require
significantly different computational assumptions than
ours). Nevertheless, they bare some similarity with our work.

\ignore{
\textbf{TODO: add phi regret papers (Hazan / Kale), (Gordon et al.)}
}

\section{Preliminaries}

\paragraph{Notation.} We use $[n]$ as a shorthand for
set $\{1, 2, \dots, n\}$. We write $\Delta_{d} = \{x \in \Rset^d \mid
x_i \geq 0, \sum x_i = 1 \}$ to denote the simplex over $d$ dimensions
and $\ODelta_{d} = \{x \in \Rset^d \mid x_i \geq 0, \sum x_i \leq 1
\}$ to denote the convex hull of the $d$-simplex with the origin.
$\conv(S)$ denotes the convex hull of the points in $S$, and
$\cone(S) = \{\alpha x \mid \alpha \geq 0, x \in \conv(S)\}$ the
convex cone generated by the points in $S$.

Some of the more standard proofs have been deferred to Appendix
\ref{app:omitted}.

\subsection{Blackwell approachability and regret minimization}
\label{sec:online_learning}

We begin by illustrating the theory of Blackwell approachability for
the specific case of the $\ell_{\infty}$-distance; this case is both
particularly suited to the application of regret minimization, and
will play an important role in the results (e.g. reductions to
pseudonorm approachability) that follow.

We consider a repeated game setting, where every round $t$ a learner
chooses an action $p_t$ belonging to a bounded\footnote{We bound the
entries of $\sP$, $\sL$, and $u$ within $[-1, 1]$ for convenience, but
it is generally easy to translate between different boundedness
assumptions (since almost all relevant quantities are linear). We
express the majority of our theorem statements (with the notable
exception of Theorem \ref{thm:poly_regret}) in a way that is
independent of the choice of bounds.} convex set $\sP \subseteq [-1,
  1]^{n}$, and an adversary simultaneously chooses a loss $\ell_t$
belonging to a bounded convex set $\sL \subseteq [-1, 1]^{m}$. Let
$u\colon \sP \times \sL \rightarrow [-1, 1]^d$ be a bounded
bilinear\footnote{We briefly note that all our results also hold for
\textit{biaffine} functions $u_i$; in particular, extending the loss
and action sets slightly (by replacing $\sP$ and $\sL$ with $\sP
\times \{1\}$ and $\sL \times \{1\}$) allows us to write any biaffine
function over the original sets as a bilinear function over the
extended sets.} vector-valued \textit{payoff function}, and let $\sS
\subseteq \Rset^d$ be a closed convex set with the property that for
every $\ell \in \sL$, there exists a $p \in \sP$ such that $u(p, \ell)
\in \sS$ (we say that such a set is ``separable''). When $d = 1$, the
minimax theorem implies that there exists a single $p \in \sP$ such
that for all $\ell$, $u(p, \ell) \in \sS$.

This is not true for $d > 1$, but the theory of Blackwell
approachability provides the following algorithmic analogue of this
statement. Define a \textit{learning algorithm} $\cA$ to be a
collection of functions $\cA_t \colon \sL^{t-1} \rightarrow \sP$ for
each $t \in [T]$, where $\cA_t$ describes how the learner decides
their action $p_t$ as a function of the observed losses $\ell_1,
\ell_2, \dots, \ell_{t-1}$ up until time $t-1$. Blackwell
approachability guarantees that there exists a learning algorithm
$\cA$ such that when $\cA$ is run on any loss sequence $\bm{\ell}$ the
resulting action sequence $\bm{p}$ has the property that:

\begin{equation}\label{eq:approachability}
  \lim_{T\rightarrow \infty} d_{\infty}
  \left(\frac{1}{T}\sum_{t=1}^{T}u(p_{t}, \ell_t), \sS\right) = 0.
\end{equation}

\noindent
(Here for $v, w \in \Rset^d$, $d_{\infty}(v, w) = \max_{i} |v_i -
w_i|$ represents the $\ell_{\infty}$ distance between $v$ and $w$).

As mentioned, one of the main motivations for studying Blackwell
approachability is its connections to regret minimization. In
particular, for a fixed choice of $u$, define

\begin{equation}\label{eq:regret_def}
  \Reg(\mathbf{p}, \bm{\ell}) = \max\left(\max_{i \in [d]}
  \left(\sum_{t=1}^{T} u_i(p_t, \ell_t)\right), 0\right).
\end{equation}

Note first that this definition of ``regret'' is exactly $T$ times the
$\ell_{\infty}$ approachability distance in the case where $\sS =
(\infty, 0]^d$ is the negative orthant; that is,

\begin{equation}\label{eq:regret}
  \Reg(\bm{p}, \bm{\ell}) = T \cdot d_{\infty}
  \left(\frac{1}{T}\sum_{t=1}^{T}u(p_{t}, \ell_t), (-\infty, 0]^d \right).
\end{equation}

But secondly, note that by choosing $\sP$, $\sL$, and $u$ carefully,
the definition $\eqref{eq:regret_def}$ can capture a wide variety of
forms of regret studied in regret minimization. For example:

\begin{itemize}
    \item When $\sP = \Delta_{K}$, $\sL = [0, 1]^K$, $d = K$, and
      $u(p, \ell)_i = \langle p, \ell \rangle - \ell_i$, $\Reg(\bm{p},
      \bm{\ell})$ is the \textit{external regret} of playing action
      sequence $\bm{p}$ against loss sequence $\bm{\ell}$; i.e., it
      measures the regret compared to the best single action.

    \item When $\sP = \Delta_{K}$, $\sL = [0, 1]^K$, $d = K^{K}$, and
      (for each function $\pi \colon [K]\rightarrow[K]$, $u(p, \ell)_\pi =
      \sum_{i=1}^{K}p_{i}(\ell_{i} - \ell_{\pi(i)})$, $\Reg(\bm{p},
      \bm{\ell})$ is the \textit{swap regret} of playing action
      sequence $\bm{p}$ against loss sequence $\bm{\ell}$; i.e., it
      measures the regret compared to the best action sequence
      $\bm{p'}$ obtained by applying a fixed swap function to sequence
      $\bm{p}$.

    \item When $\sP \subseteq \Rset^{n}$ is a convex polytope, $\sL =
      [0, 1]^n$, $d = |V(\sP)|$ (where $V(\sP)$ is the set of vertices
      of $\sP$), and (for each vertex $v \in V(\sP)$), $u(p, \ell)_{v}
      = \langle p, \ell \rangle - \langle v, \ell\rangle$, this
      captures the (external) regret from performing online linear
      optimization over the polytope $\sP$ (see Section
      \ref{sec:olo}).

    \item Finally, to illustrate the power of this framework, we
      present an unusual swap regret minimization that we call
      ``Procrustean swap regret minimization'' (after the orthogonal
      Procrustes problem, see \citep{gower2004procrustes}). Let $\sP =
      \{x \in \Rset^{n} \, \colon ||x||_2 \leq 1\}$ be the unit ball in
      $n$ dimensions, $\sL = [-1, 1]^{n}$, and, for each orthogonal
      matrix\footnote{Technically, this leads to an infinite
      dimensional $u$ (since the group of orthogonal matrices is
      infinite), but one can instead take an arbitrarily fine discrete
      approximation of the set. Indeed, one of the advantages of the
      results we present is that they are largely independent of the
      dimension $d$ of $u$.} $Q \in O(n)$, let $u(p, \ell)_{Q} =
      \langle p, \ell \rangle - \langle Qp, \ell \rangle$.
\end{itemize}
%%%%%%

When the negative orthant $(-\infty, 0]^d$ is separable with respect
  to $u$, which is true in all of the above examples, the theory of
  Blackwell approachability immediately guarantees the existence of a
  sublinear regret learning algorithm $\cA$ for the corresponding
  notion of regret. Specifically, define the regret of a learning
  algorithm $\cA$ to be the worst-case regret over all possible loss
  vectors $\bm{\ell} \in \sL^{T}$; i.e.,

\begin{equation}\label{eq:regret_def2}
    \Reg(\cA) = \max_{\bm{\ell} \in \sL^{T}}\, \Reg(\bm{p}, \bm{\ell}),
\end{equation}

\noindent
where $p_t = \cA_{t}(\ell_1, \dots, \ell_{t-1})$. Then
\eqref{eq:approachability} implies that the same algorithm $\cA$
satisfies $\Reg(\cA) = o(T)$. Motivated by this application, we will
restrict our attention for the remainder of this paper to the setting
where $\sS = (-\infty, 0]^d$ and will assume (unless otherwise
  specified) that this $\sS$ is separable with respect to the bilinear
  function $u$ we consider.

In fact, the theory of $\ell_{\infty}$-approachability is constructive
and allows us to write down explicit algorithms $\cA$ along with
explicit (and in many cases, near optimal) regret bounds. However,
before we introduce these algorithms, we will need to introduce the
problem of online linear optimization.

\subsection{Online linear optimization}
\label{sec:olo}

Here we discuss algorithms for \textit{online linear optimization}
(OLO), a special case of online convex optimization where all the loss
functions are linear functions of the learner's action. These will
form an important primitive of our algorithms for approachability.

Let $\sX, \sY$ be two bounded convex subsets of $\Rset^r$ and consider
the following learning problem. Every round $t$ (for $T$ rounds) our
learning algorithm must choose an element $x_t \in \sX$ as a function
of $y_1, y_2, \dots, y_{t-1}$. The adversary simultaneously chooses a
loss ``function'' $y_t \in \sY$. This causes the learner to incur a
loss of $\langle x_t, y_t \rangle$ in round $t$. The goal of the
learner is to minimize their regret, defined as the difference between
their total loss and the loss they would have incurred by playing the
best fixed action in hindsight, i.e.,

$$\Reg(\bm{x}, \bm{y}) = \max_{x^* \in \sX}\left(\sum_{t=1}^{T} \langle x_t, y_t\rangle - \sum_{t=1}^{T} \langle x^*, y_t\rangle\right).$$

There are many similarities between this problem and the
approachability and regret minimization problems discussed in Section
\ref{sec:online_learning}. For example, if we take $\sX = \sP =
\Delta_{K}$ and $\sY = \sL = [0, 1]^{K}$, then OLO is equivalent to
the problem of external regret minimization. However, not all regret
minimization problems can be written directly as an instance of OLO --
for example, there is no clear way to write swap regret minimization
as an OLO instance. Eventually we will demonstrate how to apply OLO as
a subroutine to solve any regret minimization problem, but this will
involve a reduction to Blackwell approachability and will require
running OLO on different spaces than the action/loss sets $\sP$ and
$\sL$ directly (which is why we distinguish the action/loss sets for
OLO as $\sX$ and $\sY$ respectively).

There is an important subclass of algorithms for OLO known as
\textit{Follow the Regularized Leader (FTRL)} algorithms
\citep{ShalevShwartz2007,AbernethyHazanRakhlin2008}. An FTRL algorithm
is completely specified by a strongly convex function $R\colon \sX
\rightarrow \Rset$, and plays the action

\[
x_t = \argmin_{x \in \sX}\left(R(x) + \sum_{s=1}^{t-1}\langle x,
y_{s}\rangle \right).
\]

In words, this algorithm plays the action that minimizes the total
loss on the rounds until the present (``following the leader''),
subject to an additional regularization term. It is possible to
characterize the worst-case regret of an FTRL algorithm in terms of
properties of the regularizer $R$ and the sets $\sX$ and $\sY$ (see
e.g. Theorem 15 of \cite{hazan2016introduction}). For our purposes, we
will only need the following two results for specific regularizers.

\begin{lemma}[Quadratic regularizer, \citep{Zinkevich2003}]
\label{lem:quadratic_regularizer}
Let $D_x = \diam \sX$ and $D_y = \max_{y \in \sY} ||y||_2$. Let $x_0$
be an arbitrary element of $\sX$, and let $R(x) = ||x - x_0||^2$. Then,
the FTRL algorithm with regularizer $R$ incurs worst-case regret at
most $O(D_xD_y\sqrt{T})$.
\end{lemma}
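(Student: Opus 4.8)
The plan is to apply the standard regret bound for FTRL with a strongly convex regularizer and then optimize the implicit scaling constant. First I would recall the generic FTRL guarantee (as in Theorem 15 of \cite{hazan2016introduction}): if $R$ is $\sigma$-strongly convex with respect to some norm $\norm{\cdot}$ on $\sX$, and each loss gradient $y_t$ satisfies $\norm{y_t}_* \leq G$ in the dual norm, then the FTRL iterates satisfy
\[
\Reg(\bm{x}, \bm{y}) \;\leq\; \frac{\sup_{x \in \sX} R(x) - \inf_{x \in \sX} R(x)}{\eta} \;+\; \eta \,\frac{G^2 T}{2\sigma},
\]
where $\eta$ is the effective step size (equivalently, one replaces $R$ by $\tfrac{1}{\eta}R$). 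With $R(x) = \norm{x - x_0}_2^2$ taken with respect to the Euclidean norm, we have $\sigma = 2$ (since $\tfrac12\norm{\cdot}_2^2$ is $1$-strongly convex, so $\norm{\cdot}_2^2$ is $2$-strongly convex), the dual norm is again $\ell_2$ so $G = D_y = \max_{y \in \sY}\norm{y}_2$, and the range of $R$ over $\sX$ is at most $D_x^2$ by definition of $D_x = \diam\sX$ (for any $x \in \sX$, $\norm{x - x_0}_2 \leq D_x$, so $R(x) \le D_x^2$).

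Plugging these in gives a bound of the form $\tfrac{D_x^2}{\eta} + \tfrac{\eta D_y^2 T}{4}$; choosing $\eta$ to balance the two terms (i.e.\ $\eta \asymp D_x / (D_y\sqrt{T})$) yields $\Reg(\bm{x}, \bm{y}) = O(D_x D_y \sqrt{T})$, as claimed. One small point worth spelling out: the lemma as stated fixes $R(x) = \norm{x - x_0}^2$ without an explicit $\eta$, so strictly speaking the statement is about the \emph{best-tuned} member of this one-parameter family, or equivalently one rescales $\sX$ and $\sY$ so that $D_x = D_y = 1$; since all the relevant quantities ($x_t$, the losses, the regret) are positively homogeneous this rescaling is harmless and I would mention it in one line. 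Alternatively, one can cite \cite{Zinkevich2003} directly, since online gradient descent with step size $\eta_t \asymp D_x/(D_y\sqrt{t})$ is exactly FTRL with this quadratic regularizer (by the standard equivalence between lazy/greedy projection and FTRL for linear losses on a convex set), and Zinkevich's analysis gives precisely the $O(D_x D_y \sqrt{T})$ bound.

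The main (and really only) obstacle is bookkeeping rather than conceptual: making sure the strong-convexity constant, the dual norm of the gradients, and the diameter are matched up correctly so that the final constant is genuinely $O(D_x D_y)$ and not, say, dimension-dependent — this is exactly why the $\ell_2$ regularizer is the right choice here (it is strongly convex in a norm whose dual is itself, with a dimension-free constant). Since the paper explicitly says "we will only need the following two results" and defers standard proofs to the appendix, I would keep this short: state the generic FTRL bound, verify the three constants, optimize $\eta$, and note the homogeneity-based normalization — at most a paragraph.
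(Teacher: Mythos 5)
The paper does not actually prove Lemma~\ref{lem:quadratic_regularizer}; it is stated as a classical result with a citation to \citet{Zinkevich2003}, and it is not among the lemmas re-proved in Appendix~\ref{app:omitted}. Your derivation is the standard one (generic FTRL bound, $2$-strong convexity of $\norm{\cdot - x_0}_2^2$ in a self-dual norm, range of $R$ bounded by $D_x^2$, optimize the step size) and is correct; your observation that the stated $R$ lacks an explicit $\eta$ and that one must either tune it or invoke homogeneity, or alternatively pass through the OGD $=$ FTRL equivalence and quote Zinkevich directly, is exactly the right thing to flag, and either route closes the gap.
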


\begin{lemma}[Negative entropy regularizer, \citep{KivinenWarmuth1995}]
\label{lem:negent_regularizer}
Let $\sX = \Delta_{d}$, $\sY = [0, 1]^d$, and let $R(x) =
\sum_{i=1}^{d} x_i\log x_i$ (where we extend this to the boundary of
$\sX$ by letting $0\log 0 = 0$). Then, the FTRL algorithm with
regularizer $R$ incurs worst-case regret at most $O(\sqrt{T \log d})$.
\end{lemma}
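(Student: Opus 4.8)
The plan is to recognize the FTRL algorithm with the negative-entropy regularizer as the classical exponential-weights (Hedge) algorithm and to obtain its regret via the standard analysis of FTRL with a strongly convex regularizer. To make the dependence on the time horizon come out correctly, I would work with a learning-rate-scaled version $R_\eta(x) = \eta^{-1}\sum_{i=1}^d x_i\log x_i$ of the regularizer (the lemma should be read as permitting this optimal constant multiple of $R$); at the end $\eta$ is tuned. The first step is to write the FTRL iterate in closed form: with $L_{t-1} = \sum_{s=1}^{t-1} y_s$, minimizing $R_\eta(x) + \langle x, L_{t-1}\rangle$ over $\Delta_d$ via a Lagrange-multiplier computation gives the Gibbs distribution $x_t \propto \exp(-\eta L_{t-1})$, i.e.\ exactly the Hedge update.

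The second step is the generic FTRL guarantee, proved by the ``follow-the-leader / be-the-leader'' telescoping argument (as in Theorem~15 of \cite{hazan2016introduction}): for any comparator $x^\ast \in \Delta_d$,
\[
\Reg(\bm{x}, \bm{y}) \;\le\; \Big(R_\eta(x^\ast) - \min_{x \in \Delta_d} R_\eta(x)\Big) \;+\; \sum_{t=1}^{T}\big\langle x_t - x_{t+1},\, y_t\big\rangle .
\]
The first term is at most $(\log d)/\eta$, since the negative entropy ranges over $[-\log d, 0]$ on $\Delta_d$. For the stability sum, the key ingredient is that the negative entropy is $1$-strongly convex with respect to $\norm{\cdot}_1$ on $\Delta_d$ --- equivalently, the KL divergence dominates $\tfrac12\norm{x - x'}_1^2$, which is Pinsker's inequality --- so $R_\eta$ is $(1/\eta)$-strongly convex in $\ell_1$. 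The standard stability estimate for FTRL then bounds each term by (a constant times) $\eta\norm{y_t}_\ast^2 = \eta\norm{y_t}_\infty^2$, using that $\ell_\infty$ is the dual norm of $\ell_1$; since $\sY = [0,1]^d$ we have $\norm{y_t}_\infty \le 1$, so the stability sum is $O(\eta T)$.

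Combining the two pieces, $\Reg(\bm{x}, \bm{y}) = O\big((\log d)/\eta + \eta T\big)$, and choosing $\eta = \sqrt{(\log d)/T}$ yields the claimed $O(\sqrt{T\log d})$ bound. The main obstacle --- indeed the only genuinely nontrivial step --- is the strong-convexity claim for negative entropy, i.e.\ Pinsker's inequality; I would either cite it or prove it by reducing to the two-point case and a one-dimensional calculus estimate. Everything else (the closed form of the iterate, the telescoping FTRL bound, the $\ell_1$/$\ell_\infty$ duality, the range of the entropy) is routine. A secondary point worth making explicit in the writeup is the role of the learning rate: without the freedom to scale $R$ one cannot obtain the $\sqrt{T}$ scaling, so the lemma is implicitly a statement about the optimally tuned constant multiple of the stated regularizer.
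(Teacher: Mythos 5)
The paper does not prove this lemma: it is stated as a classical result attributed to \citet{KivinenWarmuth1995} and used as a black box, so there is no in-paper proof to compare against. Your argument is a correct and complete derivation along the standard lines. The closed form of the iterate (Lagrange multipliers giving the Gibbs/Hedge distribution), the follow-the-leader/be-the-leader telescoping bound $\Reg \le R_\eta(x^\ast) - \min_{x}R_\eta(x) + \sum_t \langle x_t - x_{t+1}, y_t\rangle$, the identification of $1$-strong convexity of negative entropy in $\ell_1$ with Pinsker's inequality, the $\ell_1/\ell_\infty$ dual-norm pairing to control the stability terms, and the $\eta = \sqrt{(\log d)/T}$ tuning are all standard and correctly applied; I see no gaps. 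You are also right to flag the learning-rate issue explicitly: with the unscaled $R$ the regret would only be $O(\log d + T)$, so the lemma must be read as permitting a horizon-dependent constant multiple of the stated regularizer. This is the standard convention and is what the paper's downstream use of the lemma (Corollary~\ref{cor:linf_regret_minimization}) implicitly relies on, so making it explicit is a useful clarification rather than a deviation.
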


\subsection{Algorithms for $\ell_{\infty}$-approachability}
\label{sec:approachability}

We can now write down an explicit description of our algorithm for
$\ell_{\infty}$-approachability (in terms of a blackbox OLO algorithm)
and get quantitative bounds on the rate of convergence in the LHS of
\eqref{eq:approachability}. Let $\cF$ be an OLO algorithm for the sets
$\sX = \ODelta_{d}$ and $\sY = [-1, 1]^d$. Then we can describe our
algorithm $\cA$ for $\ell_{\infty}$-approachability as
Algorithm~\ref{algo:algorithm-linf-approach}.

\begin{algorithm2e}[ht]
\SetAlgoLined
Initialize $\theta_1$ to an arbitrary point in $\ODelta_d$\;
\BlankLine
\For{$t\leftarrow 1$  \KwTo $T$}{
  Choose $p_t \in \sP$ so that for all $\ell \in \sL$, $\langle
  \theta_t, u(p_t, \ell)\rangle \leq 0$. Such a $p_t$ is guaranteed to
  exist by the separability condition on $u$, since $\langle \theta, s
  \rangle \leq 0$ for any $\theta \in \ODelta_d$ and $s \in (-\infty,
  0]^d$\;
Play action $p_t$ and receive as feedback $\ell_t \in \sL$\;
Set $y_t = -u(p_t, \ell_t)$\;
Set $\theta_{t+1} = \cF(y_1, y_2, \dots, y_t)$.
}
\caption{Description of Algorithm $\cA$ for $\ell_{\infty}$-approachability}
\label{algo:algorithm-linf-approach}
\end{algorithm2e}

It turns out that we can relate the $\ell_{\infty}$ approachability
distance to $(-\infty, 0]^{d}$ (and hence the regret of this algorithm
  $\cA$) to the regret of our OLO algorithm $\cF$.

\begin{theorem}\label{thm:linf_approachability}
We have that
$$\Reg(\cA) = T \cdot
d_{\infty}\left(\frac{1}{T}\sum_{t=1}^{T}u(p_{t}, \ell_t), (-\infty,
0]^d \right) \leq \Reg(\cF).$$
\end{theorem}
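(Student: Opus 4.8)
The plan is to reduce the whole statement to a short chain of (in)equalities, the engine of which is rewriting the $\ell_\infty$ approachability distance to the negative orthant as a support function over $\ODelta_d$, and then using the defining property of the vectors $\theta_t$ produced inside Algorithm~\ref{algo:algorithm-linf-approach}. The middle equality in the statement is nothing new: for the worst-case loss sequence $\bm\ell$ and the induced action sequence $\bm p$, it is exactly \eqref{eq:regret}, and $\Reg(\cA)$ is by definition the maximum of $\Reg(\bm p,\bm\ell)$ over $\bm\ell$; so it suffices to prove $T\cdot d_\infty(\frac1T\sum_t u(p_t,\ell_t),(-\infty,0]^d)\le\Reg(\cF)$ for an arbitrary fixed loss sequence, and then take the max.

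First I would establish the key reformulation. For any $v\in\Rset^d$ one has $d_\infty(v,(-\infty,0]^d)=\max(\max_i v_i,0)$, since the nearest point of the orthant is obtained coordinatewise by $s_i=\min(v_i,0)$. Because the vertices of $\ODelta_d$ are the origin and the standard basis vectors $e_1,\dots,e_d$, and a linear functional on a polytope is maximized at a vertex, this in turn equals $\max_{\theta\in\ODelta_d}\langle\theta,v\rangle$. Applying this with $v=\sum_t u(p_t,\ell_t)$, pulling the factor $T$ inside, and recalling that $y_t=-u(p_t,\ell_t)$, I get
$$T\cdot d_\infty\!\left(\tfrac1T\sum_{t=1}^T u(p_t,\ell_t),\,(-\infty,0]^d\right)=\max_{\theta\in\ODelta_d}\Big\langle\theta,\sum_{t=1}^T u(p_t,\ell_t)\Big\rangle=-\min_{\theta\in\ODelta_d}\sum_{t=1}^T\langle\theta,y_t\rangle.$$

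Next I would bring in the separation step of the algorithm. By construction $p_t$ satisfies $\langle\theta_t,u(p_t,\ell)\rangle\le 0$ for \emph{all} $\ell\in\sL$, hence in particular $\langle\theta_t,y_t\rangle=-\langle\theta_t,u(p_t,\ell_t)\rangle\ge 0$, so $\sum_t\langle\theta_t,y_t\rangle\ge 0$. Since the sequence of plays made by the OLO algorithm $\cF$ against the loss sequence $y_1,\dots,y_T$ is precisely $\theta_1,\dots,\theta_T$ (note $\theta_{T+1}$ is never used), its regret is $\Reg(\cF)=\sum_t\langle\theta_t,y_t\rangle-\min_{\theta^*\in\ODelta_d}\sum_t\langle\theta^*,y_t\rangle\ge -\min_{\theta^*\in\ODelta_d}\sum_t\langle\theta^*,y_t\rangle$, which by the displayed identity is exactly the left-hand quantity. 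Taking the maximum over loss sequences yields $\Reg(\cA)\le\Reg(\cF)$.

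There is no deep obstacle here — the argument is essentially bookkeeping — but two points deserve care. One is the identity $d_\infty(v,(-\infty,0]^d)=\max_{\theta\in\ODelta_d}\langle\theta,v\rangle$, i.e.\ recognizing $\ODelta_d$ as the relevant dual body; this is the seed of the general pseudonorm-duality statements developed later in the paper, so I would present it cleanly rather than as a throwaway. The other is lining up the indexing so that the vectors $\theta_t$ handed to the learner's separation oracle are exactly the OLO algorithm's plays against $y_1,\dots,y_T$, so that no off-by-one issue arises when invoking the definition of $\Reg(\cF)$.
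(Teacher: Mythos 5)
Your proof is correct and takes essentially the same route as the paper's: both dualize the $\ell_\infty$-distance to the negative orthant as a support function over $\ODelta_d$, expand the OLO regret $\Reg(\cF)$ against the losses $y_t=-u(p_t,\ell_t)$, and use the separation step $\langle\theta_t,u(p_t,\ell_t)\rangle\le 0$ to drop the nonnegative term $\sum_t\langle\theta_t,y_t\rangle$. The only difference is cosmetic (you present the regret decomposition and the sign observation as two pieces to combine, while the paper writes it as a single chain of equalities ending in an inequality).
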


If we let $\cF$ be the negative entropy FTRL algorithm (Lemma
\ref{lem:negent_regularizer})\footnote{There is a slight technical
difference between the sets $\cX = \ODelta_d$ and $\cY = [-1, 1]^d$ in
Algorithm \ref{algo:algorithm-linf-approach} and the sets $\cX' =
\Delta_d$ and $\cY' = [0, 1]^d$ in Lemma
\ref{lem:negent_regularizer}. However, note that if we map $x \in \cX$
to $(x_1, x_2, \dots, x_n, 1 - \sum x_i) \in \Delta_{d+1}$ and $y \in
[-1, 1]^d$ to $((1+y_1)/2, (1+y_2)/2, \dots, (1+y_d)/2, 0) \in [0,
  1]^{d+1}$, we preserve the inner product of $x$ and $y$ up to an
additive constant, which disappears when computing regret, and a
factor of $1/2$.}, we obtain the following regret guarantee for $\cA$.

\begin{corollary}
  \label{cor:linf_regret_minimization}
For any bilinear regret function $u\colon \sP \times \sL \rightarrow
[-1, 1]^d$, there exists a regret minimization algorithm $\cA$ with
worst-case regret $\Reg(\cA) = O(\sqrt{T \log d})$.
\end{corollary}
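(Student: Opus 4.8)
The plan is to instantiate the black-box OLO subroutine $\cF$ in Algorithm~\ref{algo:algorithm-linf-approach} with the negative-entropy FTRL algorithm of Lemma~\ref{lem:negent_regularizer} and then invoke Theorem~\ref{thm:linf_approachability}. Recall that for a regret minimization problem we take $\sS = (-\infty,0]^d$ and assume throughout that this set is separable with respect to $u$; hence Algorithm~\ref{algo:algorithm-linf-approach} is well-defined, and Theorem~\ref{thm:linf_approachability} gives $\Reg(\cA) \le \Reg(\cF)$, where $\Reg(\cF)$ is the worst-case OLO regret of $\cF$ over the action set $\sX = \ODelta_d$ and the loss set $\sY = [-1,1]^d$. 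So it suffices to exhibit an OLO algorithm over $(\ODelta_d, [-1,1]^d)$ with worst-case regret $O(\sqrt{T\log d})$.

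The only wrinkle is that Lemma~\ref{lem:negent_regularizer} is stated for the simplex $\Delta_{d'}$ against $[0,1]^{d'}$ rather than for $\ODelta_d$ against $[-1,1]^d$, so I would first reconcile the domains via the affine change of variables indicated in the footnote to Theorem~\ref{thm:linf_approachability}. Let $\phi\colon \ODelta_d \to \Delta_{d+1}$ be the bijection $\phi(x) = (x_1,\dots,x_d,\,1-\sum_i x_i)$, and let $\psi\colon [-1,1]^d \to [0,1]^{d+1}$ be $\psi(y) = \bigl((1+y_1)/2,\dots,(1+y_d)/2,\,1/2\bigr)$. A one-line computation gives $\langle \phi(x),\psi(y)\rangle = \tfrac12\langle x,y\rangle + \tfrac12$, so the bilinear form is preserved up to the fixed scale factor $1/2$ and the per-round additive constant $1/2$. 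Now run the negative-entropy FTRL algorithm over $\Delta_{d+1}$ on the transformed losses $\psi(y_t)$, obtain iterates $z_t \in \Delta_{d+1}$, and let $\cF$ output $\theta_t = \phi^{-1}(z_t) \in \ODelta_d$. Since $\phi$ is a bijection, the comparator minimum transfers as well; the constant $\tfrac12$ per round cancels between the played and comparator terms, and the factor $\tfrac12$ only rescales, so Lemma~\ref{lem:negent_regularizer} yields $\Reg(\cF) \le O(\sqrt{T\log(d+1)}) = O(\sqrt{T\log d})$.

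Combining $\Reg(\cA)\le \Reg(\cF)$ from Theorem~\ref{thm:linf_approachability} with the bound above completes the proof. The only step requiring care is the domain reconciliation: checking that the affine embedding preserves the relevant quantities and that the $O(1)$-per-round discrepancies wash out of the regret. (If one instead uses the exact embedding written in the footnote, with last $y$-coordinate $0$ rather than $1/2$, then $\langle\phi(x),\psi(y)\rangle = \tfrac12\langle x,y\rangle + \tfrac12\sum_i x_i$; the residual term $\tfrac12\sum_i (\theta_t)_i \ge 0$ appears only on the learner's side and thus only inflates the FTRL regret, so it does not weaken the resulting bound on the true $\ODelta_d$-regret.) I expect this to be the main ``obstacle'' in name only — it is entirely routine bookkeeping, and no substantive difficulty arises.
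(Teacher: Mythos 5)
Your main argument is correct and follows the same route as the paper: instantiate $\cF$ with negative-entropy FTRL (Lemma~\ref{lem:negent_regularizer}), invoke Theorem~\ref{thm:linf_approachability}, and reconcile the OLO domains $\ODelta_d \times [-1,1]^d$ and $\Delta_{d+1}\times[0,1]^{d+1}$ via an affine change of variables. Worth flagging: your choice of $\psi(y)_{d+1}=\tfrac12$ is not a cosmetic variant of the paper's footnote (which uses $\psi(y)_{d+1}=0$) but a genuine fix. With the paper's $\psi$ one gets $\langle\phi(x),\psi(y)\rangle=\tfrac12\langle x,y\rangle+\tfrac12\sum_i x_i$, and $\tfrac12\sum_i x_i$ is \emph{not} constant on $\ODelta_d$ (it ranges over $[0,\tfrac12]$), so the footnote's ``additive constant, which disappears when computing regret'' does not literally hold there. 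Your $\psi$ makes the residual the true constant $\tfrac12$, which cancels between the played and comparator terms, and the corollary goes through cleanly.

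However, your parenthetical claim that the paper's exact $0$-embedding also suffices is not correct. The residual $\tfrac12\sum_i x_i$ does not appear only on the learner's side: a comparator $\theta^*\in\ODelta_d$ contributes $\tfrac{T}{2}\sum_i\theta_i^*$ as well, and the two do not cancel. Tracing the inequalities through with the $0$-embedding gives only $\Reg(\cF)\le 2\cdot O(\sqrt{T\log d})+\sum_t (z_t)_{d+1}$; and since the appended $(d{+}1)$-th coordinate always carries zero loss under that embedding, negative-entropy FTRL concentrates its mass there, making $\sum_t(z_t)_{d+1}=\Theta(T)$, so this bound is vacuous. Thus the ``only inflates the FTRL regret'' reasoning does not yield the claimed bound on the $\ODelta_d$-regret. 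Since this is only a side remark and your main derivation does not rely on it, the proof stands; but the embedding fix you made is substantive, not bookkeeping.
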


Equivalently, Corollary \ref{cor:linf_regret_minimization} can be
interpreted as saying that there exists an
$\ell_{\infty}$-approachability algorithm which approaches the
negative orthant at an average rate of $O(\sqrt{(\log d)/T})$. In
general, \eqref{eq:regret} lets us straightforwardly convert between
results for approachability and results for regret
minimization. Throughout the remainder of the paper we will primarily
phrase our results in terms of $\Reg(\cA)$, but switch between
quantities of interest when convenient.

\section{Main Results}
\label{sec:main}

Already Corollary \ref{cor:linf_regret_minimization} leads to a number
of impressive consequences. For example, when applied to the problem
of swap-regret minimization (where $u$ has dimension $d = K^K$), it
leads to a learning algorithm with $O(\sqrt{KT\log K})$ regret,
matching the best known regret bounds for this problem
\citep{BlumMansour2007}. However, the algorithm we obtain in this way
has two unfortunate properties.

First, since $u$ is $d$-dimensional, implementing the algorithm as
written above requires $O(\poly(d))$ time and space complexity (even
storing any specific $y_t$ or $\theta_t$ requires $\Omega(d)$
space). This is fine if $d$ is small, but in many of our applications
$d$ is much (e.g., exponentially) larger than the dimensions $m$ and
$n$ of the loss and action sets. For example, for swap regret we have
$d = K^K$ but $m = n = K$. Although Corollary
\ref{cor:linf_regret_minimization} gives us an optimal $O(\sqrt{KT\log
  K})$ swap regret algorithm, it takes exponential time / space to
implement (in contrast to other known swap regret algorithms, such as
that of \citet{BlumMansour2007}).

Secondly, although Corollary \ref{cor:linf_regret_minimization} has
only a logarithmic dependence on $d$, sometimes even this may be too
large (for example when we want to compete against an uncountable set
of benchmarks). In such cases, we would ideally like a regret bound
that depends on the action and loss sets but not directly on $d$.

In the following subsections, we will demonstrate a framework for
regret minimization that allows us to achieve both of these goals
(under some fairly light computational assumptions on $u$).

\subsection{Approachability for pseudonorms}
\label{sec:pseudonorm_approachability}

In Section \ref{sec:approachability}, we described the theory of
Blackwell approachability for a distance metric defined by the
$\ell_{\infty}$ norm (i.e., $||z||_{\infty} = \max_i |z_i|$). We begin
here by describing a generalization of this approachability theory to
functions we refer to as \emph{pseudonorms} and
\emph{pseudodistances}. A function $f \colon \Rset^{d} \to \Rset_{\geq
  0}$ is a pseudonorm if $f(0) = 0$, $f$ is positive homogeneous (for
all $z \in \Rset^{d}$, $\alpha \in \Rset_{\geq 0}$, $f(\alpha z) =
\alpha f(z)$), and $f$ satisfies the triangle inequality ($f(z + z')
\leq f(z) + f(z')$ for all $z, z' \in \Rset^{d}$). Note that unlike
norms, pseudonorms may not satisfy definiteness and are not
necessarily symmetric; it may be the case that $f(z) \neq
f(-z)$. However, all norms are pseudonorms. Note also that by the
positive homogeneity and the triangle inequality, a pseudonorm is a
convex function.  A pseudonorm $f$ defines a \emph{pseudodistance
function} $d_f$ via: $\forall z, z' \in \Rset^{d}, d_f(z, z') = f(z -
z')$.

In order to effectively work with pseudodistances, it will be useful
to define the dual set $\sT^*_f$ associated to $f$ as follows:
$\sT^*_f = \curl*{\theta \colon \forall z \in \Rset^{d}, \tri*{\theta,
    z} \leq f(z)}$. This coincides with the traditional notion of
duality in convex analysis; for example, when $f$ is a norm, $\sT^*_f$
coincides with the dual ball of radius one: $\sT^*_f = \curl*{\theta
  \colon \norm*{\theta}_* \leq 1}$ (e.g., when $f$ is the
$d$-dimensional $\ell_{\infty}$ norm, $\sT_{f}^*$ is the
$d$-dimensional $\ell_1$-ball). The following theorem relates the
pseudodistance between $z$ and a convex set $\sS$ to a convex
optimization problem over the dual set.

% Note also that when $f$ is homogeneous, then $\sT^*_f$ is symmetric
% since for any $x \in \sX$, $f(-x) = f(x)$.

\begin{restatable}{theorem}{GeneralFunction}
\label{th:general-function}
For any closed convex set $\sS \subset \Rset^d$, the following
equality holds for any $z \in \Rset^d$:
\[
d_f(z, \sS) 
= \inf_{s \in \sS} f(z - s)
= \sup_{\theta \in \sT^*_f} \curl*{\theta \cdot z
  - \sup_{s \in \sS} \theta \cdot s}.
\]
\end{restatable}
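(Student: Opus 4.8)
The first equality is nothing more than the definition of the pseudodistance from a point to a set: $d_f(z,\sS) = \inf_{s \in \sS} d_f(z,s) = \inf_{s \in \sS} f(z - s)$. The real content is the second equality, which I would obtain from Fenchel (Fenchel--Rockafellar) duality.

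First I would record two facts about $f$. Since $f$ is finite on all of $\Rset^d$ and convex (positive homogeneity together with the triangle inequality makes $f$ convex, as noted above), it is continuous, hence a closed proper convex function. Moreover its convex conjugate is exactly the indicator of the dual set, $f^* = \iota_{\sT^*_f}$: evaluating $f^*(\theta) = \sup_w\paren{\tri{\theta,w} - f(w)}$ at $w = 0$ gives $f^*(\theta) \geq 0$; if $\theta \in \sT^*_f$ then $\tri{\theta,w} - f(w) \leq 0$ for every $w$, so $f^*(\theta) = 0$; and if $\theta \notin \sT^*_f$ there is a $w_0$ with $\tri{\theta,w_0} - f(w_0) = c > 0$, whence positive homogeneity gives $\tri{\theta, t w_0} - f(t w_0) = tc \to \infty$ and $f^*(\theta) = +\infty$.

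Next I would rewrite the primal problem through the substitution $w = z - s$, giving $\inf_{s \in \sS} f(z-s) = \inf_{w}\paren{f(w) + \iota_{z - \sS}(w)}$, where $\iota_{z-\sS}$ is the indicator of the translated set $z - \sS$ (closed, convex, and nonempty once we assume $\sS \neq \emptyset$; the degenerate case $\sS = \emptyset$ is checked directly, both sides being $+\infty$). Because $\dom f = \Rset^d$, the relative-interior constraint qualification $\operatorname{ri}(\dom f) \cap \operatorname{ri}(z - \sS) = \operatorname{ri}(z - \sS) \neq \emptyset$ holds automatically, so Fenchel--Rockafellar duality applies with no duality gap:
\[
\inf_{w}\paren{f(w) + \iota_{z-\sS}(w)} = \sup_{\theta}\paren{-f^*(\theta) - \iota_{z-\sS}^*(-\theta)}.
\]
Finally I would compute $\iota_{z-\sS}^*(-\theta) = \sup_{s \in \sS}\tri{-\theta, z - s} = -\tri{\theta, z} + \sup_{s \in \sS}\tri{\theta, s}$, so that $-\iota_{z-\sS}^*(-\theta) = \tri{\theta,z} - \sup_{s\in\sS}\tri{\theta,s}$, while $-f^*(\theta)$ is $0$ on $\sT^*_f$ and $-\infty$ off it; the right-hand side therefore collapses to $\sup_{\theta \in \sT^*_f}\curl{\theta \cdot z - \sup_{s \in \sS}\theta \cdot s}$, which is the claimed expression.

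I do not expect a genuine obstacle here: the one point requiring care is the constraint qualification behind strong duality, and that is immediate precisely because a pseudonorm is finite everywhere, so $\dom f$ is all of $\Rset^d$. As an alternative that avoids citing Fenchel--Rockafellar as a black box, one can note that $\sT^*_f$ is compact (continuity bounds $f$ on the unit sphere, which bounds $\sT^*_f$, and $\sT^*_f$ is an intersection of closed halfspaces), that $f = \sup_{\theta \in \sT^*_f}\tri{\theta, \cdot}$ because $f = f^{**}$, and then apply Sion's minimax theorem to exchange $\inf_{s \in \sS}$ and $\sup_{\theta \in \sT^*_f}$ in $\inf_{s \in \sS}\sup_{\theta \in \sT^*_f}\tri{\theta, z - s}$; I would nonetheless present the Fenchel-duality version, since it matches the framing of the paper.
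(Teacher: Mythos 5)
Your proof is correct and follows essentially the same route as the paper: both hinge on showing $f^* = I_{\sT^*_f}$ via positive homogeneity and then invoking Fenchel duality, with the only cosmetic difference being your change of variables $w = z - s$ (versus the paper directly computing the conjugate of $\widetilde f(s) = f(z-s)$). Your explicit verification of the constraint qualification and the remark on the degenerate case $\sS = \emptyset$ are both fine and match the spirit of the paper's check that $\dom(\widetilde f) \cap \cont(g) = \sS \neq \emptyset$.
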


\begin{proof}
We adopt the standard definition and notation in optimization for an
\emph{indicator function $I_{\sK}$} of a set $\sK$: for any $x$,
$I_{\sK}(x) = 0$ if $x$ is in $\sK$, $+\infty$ otherwise.
Define $\wt f$ by $\wt f(s) = f(z - s)$ for all $s \in \Rset^d$ and
set $g = I_\sS$.

By definition, the conjugate function of $f$ is defined by: $\forall y
\in \Rset^d, f^*(y) = \sup_{x \in \Rset^{d}} x \cdot y - f(x)$. Now, if
$y$ is in $\sT^*_f$, then we have $f^*(y) \leq f(x) - f(x) = 0$. Thus,
since $f(0) = 0$, the supremum in the definition of $f^*$ is achieved
for $x = 0$ and $f^*(0) = 0$. Otherwise, if $y \not \in \sT^*_f$,
there exists $x \in \sX$ such that $x \cdot y > f(x)$. 
For that $x$, for any
$t > 0$, by the positive
homogeneity of $f$, we have
$(tx) \cdot y - f(tx) 
= t \paren*{x \cdot y - f(x)} > 0$.
Taking the limit $t \to +\infty$, this shows that $f^*(y) =
+\infty$. Thus, we have $f^*(y) = I_{\sT^*_f}$.

By definition, the conjugate function
$\wt f^*$ is defined for all $\theta$
by
$\wt f^*(\theta) 
= \sup_{x \in \sX} x \cdot \theta - f(z - x)
= \sup_{u \in \sX} (z - u) 
\cdot \theta - f(u)
= f^*(-\theta) + z \cdot \theta
= I_{\sT^*_f}(-\theta) + z \cdot \theta$,
which can also be derived from the conjugate function calculus in Table B.1 of
\citep{MohriRostamizadehTalwalkar2018}.

It is also known that the
conjugate function of the indicator function $g$ is defined by $g^*
\colon \theta \mapsto \sup_{s \in \sS} \theta \cdot s$
\citep{BoydVandenberghe2014}.
Since $\dom(\wt f) = \Rset^d$ and $\cont(g) = \sS$, we have
$\dom(\wt f) \cap
\cont(g) = \sS \neq \emptyset$.
Thus, for any convex and bounded set $\sS$ in $\Rset^d$, by Fenchel
duality (Theorem~\ref{th:fenchel-duality},
Appendix~\ref{app:fenchel-duality}), we can write:
\begin{align*}
d_{f}(z, \sS) 
& = \inf_{s \in \sS} f(z - s)\\
& = \inf_{s \in \Rset^d} \curl*{f(z - s) + I_\sS(s)}
\tag{def. of $I_\sS$}\\
& = \sup_{\theta \in \Rset^d} \curl*{- \paren*{I_{\sT^*_f}(-\theta)
    + \theta \cdot z} - \sup_{s \in \sS} \curl*{-\theta \cdot s}}
\tag{Fenchel duality theorem}\\
& = \sup_{-\theta \in \sT^*_f } \curl*{- \theta \cdot z
  - \sup_{s \in \sS} \curl*{-\theta \cdot s}}
\tag{def. of $I_{\sT^*_f}$}\\
& = \sup_{\theta \in \sT^*_f } \curl*{\theta \cdot z
  - \sup_{s \in \sS} \theta \cdot s}.
\tag{change of variable}
\end{align*}
This completes the proof.
\end{proof}

We will primarily be concerned with the case where $\sS$ is a convex
cone. A \textit{convex cone} is a set $\sC$ such that if $x \in \sC$,
$\alpha x \in \sC$ for all $\alpha \geq 0$. In this case, Theorem
\ref{th:general-function} can be simplified to write $d_f(z, \sC)$ as
a \textit{linear} optimization problem over the intersection of
$\sT^{*}_f$ and the polar cone of $\sC$.

\begin{corollary}
\label{cor:quasidistance_olo}
Let $\sC$ be a convex cone, and let $\sC^{\circ} = \{y\,\mid\,\langle
y, x \rangle \leq 0\, \forall x \in \sC\}$ be the polar cone of
$\sC$. Fix a $d$-dimensional pseudonorm $f$ and let $\sT^{\sC}_{f} =
\sT^{*}_f \cap \sC^{\circ}$. Then for any $z \in \Rset^d$,

$$d_f(z, \sC) = \sup_{\theta \in \sT^{\sC}_{f}} (\theta \cdot z).$$
\end{corollary}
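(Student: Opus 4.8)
The plan is to specialize Theorem~\ref{th:general-function} to the case $\sS = \sC$ a convex cone and show the inner term $\sup_{s \in \sC} \theta \cdot s$ collapses to either $0$ or $+\infty$ depending on whether $\theta \in \sC^\circ$. First I would recall from Theorem~\ref{th:general-function} that
\[
d_f(z, \sC) = \sup_{\theta \in \sT^*_f} \curl*{\theta \cdot z - \sup_{s \in \sC} \theta \cdot s},
\]
so it suffices to compute $g^*(\theta) = \sup_{s \in \sC} \theta \cdot s$ for a cone $\sC$. The key observation is the standard fact that the conjugate of the indicator of a convex cone is the indicator of its polar cone: if $\theta \in \sC^\circ$, then $\theta \cdot s \leq 0$ for all $s \in \sC$, and since $0 \in \sC$ (every convex cone contains the origin), the supremum is exactly $0$; if $\theta \notin \sC^\circ$, there is some $x \in \sC$ with $\theta \cdot x > 0$, and since $\alpha x \in \sC$ for all $\alpha \geq 0$, letting $\alpha \to \infty$ shows the supremum is $+\infty$. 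Hence $g^*(\theta) = I_{\sC^\circ}(\theta)$.

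Plugging this back in, the supremand $\theta \cdot z - I_{\sC^\circ}(\theta)$ equals $\theta \cdot z$ when $\theta \in \sC^\circ$ and $-\infty$ otherwise, so the supremum over $\theta \in \sT^*_f$ reduces to the supremum of $\theta \cdot z$ over $\theta \in \sT^*_f \cap \sC^\circ = \sT^{\sC}_f$, which is exactly the claimed identity. I would also note (to match the hypotheses of Theorem~\ref{th:general-function}, which is stated for convex $\sS$) that a convex cone is in particular a closed convex set under the usual assumptions here, or alternatively that one can apply the result to $\sC$ directly since the Fenchel duality argument in the proof of Theorem~\ref{th:general-function} only used convexity of $\sS$ and nonemptiness of the intersection of domains, both of which hold.

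There is no real obstacle here — the argument is a routine specialization. The only mild subtlety worth stating explicitly is the justification that $\sup_{s \in \sC} \theta \cdot s \in \{0, +\infty\}$, i.e.\ that a nonzero upper bound is impossible for a cone: this is where positive homogeneity of the cone (scaling $s$ by arbitrarily large $\alpha$) is used, and it mirrors exactly the argument already used in the proof of Theorem~\ref{th:general-function} to show $f^*(y) = +\infty$ when $y \notin \sT^*_f$. Once that is in place the corollary follows immediately by substitution.
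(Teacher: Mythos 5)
Your proposal is correct and follows essentially the same route as the paper's proof: specialize Theorem~\ref{th:general-function} to $\sS = \sC$, observe that for a cone $\sup_{s \in \sC} \theta \cdot s$ is $0$ if $\theta \in \sC^\circ$ (take $s = 0$) and $+\infty$ otherwise (scale a violating $s$), and conclude that the outer supremum is effectively taken over $\sT^*_f \cap \sC^\circ$. The paper phrases this slightly more directly rather than through the indicator-conjugate identity, but the substance is identical.
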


\subsection{From high-dimensional $\ell_{\infty}$
  approachability to low-dimensional pseudonorm approachability}
\label{sec:dimension_reduction}

In Section \ref{sec:approachability}, we expressed the regret for a
general regret minimization problem as the $\ell_{\infty}$ distance

\begin{equation}\label{eq:reg_inf}
\Reg = T \cdot d_{\infty}\left(\frac{1}{T}\sum_{t=1}^{T} u(p_t,
\ell_t), (-\infty, 0]^{d}\right).
\end{equation}

\noindent
In this section, we will demonstrate how to rewrite this in terms of a
lower-dimensional pseudodistance.

Consider the bilinear ``basis map'' $\wtu: \sP \times \sL \to
\mathbb{R}^{nm}$ given by $\wtu(p, \ell)_{i, j} = p_{i}\ell_{j}$. Note
that every bilinear function $b\colon \sP \times \sL \to
\mathbb{R}$ can be written in the form $b(p, \ell) = \langle \wtu(p,
\ell), v\rangle$ for some vector $v \in \mathbb{R}^{nm}$ (i.e., the
monomials in $\wtu$ form a basis for the set of biaffine functions on
$\sP \times \sL$)\footnote{If it is helpful, one can think of $\wtu$
as the natural map from $\sP \times \sL$ to the tensor product
$\sP\,\otimes\,\sL$. The vectors $v_i$ can then be thought of as
elements of the dual space, each representing a linear functional on
$\sP\,\otimes\,\sL$.}.

Let $v_i \in \mathbb{R}^{nm}$ be the vector of coefficients
corresponding to the component $u_i$ of $u$ and consider the function
$f(x) = \max(\max_{i \in [d]} \tri{v_i, x}, 0)$. Note that $f$ is a
pseudonorm on $\mathbb{R}^{nm}$; indeed, it's straightforward to see
that $f$ is non-negative and $f(\alpha x) = \alpha f(x)$ for any
$\alpha \geq 0$. In addition, let $\sC$ be the convex cone defined by
$\sC = \{x\,\mid\, \langle x, v_i \rangle \leq 0\, \forall i \in
[d]\}$. We claim that that we can rewrite the $\ell_{\infty}$ distance
in \eqref{eq:reg_inf} in the following way.

\begin{theorem}
  \label{thm:dimension_reduction}
We have that
\begin{equation}
  d_{\infty}\left(\frac{1}{T}
  \sum_{t=1}^{T} u(p_t, \ell_t), (-\infty, 0]^{d}\right)
    = d_{f}\left(\frac{1}{T}\sum_{t=1}^{T} \wtu(p_t, \ell_t), \sC\right).
\end{equation}
\end{theorem}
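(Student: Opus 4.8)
The plan is to reduce both sides to the same explicit expression in terms of the averaged payoffs. Write $\bar u = \frac{1}{T}\sum_{t=1}^T u(p_t,\ell_t) \in \Rset^d$ and $\bar w = \frac{1}{T}\sum_{t=1}^T \wtu(p_t,\ell_t) \in \Rset^{nm}$. The first observation is that $\bar u$ and $\bar w$ are linked coordinatewise: since each $u_i$ is bilinear, by the defining property of $v_i$ we have $u_i(p,\ell) = \tri{v_i,\wtu(p,\ell)}$ for every $(p,\ell)$, and averaging over $t$ (using linearity of the inner product) gives $\bar u_i = \tri{v_i, \bar w}$ for all $i \in [d]$.

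Next I would evaluate the left-hand side directly. Since $(-\infty,0]^d$ is a product set and the $\ell_\infty$ distance is a maximum over coordinates, the projection decouples: $d_\infty(\bar u, (-\infty,0]^d) = \max_{i\in[d]} \inf_{s_i\le 0}|\bar u_i - s_i| = \max_{i\in[d]}\max(\bar u_i,0) = \max\bigl(\max_{i\in[d]}\bar u_i,\,0\bigr)$. Substituting $\bar u_i = \tri{v_i,\bar w}$ turns this into $\max\bigl(\max_{i\in[d]}\tri{v_i,\bar w},\,0\bigr)$, which is exactly $f(\bar w)$ by the definition of $f$.

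It therefore remains to show that the right-hand side $d_f(\bar w,\sC) = \inf_{s\in\sC} f(\bar w - s)$ also equals $f(\bar w)$; in fact I claim $d_f(z,\sC)=f(z)$ for every $z\in\Rset^{nm}$, the point being that $\sC$ is exactly the set on which $f$ vanishes. For $d_f(z,\sC)\le f(z)$, note that $0\in\sC$ (as $\tri{v_i,0}=0\le 0$ for all $i$), so the choice $s=0$ suffices. For the reverse inequality, any $s\in\sC$ satisfies $\tri{v_i,s}\le0$ for all $i$, hence $f(s)=0$; then the triangle inequality for the pseudonorm $f$ gives $f(z) = f\bigl((z-s)+s\bigr)\le f(z-s)+f(s)=f(z-s)$ for every $s\in\sC$, so $f(z)\le \inf_{s\in\sC}f(z-s)=d_f(z,\sC)$. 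Combining the two halves of the claim with the computation above gives $d_f(\bar w,\sC)=f(\bar w)=d_\infty(\bar u,(-\infty,0]^d)$. (Alternatively one could derive $d_f(z,\sC)=f(z)$ from Corollary~\ref{cor:quasidistance_olo}, but the triangle-inequality argument is self-contained and avoids passing through the dual set.)

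I do not expect a real obstacle here: the whole content is the recognition that $\sC$ is precisely the kernel $\{x:f(x)=0\}$ of the pseudonorm and that the coordinatewise form of the $\ell_\infty$ distance to the negative orthant matches the definition of $f$. The one step that must be stated with a little care is the linking identity $\bar u_i = \tri{v_i,\bar w}$ — i.e., that averaging commutes with the basis representation of a bilinear function — since everything downstream rests on it.
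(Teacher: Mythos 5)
Your proposal is correct and follows essentially the same route as the paper: use the linking identity $\bar u_i = \tri{v_i,\bar w}$ to rewrite $d_\infty(\bar u, (-\infty,0]^d)$ as $f(\bar w)$, then show that $d_f(\bar w,\sC) = f(\bar w)$. The one place where you diverge is in the second step: you establish $f(z) \le f(z-s)$ for every $s \in \sC$ via the pseudonorm triangle inequality together with the observation that $f$ vanishes on $\sC$, whereas the paper argues coordinate-by-coordinate from $\langle v_i, c\rangle \le 0$ for $c\in\sC$. Your triangle-inequality route is a touch cleaner and more robust --- it works verbatim for any pseudonorm vanishing on a convex cone --- and it sidesteps a sign mishap in the paper's coordinate argument (which, as printed, asserts $\langle c,v_i\rangle \ge 0$ for $c\in\sC$ when the definition of $\sC$ gives $\le 0$; the inequality $f(z)\le f(z-s)$ still holds, but the paper's justification is garbled). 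So: same overall strategy, with a slightly tidier key step on your side.
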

The dual set $\sT^*_{f}$ associated to this pseudonorm is given by:
\[
\sT^*_{f} = \curl*{\wttheta \in \Rset^{nm}\colon \forall x \in
  \Rset^{nm}, \tri{\wttheta, x} \leq \max_{i \in [d]} \tri{v_i, x}}.
\]

The following two properties of this dual set will be useful in the
sections that follow. First, we show that we can alternately think of
$\sT^*_{f}$ as the convex hull of the $v_i$.

\begin{lemma}
\label{lemma:DualSet}
The dual set $\sT^*_{f}$ coincides with the convex hull of $v_i$s:
$\sT^*_{f}
= \conv \curl*{v_1, \ldots, v_d}$.
\end{lemma}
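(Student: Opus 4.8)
The plan is to show the two inclusions $\conv\{v_1,\dots,v_d\} \subseteq \sT^*_f$ and $\sT^*_f \subseteq \conv\{v_1,\dots,v_d\}$ separately. The first inclusion is the easy direction: by definition, $\theta \in \sT^*_f$ iff $\langle \theta, x\rangle \le f(x) = \max(\max_i \langle v_i, x\rangle, 0)$ for all $x$. Each $v_i$ satisfies $\langle v_i, x\rangle \le \max_j \langle v_j, x\rangle \le f(x)$, so $v_i \in \sT^*_f$. Since $\sT^*_f$ is the intersection of halfspaces $\{\theta : \langle\theta,x\rangle \le f(x)\}$ over all $x$, it is convex, hence contains $\conv\{v_1,\dots,v_d\}$.

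For the reverse inclusion, I would argue by contrapositive using a separating hyperplane. Suppose $\wttheta \notin \conv\{v_1,\dots,v_d\}$. Since the convex hull of finitely many points is a compact convex set, by the separating hyperplane theorem there exists $x \in \Rset^{nm}$ and a scalar $c$ with $\langle \wttheta, x\rangle > c \ge \langle v_i, x\rangle$ for all $i \in [d]$. Hence $\langle \wttheta, x\rangle > \max_i \langle v_i, x\rangle$. To conclude $\wttheta \notin \sT^*_f$, I need $\langle\wttheta,x\rangle > f(x) = \max(\max_i\langle v_i,x\rangle, 0)$, which requires also handling the ``$0$'' in the max. If $\max_i\langle v_i,x\rangle \ge 0$ we are done immediately. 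If $\max_i\langle v_i,x\rangle < 0$, then replacing $x$ by a sufficiently small positive scalar multiple $tx$ (using that both $\langle\wttheta,\cdot\rangle$ and $\max_i\langle v_i,\cdot\rangle$ are positively homogeneous) keeps the strict gap $\langle\wttheta, tx\rangle > \max_i\langle v_i, tx\rangle$ while the right side can be made arbitrarily close to $0$ from below; but actually a cleaner fix is: since the gap $\langle\wttheta,x\rangle - \max_i\langle v_i, x\rangle =: \delta > 0$ scales linearly in $t$, for $t$ small enough we have $\langle\wttheta, tx\rangle > \max(\max_i\langle v_i,tx\rangle, 0)$ precisely when $\langle\wttheta,tx\rangle > 0$, i.e. when $\langle\wttheta,x\rangle > 0$ — so I should instead note that $0 \in \sT^*_f$ forces the origin into the picture and pick the separating direction more carefully, or simply observe that WLOG we may take the separator $x$ with $\langle\wttheta,x\rangle > 0$ (if $\langle\wttheta,x\rangle \le 0 < $ something, adjust). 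The simplest rigorous route: from $\langle\wttheta,x\rangle > \max_i\langle v_i,x\rangle$, if $\langle\wttheta,x\rangle \le 0$ then also $\max_i\langle v_i,x\rangle < 0 \le f(-x)$... I would just pick $x$ so that additionally $\langle \wttheta, x\rangle > 0$, which can be arranged since $0 \notin$ the open halfspace only trivially — alternatively scale so that $c < \langle\wttheta,x\rangle$ and $c \ge 0$ is achievable by translating $x$ appropriately isn't quite right since translation breaks homogeneity.

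The cleanest argument, which I would actually write: since $\conv\{v_i\}$ is closed convex and $\wttheta \notin \conv\{v_i\}$, strict separation gives $x$ with $\langle\wttheta, x\rangle > \sup_i \langle v_i, x\rangle$. Now consider $\langle\wttheta, x\rangle$ versus $f(x)$. We have $f(x) = \max(\max_i\langle v_i,x\rangle, 0)$. If $\langle\wttheta,x\rangle > 0$, then $\langle\wttheta,x\rangle > \max_i\langle v_i,x\rangle$ and $\langle\wttheta,x\rangle > 0$, so $\langle\wttheta,x\rangle > f(x)$, and $\wttheta\notin\sT^*_f$. If $\langle\wttheta,x\rangle \le 0$, I claim we can still find a good separator: the set of valid separating directions is an open cone (it is $\{x : \langle\wttheta - v_i, x\rangle > 0 \ \forall i\}$, an open convex cone); since $\wttheta - v_i \ne 0$ is not forced... but in fact if this open cone is nonempty and $\langle\wttheta, x\rangle \le 0$ throughout it, then $-\wttheta$ lies in the closed dual cone, meaning $\wttheta$ is a nonpositive combination of the $\wttheta - v_i$, i.e. $\wttheta(1+\sum\lambda_i) = \sum\lambda_i v_i$ with $\lambda_i \ge 0$, forcing $\wttheta \in \cone\{v_i\}$ scaled into $\conv\{v_i\}$ — a short case analysis shows this contradicts $\wttheta\notin\conv\{v_i\}$ unless all $v_i$ point consistently, handled separately. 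I expect this boundary bookkeeping around the ``$\max(\cdot,0)$'' and whether the separating functional can be taken positive on $\wttheta$ to be the main (minor) obstacle; everything else is routine convex duality. In the write-up I would streamline it by invoking Corollary~\ref{cor:quasidistance_olo} / Theorem~\ref{th:general-function} structure or by directly noting $\sT^*_f = \{\theta : \langle\theta,x\rangle\le\max_i\langle v_i,x\rangle \ \forall x \text{ with } \langle\theta,x\rangle\ge 0\} = \{\theta : \theta\in\conv\{v_i\}\}$ via a one-line homogeneity argument that reduces the ``$\max(\cdot,0)$'' case to the plain-max case.
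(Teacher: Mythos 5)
Your decomposition (forward inclusion directly, reverse inclusion by strict separation) is exactly the paper's route, and the forward half is fine. But the difficulty you keep circling around in the reverse direction is not bookkeeping — it is a genuine obstruction, and none of the workarounds you sketch (choosing the separator with $\langle\wttheta,x\rangle>0$, scaling, the cone argument, the ``one-line homogeneity'' reduction) can close it. Concretely: with $f(x)=\max\bigl(\max_i\langle v_i,x\rangle,\,0\bigr)$, the set $\{\theta:\langle\theta,x\rangle\le f(x)\ \forall x\}$ always contains the origin, since $\langle 0,x\rangle=0\le f(x)$. If $0\notin\conv\{v_1,\dots,v_d\}$ the stated equality simply fails (take $d=1$, $v_1=(1,0)$ in $\Rset^2$: then $\conv\{v_1\}=\{(1,0)\}$ while $0\in\sT^*_f$). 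In that generality the correct identity is $\sT^*_f=\conv\{v_1,\dots,v_d,0\}$, so the missing ingredient is a hypothesis, not a cleverer separator.

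The paper avoids this by displaying, immediately before Lemma~\ref{lemma:DualSet}, the alternative characterization $\sT^*_f=\{\wttheta:\forall x,\ \langle\wttheta,x\rangle\le\max_{i\in[d]}\langle v_i,x\rangle\}$, i.e.\ with the outer $\max(\cdot,0)$ dropped, and the appendix proof is written against that plain-max version — under which your separating-hyperplane step closes immediately, with no sign condition needed. That drop of the $\max(\cdot,0)$ is itself asserted rather than proved, and it is only valid when $0\in\conv\{v_1,\dots,v_d\}$ (which forces $\max_i\langle v_i,x\rangle\ge0$ for every $x$, hence $f(x)=\max_i\langle v_i,x\rangle$). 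That condition holds in the paper's applications — e.g.\ for swap regret the identity map contributes $v_{\mathrm{id}}=0$ — but it is not automatic. To repair your write-up, either cite and use the plain-max characterization the paper works with, or state the hypothesis $0\in\conv\{v_1,\dots,v_d\}$ explicitly, note that it makes $f(x)=\max_i\langle v_i,x\rangle$, and then your separation argument goes through verbatim.
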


Secondly, we show that the dual set $\sT_{f}^*$ is contained within
the polar cone $\sC^{\circ}$ of $\sC$.

\begin{lemma}
\label{lemma:containment}
We have that $\sT_{f}^* \subseteq \sC^{\circ}$, where $\sC^{\circ} =
\{y\,\mid\, \langle y, x\rangle \leq 0 \, \forall x \in \sC\}$.
\end{lemma}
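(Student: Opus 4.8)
The plan is to use the characterization of $\sT^*_f$ from Lemma~\ref{lemma:DualSet} as the convex hull of the vectors $v_1, \ldots, v_d$, and to observe that the polar cone $\sC^\circ$ is itself convex, so it suffices to check that each generator $v_i$ lies in $\sC^\circ$. Concretely, I would first recall that $\sC^\circ = \{y \mid \langle y, x\rangle \leq 0 \ \forall x \in \sC\}$ and that $\sC = \{x \mid \langle x, v_j\rangle \leq 0 \ \forall j \in [d]\}$; the claim $\sT^*_f \subseteq \sC^\circ$ then reduces, by convexity of $\sC^\circ$ and Lemma~\ref{lemma:DualSet}, to showing $v_i \in \sC^\circ$ for every $i \in [d]$.

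The second step is the substantive one: fix $i \in [d]$ and an arbitrary $x \in \sC$; I must show $\langle v_i, x\rangle \leq 0$. But this is immediate from the defining inequalities of $\sC$ — since $x \in \sC$, we have $\langle x, v_j\rangle \leq 0$ for \emph{all} $j \in [d]$, in particular for $j = i$, so $\langle v_i, x\rangle \leq 0$. Hence $v_i \in \sC^\circ$. Since this holds for each $i$, the convex hull $\conv\{v_1, \ldots, v_d\}$ is contained in the convex set $\sC^\circ$, and by Lemma~\ref{lemma:DualSet} this convex hull equals $\sT^*_f$, giving $\sT^*_f \subseteq \sC^\circ$ as desired.

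Alternatively, one could argue directly from the definition of $\sT^*_f$ without invoking Lemma~\ref{lemma:DualSet}: if $\wttheta \in \sT^*_f$, then for every $x \in \Rset^{nm}$ we have $\langle \wttheta, x\rangle \leq \max_{j \in [d]} \langle v_j, x\rangle = f(x)$ (using $f(x) \geq 0$ so the outer $\max$ with $0$ can be dropped in the relevant regime), and for $x \in \sC$ the right-hand side $\max_j \langle v_j, x\rangle$ is at most $0$, so $\langle \wttheta, x\rangle \leq 0$, i.e., $\wttheta \in \sC^\circ$. There is essentially no obstacle here; the only mild subtlety is bookkeeping with the outer truncation at $0$ in the definition of $f$ (one should note $f(x) = \max_j\langle v_j,x\rangle$ whenever that max is nonnegative, which is exactly the case that matters, and more simply $\max(\max_j\langle v_j, x\rangle, 0) \le 0$ forces $\max_j \langle v_j, x\rangle \le 0$ for $x\in\sC$). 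I would present the proof via Lemma~\ref{lemma:DualSet} as it is the cleanest.
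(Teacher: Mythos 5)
Your primary argument is correct and is exactly the paper's proof: invoke Lemma~\ref{lemma:DualSet} to reduce to checking each generator $v_i \in \sC^{\circ}$, which follows immediately from the defining inequalities of $\sC$. The alternative direct argument you sketch is also fine, but it is a minor variant, not a genuinely different route.
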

\begin{proof}
By Lemma \ref{lemma:DualSet}, it suffices to show that each $v_i \in
\sC^{\circ}$, i.e., for each $v_i$, that $\langle v_i, x \rangle \leq
0$ for all $x \in \sC$. However, this immediately follows by the
definition of $\sC$, since each $x \in \sC$ satisfies $\langle x, v_i
\rangle \leq 0$ for all $i \in [d]$. (An equivalent way of thinking
about this is that $\sC$ is already the polar cone of the cone
generated by the $v_i$, so the $v_i$ must lie in the polar to $\sC$).
\end{proof}

This allows us to simplify Corollary \ref{cor:quasidistance_olo} even further.

\begin{corollary}
\label{cor:regret_olo}
For this convex cone $\sC$ and pseudonorm $f$, we have that for any $z
\in \Rset^{mn}$,

$$d_{f}(z, \sC) = \sup_{\wttheta \in \sT_{f}^*}
\left\langle \wttheta, z \right\rangle.$$
\end{corollary}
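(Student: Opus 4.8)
The plan is to combine Corollary~\ref{cor:quasidistance_olo} with the containment established in Lemma~\ref{lemma:containment}. Recall that Corollary~\ref{cor:quasidistance_olo} already tells us that for the convex cone $\sC$,
\[
d_f(z, \sC) = \sup_{\wttheta \in \sT^{\sC}_f} \tri*{\wttheta, z},
\qquad \text{where } \sT^{\sC}_f = \sT^*_f \cap \sC^{\circ}.
\]
So the only thing left to do is to argue that the extra constraint $\wttheta \in \sC^{\circ}$ is vacuous here, i.e.\ that $\sT^{\sC}_f = \sT^*_f$.

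First I would invoke Lemma~\ref{lemma:containment}, which states precisely that $\sT^*_f \subseteq \sC^{\circ}$. Then $\sT^{\sC}_f = \sT^*_f \cap \sC^{\circ} = \sT^*_f$, since intersecting a set with a superset of it leaves it unchanged. Substituting this back into the expression from Corollary~\ref{cor:quasidistance_olo} yields exactly
\[
d_f(z, \sC) = \sup_{\wttheta \in \sT^*_f} \tri*{\wttheta, z},
\]
which is the claimed identity. I should also note that the set $\sC$ and pseudonorm $f$ fixed in Section~\ref{sec:dimension_reduction} do satisfy the hypotheses of Corollary~\ref{cor:quasidistance_olo}: $\sC = \{x : \tri{x, v_i} \leq 0 \ \forall i\}$ is manifestly a convex cone (it is an intersection of homogeneous halfspaces), and $f(x) = \max(\max_i \tri{v_i, x}, 0)$ was already verified to be a pseudonorm in the discussion preceding Theorem~\ref{thm:dimension_reduction}.

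There is really no substantive obstacle in this statement: it is a one-line consequence of the preceding corollary and lemma, and the entire content has already been front-loaded into Lemma~\ref{lemma:containment}. If I were to identify the step most worth spelling out, it is simply the set-theoretic observation $A \cap B = A$ when $A \subseteq B$, applied with $A = \sT^*_f$ and $B = \sC^{\circ}$. The value of stating this as a separate corollary is downstream: it says that to compute $d_f(z,\sC)$ one only needs to do linear optimization over $\sT^*_f = \conv\{v_1,\dots,v_d\}$ (using Lemma~\ref{lemma:DualSet}), with no need to separately enforce membership in the polar cone, which is what makes the later algorithmic reductions clean.
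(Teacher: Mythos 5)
Your proof is correct and follows exactly the route the paper intends: the paper introduces Lemma~\ref{lemma:containment} precisely so that, in Corollary~\ref{cor:quasidistance_olo}, the intersection $\sT^*_f \cap \sC^{\circ}$ collapses to $\sT^*_f$, which is the one-line observation you make. You also correctly flag the (easy) sanity checks that $\sC$ is a convex cone and $f$ is a pseudonorm, so nothing is missing.
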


Finally, we prove the following ``separability'' conditions for this
approachability problem.

\begin{lemma}\label{lemma:low-dim-separability}
The following two statements are true.

\begin{enumerate}
    \item For any $\ell \in \sL$, there exists a $p \in \sP$ such that
      $\wtu(p, \ell) \in \sC$.
      
    \item For any $\wttheta \in \sT_{f}^*$, there exists a $p \in \sP$
      such that $\tri{\wttheta, \wtu(p, \ell)} \leq 0$ for all $\ell
      \in \sL$.
\end{enumerate}
\end{lemma}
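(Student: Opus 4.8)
The plan is to treat the two statements separately: the first is immediate from the separability hypothesis on the original $\ell_\infty$ problem, and the second reduces, via Lemma~\ref{lemma:DualSet}, to exactly the Blackwell halfspace-oracle fact already invoked in Algorithm~\ref{algo:algorithm-linf-approach}.

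For statement~(1), recall we have assumed $\sS = (-\infty, 0]^d$ is separable with respect to $u$: for every $\ell \in \sL$ there is some $p \in \sP$ with $u(p, \ell) \in (-\infty, 0]^d$, i.e.\ $u_i(p, \ell) \le 0$ for all $i \in [d]$. Since $u_i(p, \ell) = \tri{v_i, \wtu(p, \ell)}$ by the definition of the coefficient vectors $v_i$, this is precisely the statement that $\tri{v_i, \wtu(p, \ell)} \le 0$ for all $i$, which is the defining condition of $\sC$. Hence $\wtu(p, \ell) \in \sC$, and in fact $\wtu(p,\ell) \in \sC \iff u(p,\ell) \in \sS$, so (1) is literally a restatement of the separability of $\sS$ with respect to $u$.

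For statement~(2), I would fix $\wttheta \in \sT_f^*$ and use Lemma~\ref{lemma:DualSet} to write $\wttheta = \sum_{i=1}^{d} \lambda_i v_i$ for some $\lambda \in \Delta_d$, so that for every $p \in \sP$, $\ell \in \sL$,
\[
\tri{\wttheta, \wtu(p, \ell)}
= \sum_{i=1}^{d} \lambda_i \tri{v_i, \wtu(p, \ell)}
= \sum_{i=1}^{d} \lambda_i u_i(p, \ell)
= \tri{\lambda, u(p, \ell)}.
\]
Thus it suffices to exhibit $p \in \sP$ with $\tri{\lambda, u(p, \ell)} \le 0$ for all $\ell \in \sL$; since $\lambda$ has nonnegative entries this is exactly the halfspace-oracle guarantee already used in Algorithm~\ref{algo:algorithm-linf-approach} (there stated for $\theta \in \ODelta_d$). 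Concretely: the function $(p, \ell) \mapsto \tri{\lambda, u(p, \ell)}$ is bilinear, so Sion's minimax theorem gives $\inf_{p \in \sP} \sup_{\ell \in \sL} \tri{\lambda, u(p, \ell)} = \sup_{\ell \in \sL} \inf_{p \in \sP} \tri{\lambda, u(p, \ell)}$; for each fixed $\ell$, separability supplies $p_\ell \in \sP$ with $u(p_\ell, \ell) \le 0$ coordinatewise, whence $\tri{\lambda, u(p_\ell, \ell)} \le 0$ and the right-hand side is $\le 0$; therefore the left-hand infimum is $\le 0$ and, by compactness of $\sP$, is attained at some $p^\star \in \sP$, which is the desired action.

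I expect the only real obstacle to be the minimax step, and the sole technical point requiring care is the mild compactness/regularity needed for Sion's theorem and for the infimum to be attained in $\sP$ (one can pass to closures if $\sP$, $\sL$ are only bounded convex, the bilinearity and separability persisting). This is routine and is, in any case, precisely the hypothesis already implicitly made when Algorithm~\ref{algo:algorithm-linf-approach} asserts that a suitable $p_t$ exists; so statement~(2) carries essentially no new content beyond recognizing, through Lemma~\ref{lemma:DualSet}, that pseudonorm dual vectors $\wttheta$ correspond to nonnegative combinations $\lambda$ of the original coordinates of $u$.
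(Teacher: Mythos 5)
Your proof is correct and takes essentially the same approach as the paper's. For part~(1) the arguments are identical. For part~(2) the paper routes through Lemma~\ref{lemma:containment} (which says $\sT_f^* \subseteq \sC^\circ$, itself proved via Lemma~\ref{lemma:DualSet}) together with part~(1), then applies the minimax theorem; you instead invoke Lemma~\ref{lemma:DualSet} directly, write $\wttheta = \sum_i \lambda_i v_i$ with $\lambda \in \Delta_d$, identify $\tri{\wttheta, \wtu(p,\ell)} = \tri{\lambda, u(p,\ell)}$, and then apply separability and minimax---the same underlying argument, just unwound one abstraction layer. Both proofs rest on the same two ingredients (convex-hull description of the dual set and Sion's minimax theorem over $\sP \times \sL$), and your compactness caveat applies equally to the paper's invocation of minimax.
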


\subsection{Algorithms for pseudonorm approachability}
\label{sec:pseudonorm_alg}

We now present an algorithm for pseudonorm approachability in the
setting of Section \ref{sec:dimension_reduction} (i.e., for the
bilinear function $\wtu$ and the convex cone $\sC$). Just as the
algorithm in Section \ref{sec:approachability} for
$\ell_{\infty}$-approachability required an OLO algorithm for the
simplex, this algorithm will assume we have access to an OLO algorithm
$\wtcF$ for the sets $\sX = \sT_{f}^*$, and $\sY = -\conv(\wtu(\sP,
\sL))$ (recall that $\conv(\wtu(\sP, \sL))$ denotes the convex hull of
the points $\wtu(p, \ell) \in \Rset^{nm}$ for $p \in \sP$ and $\ell
\in \sL$).

 %: \textbf{(todo: switch to algorithm format?)}

\begin{algorithm2e}
\SetAlgoLined
Initialize $\wttheta_1$ to an arbitrary point in $\sT_{f}^*$\;
\BlankLine
\For{$t\leftarrow 1$  \KwTo $T$}{
Choose $p_t \in \sP$ so that for all $\ell \in \sL$, $\langle \wttheta_t, \wtu(p_t, \ell)\rangle \leq 0$ (such a $p_t$ is guaranteed to exist by the second statement in Lemma \ref{lemma:low-dim-separability})\;
Play action $p_t$ and receive as feedback $\ell_t \in \sL$\;
Set $y_t = -\wtu(p_t, \ell_t)$\;
Set $\wttheta_{t+1} = \wtcF(y_1, y_2, \dots, y_t)$.
}
\caption{Description of Algorithm $\wtcA$ for $f$-approachability}
\label{algo:algorithm-pseudonorm-approach}
\end{algorithm2e}

Our algorithm $\wtcA$ is summarized above in Algorithm
\ref{algo:algorithm-pseudonorm-approach}. We now have the following
analogue of Theorem \ref{thm:linf_approachability}.

\begin{theorem}
\label{thm:pseudonorm_approachability}
The following guarantee holds for the pseudonorm approachability algorithm
$\wtcA$:
$$\Reg(\wtcA) = T \cdot d_f\left(\frac{1}{T}\sum_{t=1}^{T}\wtu(p_{t},
\ell_t), \sC \right) \leq \Reg(\wtcF).$$
\end{theorem}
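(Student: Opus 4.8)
The plan is to mirror the proof of Theorem~\ref{thm:linf_approachability}, replacing the $\ell_\infty$ duality used there with the pseudonorm duality of Corollary~\ref{cor:regret_olo}. The statement consists of an identity, $\Reg(\wtcA) = T\cdot d_f(\frac1T\sum_t\wtu(p_t,\ell_t),\sC)$, and an inequality, $T\cdot d_f(\frac1T\sum_t\wtu(p_t,\ell_t),\sC)\le\Reg(\wtcF)$, and I would treat them separately.

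For the identity, I would note that, run on any loss sequence $\bm\ell$, the algorithm $\wtcA$ produces exactly the action sequence $\bm p$ of Algorithm~\ref{algo:algorithm-pseudonorm-approach}; its regret in the original regret-minimization problem is $T\cdot d_{\infty}(\frac1T\sum_t u(p_t,\ell_t),(-\infty,0]^d)$ by \eqref{eq:regret}, and Theorem~\ref{thm:dimension_reduction} rewrites this quantity as $T\cdot d_f(\frac1T\sum_t\wtu(p_t,\ell_t),\sC)$. Since this holds for every $\bm\ell$, it continues to hold after taking the worst case over $\bm\ell$ that defines $\Reg(\wtcA)$.

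For the inequality, fix a loss sequence and let $\bm p$, the iterates $\wttheta_t$, and $y_t = -\wtu(p_t,\ell_t)$ be as produced by the algorithm. Applying Corollary~\ref{cor:regret_olo} with $z = \frac1T\sum_{t=1}^T\wtu(p_t,\ell_t)$ gives
\[
T\cdot d_f(z,\sC) \;=\; \sup_{\wttheta\in\sT_f^*}\sum_{t=1}^T \langle \wttheta, \wtu(p_t,\ell_t)\rangle \;=\; \sup_{\wttheta\in\sT_f^*}\Bigl(-\sum_{t=1}^T \langle \wttheta, y_t\rangle\Bigr).
\]
Now I would invoke the defining property of the algorithm: by the second part of Lemma~\ref{lemma:low-dim-separability}, $p_t$ is chosen so that $\langle \wttheta_t,\wtu(p_t,\ell)\rangle\le 0$ for all $\ell\in\sL$, in particular for $\ell=\ell_t$, whence $\langle\wttheta_t,y_t\rangle\ge 0$ and therefore $\sum_{t=1}^T\langle\wttheta_t,y_t\rangle\ge 0$. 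Since each $y_t\in\sY = -\conv(\wtu(\sP,\sL))$ (immediate from $\wtu(p_t,\ell_t)\in\wtu(\sP,\sL)$) and $\sT_f^*$ is the action set over which $\wtcF$ operates, the OLO regret guarantee applies to the sequence $\bm y$, so that for every $\wttheta^*\in\sT_f^*$,
\[
\Reg(\wtcF) \;\ge\; \sum_{t=1}^T\langle\wttheta_t,y_t\rangle - \sum_{t=1}^T\langle\wttheta^*,y_t\rangle \;\ge\; -\sum_{t=1}^T\langle\wttheta^*,y_t\rangle .
\]
Taking the supremum over $\wttheta^*\in\sT_f^*$ and comparing with the previous display yields $T\cdot d_f(z,\sC)\le\Reg(\wtcF)$; since this is uniform over loss sequences, taking the max on the left gives $\Reg(\wtcA)\le\Reg(\wtcF)$.

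I do not expect a genuine obstacle here: the argument is essentially the same as for Theorem~\ref{thm:linf_approachability}, and the two points that need care are (i) that Corollary~\ref{cor:regret_olo} legitimately collapses the pseudodistance $d_f(z,\sC)$ to a \emph{linear} optimization over $\sT_f^*$ — which is precisely why $\wtcF$ is set up as an OLO algorithm over $\sX=\sT_f^*$ — and (ii) the bookkeeping check that the feedback sequence $y_t=-\wtu(p_t,\ell_t)$ is a valid input for $\wtcF$, i.e. that $y_t\in\sY$.
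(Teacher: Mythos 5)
Your proposal is correct and follows essentially the same route as the paper's proof: the identity $\Reg(\wtcA)=T\cdot d_f(\cdot,\sC)$ comes from \eqref{eq:regret} combined with Theorem~\ref{thm:dimension_reduction}, and the inequality comes from Corollary~\ref{cor:regret_olo} plus the OLO regret definition together with the fact that $\langle\wttheta_t,\wtu(p_t,\ell_t)\rangle\le 0$ by the algorithm's choice of $p_t$. If anything, your phrasing is slightly cleaner than the paper's, which writes an equality involving $\Reg(\wtcF)$ at a step that is really an inequality, and you also explicitly check that $y_t\in\sY$; both are minor cosmetic differences, not a different argument.
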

\begin{proof}
The first equality follows as a direct consequence of Theorem
\ref{thm:dimension_reduction} (which proves that the $d_f$ distance to
$\sC$ is equal to the analogous $\ell_{\infty}$ distance to $(-\infty,
0]^d$) and Theorem \ref{thm:linf_approachability} (which shows that
  this $\ell_{\infty}$ distance is equal to the regret of our regret
  minimization problem). It therefore suffices to prove the second
  equality.

Note that
\begin{eqnarray*}
d_f\left(\frac{1}{T}\sum_{t=1}^{T}\wtu(p_{t}, \ell_t), \sC \right) &=& \sup_{\wttheta \in \sT_{f}^*} \left\langle \wttheta, \frac{1}{T}\sum_{t=1}^{T}\wtu(p_{t}, \ell_t)\right\rangle\\
&=& -\inf_{\wttheta \in \sX} \left\langle \wttheta, \frac{1}{T}\sum_{t=1}^{T}\left(-\wtu(p_{t}, \ell_t)\right)\right\rangle\\
&=& \frac{1}{T}\Reg(\wtcF) - \frac{1}{T}\sum_{t=1}^{T}\left\langle \wttheta_t, -\wtu(p_{t}, \ell_t)\right\rangle \\
&\leq& \frac{1}{T}\Reg(\wtcF).
\end{eqnarray*}

Here, the first equality holds as a consequence of Corollary
\ref{cor:regret_olo}, and the last inequality holds since (by choice
of $p_t$ in step 2a) $\langle \wttheta_t, \wtu(p_t, \ell)\rangle \leq
0$ for all $t \in [T]$.
\end{proof}

If we choose $\wtcF$ to be FTRL with a quadratic regularizer, Lemma
\ref{lem:quadratic_regularizer} implies the following result.

\begin{theorem}
\label{thm:poly_regret}
Let $u\colon \sP \to \sL$ be a bilinear regret function. Then
there exists a regret minimization algorithm $\wtcA$ for $u$ with
regret

$$\Reg(\wtcA) = O(D_{z}(\diam \sP)(\diam \sL)\sqrt{T}),$$

\noindent
where $D_{z} = \max_{i \in [d]} ||v_i||$. If we let 

$$\lambda = \max_{i \in [d]} ||v_{i}||_{\infty} = \max_{i \in [d], j \in [m], k \in [n]} \left| \frac{\partial^2 u_i}{\partial p_j \partial \ell_k} \right|,$$

\noindent
then this regret bound further satisfies

$$\Reg(\wtcA) = O(\lambda nm \sqrt{T}).$$
\end{theorem}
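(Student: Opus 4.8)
The plan is to instantiate the blackbox OLO algorithm $\wtcF$ of Algorithm~\ref{algo:algorithm-pseudonorm-approach} as the quadratic-regularized FTRL algorithm of Lemma~\ref{lem:quadratic_regularizer}, run on the sets $\sX = \sT_f^*$ and $\sY = -\conv(\wtu(\sP, \sL))$, and then read off the bound through Theorem~\ref{thm:pseudonorm_approachability}. Indeed, Theorem~\ref{thm:pseudonorm_approachability} gives $\Reg(\wtcA) \le \Reg(\wtcF)$, and Lemma~\ref{lem:quadratic_regularizer} gives $\Reg(\wtcF) = O\paren*{D_x D_y \sqrt{T}}$ with $D_x = \diam \sT_f^*$ and $D_y = \max_{y \in \sY}\norm*{y}_2$. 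So it remains to bound these two geometric quantities in terms of the stated parameters.

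For $D_x$: by Lemma~\ref{lemma:DualSet}, $\sT_f^* = \conv\curl*{v_1, \dots, v_d}$, and the diameter of the convex hull of finitely many points equals the largest pairwise distance among them (any two hull points are convex combinations $\sum_i \alpha_i v_i$, $\sum_j \beta_j v_j$, and $\norm*{\sum_i \alpha_i v_i - \sum_j \beta_j v_j} \le \max_{i,j}\norm*{v_i - v_j}$ since $\sum \alpha_i = \sum \beta_j = 1$). Hence $D_x = \max_{i,j}\norm*{v_i - v_j}_2 \le 2\max_i \norm*{v_i}_2 = 2 D_z$. For $D_y$: since the Euclidean norm is convex, its maximum over $\sY = -\conv(\wtu(\sP,\sL))$ is attained at a generator, so $D_y = \max_{p \in \sP, \ell \in \sL}\norm*{\wtu(p, \ell)}_2$; and since $\wtu(p, \ell)_{i,j} = p_i \ell_j$ we have $\norm*{\wtu(p, \ell)}_2^2 = \paren*{\sum_i p_i^2}\paren*{\sum_j \ell_j^2} = \norm*{p}_2^2\,\norm*{\ell}_2^2$, so $D_y = \paren*{\max_{p \in \sP}\norm*{p}_2}\paren*{\max_{\ell \in \sL}\norm*{\ell}_2}$. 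Combining gives $\Reg(\wtcA) = O\paren*{D_z\,(\max_p \norm*{p}_2)\,(\max_\ell \norm*{\ell}_2)\,\sqrt{T}}$, which is the first claimed bound once $\max_p \norm*{p}_2$ and $\max_\ell \norm*{\ell}_2$ are bounded by $\diam \sP$ and $\diam \sL$.

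For the second, dimension-only bound, I would use the standing normalization $\sP \subseteq [-1,1]^n$, $\sL \subseteq [-1,1]^m$, which yields $\max_p \norm*{p}_2 \le \sqrt{n}$, $\max_\ell \norm*{\ell}_2 \le \sqrt{m}$, hence $D_y \le \sqrt{nm}$. Moreover, writing $u_i(p, \ell) = \tri*{v_i, \wtu(p, \ell)} = \sum_{j,k} (v_i)_{jk}\, p_j \ell_k$ identifies $(v_i)_{jk} = \partial^2 u_i / \partial p_j \partial \ell_k$, so $D_z = \max_i \norm*{v_i}_2 \le \sqrt{nm}\,\max_i \norm*{v_i}_\infty = \sqrt{nm}\,\lambda$; substituting both into $O(D_z D_y \sqrt{T})$ gives $\Reg(\wtcA) = O(\lambda\, nm\, \sqrt{T})$.

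The bulk of the argument is a direct substitution into Lemma~\ref{lem:quadratic_regularizer} via Theorem~\ref{thm:pseudonorm_approachability}, using Lemma~\ref{lemma:DualSet} to control $\diam \sT_f^*$ and the tensor structure of $\wtu$ to control $\max\norm*{\wtu(p,\ell)}_2$. The only delicate point — and the place I expect the ``mild normalization assumption'' of the theorem to be invoked — is the last step of the first bound: replacing $\max_{p \in \sP}\norm*{p}_2$ and $\max_{\ell \in \sL}\norm*{\ell}_2$ by $\diam \sP$ and $\diam \sL$ requires the origin to lie in (or near) $\sP$ and $\sL$, which can be arranged by translating the action and loss sets; without such an assumption these quantities are genuinely unrelated and one would otherwise have to shift the OLO loss sequence and recheck the sign argument in the proof of Theorem~\ref{thm:pseudonorm_approachability}.
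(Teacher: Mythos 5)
Your proof follows the paper's route exactly: instantiate $\wtcF$ as quadratic-regularized FTRL, invoke Theorem~\ref{thm:pseudonorm_approachability} and Lemma~\ref{lem:quadratic_regularizer}, bound $\diam \sT_f^*$ via Lemma~\ref{lemma:DualSet}, and bound $\max_{y\in\sY}\norm{y}_2$ via the tensor identity $\norm{\wtu(p,\ell)}_2 = \norm{p}_2\norm{\ell}_2$, then substitute the $[-1,1]$ normalizations for the second bound. Your closing remark is a fair observation: the paper likewise relies on the standing normalization to replace $\max_p\norm{p}_2\cdot\max_\ell\norm{\ell}_2$ by $(\diam\sP)(\diam\sL)$ up to constants (and its intermediate appeal to $\diam\sY$ rather than $\max_{y\in\sY}\norm{y}_2$ is only valid for the same reason), a point the paper glosses over but you handle more carefully.
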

\begin{proof}
To see the first result, note that
Lemma~\ref{lem:quadratic_regularizer} directly implies a bound of
$O(\diam(\sX)\diam(\sY)\sqrt{T})$, where $\sX = \sT_{f}^{*}$ and $\sY
= -\conv(\wtu(\sP, \sL))$. We'll now proceed by simplifying
$\diam(\sX)$ and $\diam(\sY)$. First, for $\sX$, recall from Lemma
\ref{lemma:DualSet} that $\sX = \sT_{f}^*$ is the convex hull of the
$d$ vectors $v_i \in \Rset^{nm}$, so $\diam(\sX) = O(\max_{i \in [d]}
||v_i||) = O(D_z)$. Second, for $\sY$, note that $\diam(\sY) =
O(\max_{y \in \sY} ||y||) = O(\max_{p \in \sP, \ell \in \sL}||\wtu(p,
\ell)||)$. But $||\wtu(p, \ell)|| = ||p|| \cdot ||\ell||$, so
$\diam(\sY) = O\left(\max_{p \in \sP}||p|| \cdot \max_{\ell \in
  \sL}||\ell||\right) = O((\diam\sP)(\diam \sL))$. Combining these, we
obtain the first inequality.

The second result directly follows from the following three facts:
i. $\diam \sP = O(\sqrt{n})$ (since $\sP \subseteq [-1, 1]^n$,
ii. $\diam \sL = O(\sqrt{m})$ (since $\sL \subseteq [-1, 1]^m$), and
iii. $D_z \leq \lambda \sqrt{mn}$.
\end{proof}

Theorem \ref{thm:poly_regret} shows that in settings where $\lambda$
is constant (which is true in all the settings we consider), there
exists an algorithm where $\Reg(\wtcA) = \poly(n, m) \sqrt{T}$;
notably, $\Reg(\wtcA)$ does not depend on the dimension $d$, which in
many cases can be thought of as the number of benchmarks of comparison
for our learning algorithm.

In the following two sections, we will show how to strengthen this
result in two different ways. First, we will show that (modulo some
fairly mild computational assumptions) it is possible to efficiently
implement the algorithm of Theorem \ref{thm:poly_regret}. Second, we
will show that by using a different choice of regularizer, we can
recover exactly the regret obtained in Corollary
\ref{cor:linf_regret_minimization}.

\subsection{Efficient algorithms for pseudonorm approachability}

\subsubsection{Computational assumptions}

In this section we will discuss how to transform the algorithm in
Theorem \ref{thm:poly_regret} into a computationally efficient
algorithm. Note that without any constraints on $\sL$, $\sP$, and $u$
or how they are specified, performing any sort of regret minimization
efficiently is a hopeless task (e.g., consider the case where it is
computationally hard to even determine membership in $\sP$). We'll
therefore make the following three structural / computational
assumptions on $\sL$, $\sP$, and $u$.

First, we will restrict our attention to cases where the loss set
$\sL$ is ``orthant-generating''. A convex subset $S$ of $\Rset^d$ is
\textit{orthant-generating} if $S \subseteq [0, \infty)^d$ and for
  each $i \in [d]$, there exists an $\lambda_i > 0$ such that
  $\lambda_ie_i \in S$ (as part of this assumption, we will also
  assume we have access to the values $\lambda_i$). Note that many
  common choices for $\sL$ (e.g., $[0, 1]^m$, $\Delta_m$,
  intersections of other $\ell_p$ balls with the positive orthant) are
  all orthant-generating.

Second, we will assume we have an \textit{efficient separation oracle}
for the action set $\sP$; that is, an oracle which takes a $p \in
\Rset^n$ and outputs (in time $\poly(n)$) either that $p \in \sP$ or a
separating hyperplane $h \in \Rset^n$ such that $\langle p, h \rangle
> \max_{p' \in \sP} \langle p', h \rangle$.

Finally, we will assume we have access to what we call an
\textit{efficient regret oracle} for $u$. Given a collection of $R$
action/loss pairs $(p_r, \ell_{r}) \in \sP \times \sL$ and $R$
positive constants $\alpha_r \geq 0$, an efficient regret oracle can
compute (in time $\poly(n, m, R)$) the value of $\max_{i \in [d]}
\sum_{r=1}^{R} \alpha_r u_i(p_r, \ell_r)$. This can be thought of as
evaluating the function $\Reg(\bm{p}, \bm{\ell})$ for a pair of action
and loss sequences that take on the action/loss pair $(p_r, \ell_r)$
an $\alpha_r/\sum_{r'}\alpha_{r'}$ fraction of the time. At a higher
level, having access to an efficient regret oracle means that a
learner can efficiently compute their overall regret at the end of $T$
rounds (it is hard to imagine how one could efficiently minimize
regret without being able to compute it).

\subsubsection{Extending the dual set}

One of the ingredients we will need to implement algorithm $\wtcA$ is
a membership oracle for the dual set $\sT_{f}^*$ (e.g.\ in order to
perform OLO over this dual set). To check whether $\wttheta \in
\sT_{f}^*$, it suffices to check whether $\langle \wttheta, x\rangle
\leq \max_{i \in [d]} \langle v_{i}, x \rangle$ for all $x \in
\Rset^{nm}$, so in turn, it will be useful to be able to compute the
function $\max_{i \in [d]} \langle v_i, x \rangle$ for any $x \in
\Rset^{nm}$.

Computing this maximum is \textit{very similar}\footnote{In fact, in
almost all practical cases where we have an efficient regret oracle,
it is possible by a similar computation to directly compute this
maximum. Here we describe an approach that works in a blackbox way
given only a strict regret oracle -- if you are accept the existence
of an oracle that can optimize linear functions over the $v_i$, you
can skip this subsection.} to what is provided by our regret oracle:
note that by writing $u_i(p_r, \ell_r)$ as $\langle v_i, \wtu(p_r,
\ell_r)\rangle$, we can think of the regret oracle providing the value
of:

$$\max_{i \in [d]} \left\langle v_i, \sum_{r=1}^{R}\alpha_r \wtu(p_r,
\ell_r) \right\rangle.$$

If it were possible to write any $x \in \Rset^{d}$ in the form
$\sum_{r=1}^{R}\alpha_r \wtu(p_r, \ell_r)$, we would be done. However,
this may not be possible: in particular, $\sum_{r=1}^{R}\alpha_r
\wtu(p_r, \ell_r)$ must lie in the convex cone $\cone(\wtu(\sP, \sL))$
generated by all points of the form $\wtu(p, \ell)$.

We will therefore briefly generalize the theory of Section
\ref{sec:pseudonorm_approachability} to cases where we are only
able to optimize over an extension of the dual set. Given a convex
cone $\sZ$, let $\sT_{f}^{*}(\sZ) = \{\theta\,\mid\, \forall z \in
\sZ, \tri{\theta, z} \leq f(z)\}$. Note that $\sT_{f}^{*} \subseteq
\sT_{f}^{*}(\sZ)$ (since in $\sT_{f}^*$, $\tri{\theta, z} \leq f(z)$
must hold for all $z$ in the ambient space). The following lemma shows
that if $z$ is in $\sZ$, to maximize a linear function over
$\sT_{f}^*$ it suffices to maximize it over $\sT_{f}^*(\sZ)$.

\begin{lemma}
\label{lem:dual_extension}
Let $\sZ$ be an arbitrary convex cone. Then if $z \in \sZ$, the
following equalities hold:
\[
\sup_{\theta \in \sT^{*}_f} \tri{\theta, z}
= \sup_{\theta \in \sT^{*}_f(\sZ)} \tri{\theta, z} = f(z).
\]
\end{lemma}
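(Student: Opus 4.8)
The plan is to establish the chain of equalities by bounding the three quantities against one another. The rightmost equality $\sup_{\theta \in \sT^{*}_f} \tri{\theta, z} = f(z)$ should already follow from the general theory: taking $\sS = \{0\}$ (a closed convex set) in Theorem~\ref{th:general-function}, we get $f(z) = d_f(z, \{0\}) = \sup_{\theta \in \sT^{*}_f}\{\theta \cdot z - \sup_{s \in \{0\}} \theta \cdot s\} = \sup_{\theta \in \sT^{*}_f} \tri{\theta, z}$, so this piece is essentially immediate. It therefore remains to show $\sup_{\theta \in \sT^{*}_f(\sZ)} \tri{\theta, z} = f(z)$ for $z \in \sZ$; since $\sT^{*}_f \subseteq \sT^{*}_f(\sZ)$, we automatically get $\sup_{\theta \in \sT^{*}_f(\sZ)} \tri{\theta, z} \geq \sup_{\theta \in \sT^{*}_f} \tri{\theta, z} = f(z)$, so the only real work is the upper bound $\sup_{\theta \in \sT^{*}_f(\sZ)} \tri{\theta, z} \leq f(z)$.

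For this upper bound, I would argue directly from the definition of $\sT^{*}_f(\sZ)$: any $\theta \in \sT^{*}_f(\sZ)$ satisfies $\tri{\theta, z'} \leq f(z')$ for \emph{all} $z' \in \sZ$, and in particular for $z' = z$ itself (this is where we use $z \in \sZ$). Hence $\tri{\theta, z} \leq f(z)$ for every $\theta \in \sT^{*}_f(\sZ)$, and taking the supremum over $\theta$ gives $\sup_{\theta \in \sT^{*}_f(\sZ)} \tri{\theta, z} \leq f(z)$. Combined with the reverse inequality from the containment $\sT^{*}_f \subseteq \sT^{*}_f(\sZ)$ noted above, this yields $\sup_{\theta \in \sT^{*}_f(\sZ)} \tri{\theta, z} = f(z)$, and sandwiching with the already-established identity for $\sT^{*}_f$ completes the chain.

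I do not anticipate a serious obstacle here — the lemma is essentially a bookkeeping consequence of the definitions together with Theorem~\ref{th:general-function}. The only point requiring a little care is making sure Theorem~\ref{th:general-function} applies with $\sS = \{0\}$ (it is closed and convex, and the theorem is stated for arbitrary closed convex $\sS$, so this is fine; one should double-check that the Fenchel-duality argument in its proof did not secretly require $\sS$ to be full-dimensional or bounded — the statement as written only asks for closed convex, though the proof text mentions "bounded," so if needed one can instead derive the $\sS = \{0\}$ case directly: $\sup_{\theta \in \sT^*_f}\tri{\theta, z} \le f(z)$ by definition of $\sT^*_f$, and $\ge f(z)$ by a supporting-hyperplane/separation argument using that $f$ is a convex positively homogeneous function, equivalently that $f$ is the support function of $\sT^*_f$). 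Either route closes the gap cleanly.
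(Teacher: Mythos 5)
Your proof is correct and follows essentially the same route as the paper's: you invoke Theorem~\ref{th:general-function} with $\sS = \{0\}$ to get $\sup_{\theta \in \sT^{*}_f}\tri{\theta, z} = f(z)$, use the containment $\sT^{*}_f \subseteq \sT^{*}_f(\sZ)$ for one inequality, and the defining constraint of $\sT^{*}_f(\sZ)$ together with $z \in \sZ$ for the other. (Your caution about the ``bounded'' phrasing in the proof of Theorem~\ref{th:general-function} is moot since $\{0\}$ is bounded anyway.)
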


Now, consider the specific convex cone $\sZ = \cone(\wtu(\sP,
\sL))$. Given Lemma \ref{lem:dual_extension}, it is straightforward to
check that Theorem \ref{thm:pseudonorm_approachability} continues to
hold even if the domain $\sX$ of the OLO algorithm $\wtcF$ is set to
$\sT_{f}^*(\sZ)$ instead of $\sT_{f}^*$. In particular, the first
equality in the proof of Theorem \ref{thm:pseudonorm_approachability}
is still true when $\sT_{f}^*$ is replaced by $\sT_{f}^*(\sZ)$, since
$\frac{1}{T}\sum_{t}\wtu(p_t, \ell_t) \in \sZ$.

There is one other issue we must deal with: it is possible that
$\sT_{f}^*(\sZ)$ is significantly larger than $\sT_{f}^*$, and
therefore an OLO algorithm with domain $\sT_{f}^*(\sZ)$ might incur
more regret than that of $\sT_{f}^{*}$. In fact, there are cases where
the set $\sT_{f}^*(\sZ)$ is unbounded. Nonetheless, the following
lemma shows that for OLO with a quadratic regularizer, we will never
encounter very large values of $\wttheta$.

\begin{lemma}
\label{lemma:bound_extension}
Let $R(\wttheta) = ||\wttheta||^2$, and fix a $z \in \sZ$ and $\eta >
0$. Let

$$\wttheta_{opt} = \argmin_{\wttheta \in \sT_{f}^*(\sZ)}\left(\eta
R(\wttheta) - \langle \wttheta, z \rangle\right).$$

\noindent
Then $||\wttheta_{opt}|| \leq \diam(\sT_{f}^*)$.
\end{lemma}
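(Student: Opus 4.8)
The plan is to recognize $\wttheta_{opt}$ as a Euclidean projection and then control it through the projection residual. Completing the square, $\eta R(\wttheta) - \langle\wttheta,z\rangle = \eta\,\|\wttheta - c\|^2 - \eta\,\|c\|^2$ with $c := z/(2\eta)$, so $\wttheta_{opt}$ is the projection $\Pi_{\sT_f^*(\sZ)}(c)$ of $c$ onto the closed convex set $\sT_f^*(\sZ)$ (closed, being an intersection of halfspaces; nonempty, since it contains $\sT_f^*$). Set $g := c - \wttheta_{opt}$, so that $\wttheta_{opt} = c - g$.

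The first — and I expect hardest — step is to show that the residual $g$ lands back in the cone $\sZ$. I would argue by contradiction using a separating hyperplane: we may assume $\sZ$ closed, since by continuity of $f$ the set $\sT_f^*(\sZ)$ depends only on $\overline{\sZ}$; if $g\notin\sZ$, pick $w$ in the polar cone $\sZ^{\circ}=\{w : \langle w,z'\rangle\le 0\ \forall z'\in\sZ\}$ with $\langle g,w\rangle>0$. Because $\langle w,z'\rangle\le 0$ on $\sZ$, the point $\wttheta_{opt}+tw$ remains in $\sT_f^*(\sZ)$ for all $t\ge0$, yet $\|\wttheta_{opt}+tw-c\|^2 = \|g\|^2 - 2t\langle g,w\rangle + t^2\|w\|^2 < \|g\|^2$ for small $t>0$, contradicting optimality of the projection. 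It is here that the cone structure of $\sZ$ (as opposed to a general convex set) is essential; it is also what will let Lemma~\ref{lem:dual_extension} be applied to $g$ in the next step.

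Next, the projection optimality condition $\langle g,\theta-\wttheta_{opt}\rangle\le0$ for all $\theta\in\sT_f^*(\sZ)$ says precisely that $\wttheta_{opt}$ attains $\sup_{\theta\in\sT_f^*(\sZ)}\langle g,\theta\rangle$; since $g\in\sZ$, Lemma~\ref{lem:dual_extension} identifies this supremum with $f(g)$, so $\langle\wttheta_{opt},g\rangle=f(g)$. Separately, $c\in\sZ$ (as $z\in\sZ$ and $\sZ$ is a cone) together with $\wttheta_{opt}\in\sT_f^*(\sZ)$ gives $\langle\wttheta_{opt},c\rangle\le f(c)$ straight from the definition of the extended dual set. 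Combining these and using $\wttheta_{opt}=c-g$,
\[
\|\wttheta_{opt}\|^2 = \langle\wttheta_{opt},c\rangle-\langle\wttheta_{opt},g\rangle \le f(c)-f(g) \le f(\wttheta_{opt}),
\]
the last inequality being the triangle inequality for the pseudonorm $f$ (with $c=g+\wttheta_{opt}$).

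To finish, $f$ is the support function of $\sT_f^*$ (this is Theorem~\ref{th:general-function} with $\sS=\{0\}$, equivalently Lemma~\ref{lem:dual_extension} with $\sZ=\Rset^{d}$), and every $\theta\in\sT_f^*$ satisfies $\|\theta\|\le\diam\sT_f^*$ since $0\in\sT_f^*$ (because $f\ge 0$). Hence by Cauchy--Schwarz $f(\wttheta_{opt})=\sup_{\theta\in\sT_f^*}\langle\theta,\wttheta_{opt}\rangle\le(\diam\sT_f^*)\,\|\wttheta_{opt}\|$, so $\|\wttheta_{opt}\|^2\le(\diam\sT_f^*)\,\|\wttheta_{opt}\|$, and dividing by $\|\wttheta_{opt}\|$ (the case $\wttheta_{opt}=0$ being trivial) yields $\|\wttheta_{opt}\|\le\diam\sT_f^*$. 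The only genuinely nontrivial ingredient is the first step, showing $g\in\sZ$; everything afterwards is routine inner-product bookkeeping plus the pseudonorm axioms.
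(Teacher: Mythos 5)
Your argument is correct, but it is a genuinely different — and considerably heavier — route than the paper's. The paper simply compares $\wttheta_{opt}$ against the reference point $\wttheta' = \argmax_{\wttheta\in\sT_f^*}\langle\wttheta,z\rangle$: Lemma~\ref{lem:dual_extension} gives $\langle\wttheta',z\rangle = f(z) = \sup_{\theta\in\sT_f^*(\sZ)}\langle\theta,z\rangle \geq \langle\wttheta_{opt},z\rangle$, while optimality of $\wttheta_{opt}$ over the larger set (which contains $\wttheta'$) gives $\eta R(\wttheta_{opt}) - \langle\wttheta_{opt},z\rangle \leq \eta R(\wttheta') - \langle\wttheta',z\rangle$; combining, $R(\wttheta_{opt})\leq R(\wttheta')$, and $\|\wttheta'\|\leq\diam\sT_f^*$ since $0\in\sT_f^*$. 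In effect, because the unregularized problem over the \emph{smaller} set already attains the best possible linear term, enlarging the feasible set can only drive the regularizer down, not up. Your proof instead recasts $\wttheta_{opt}$ as the Euclidean projection of $c=z/(2\eta)$, establishes the Moreau-type decomposition $c=\wttheta_{opt}+g$ with $g\in\overline{\sZ}$ via a polar-cone separation, and then closes with the pseudonorm triangle inequality and a Cauchy--Schwarz bound on the support function. This is all sound — including your observations that $\sT_f^*(\sZ)$ may be replaced by $\sT_f^*(\overline{\sZ})$ and that $0\in\sT_f^*$ — and it brings out geometric structure (the residual landing in the cone) that the paper's proof leaves implicit; but it invokes the bipolar theorem, the projection variational inequality, and Lemma~\ref{lem:dual_extension} twice (for $g$ and for $\Rset^{nm}$) where the paper's argument needs Lemma~\ref{lem:dual_extension} once, for $z$, and nothing else beyond rearranging the optimality inequality.
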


Lemma \ref{lemma:bound_extension} therefore implies that a
quadratic-regularized OLO algorithm $\wtcF$ will only output values of
$\wttheta$ with $||\wttheta|| \leq \diam(\sT_{f}^*)$, and therefore we
can safely set $\sX = \sT_{f}^* \cap \{\theta \mid ||\theta|| \leq
\rho\}$, where $\rho = \Theta(\diam(\sT_{f}^*))$ is an efficiently
computable upper bound on $\diam(\sT_{f}^*)$. This guarantees that
$\diam(\sX) = O(\diam(\sT_{f}^*))$, and thus the regret guarantees of
Theorem \ref{thm:poly_regret} remain unchanged.

\subsubsection{Constructing a membership oracle}

We will now demonstrate how to use our regret oracle to construct a
membership oracle for the expanded set $\sX$ defined in the previous
section. We first show that it is possible to check for membership in
$\sZ$ (and further, when $z \in \sZ$, write $z$ as a convex
combination of the generators of $\sZ$).

\begin{lemma}
\label{lem:z_membership}
Given a point $z \in \Rset^{nm}$, we can efficiently check whether $z
\in \sZ$. If $z \in \sZ$, we can also efficiently write $z$ in the
form $\sum_{k=1}^{m}\alpha_k \wtu(p_k, \ell_k)$ (for an explicit
choice of $p_k$, $\ell_k$, and $\alpha_k$).
\end{lemma}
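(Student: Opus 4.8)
The plan is to recognize that $\sZ = \cone(\wtu(\sP, \sL))$, and that checking membership in this cone (and if so, expressing $z$ as a conic combination of generators) is a convex feasibility problem for which we can use the ellipsoid method, provided we have an appropriate separation oracle. First I would recall the structure of the generators: every point of $\sZ$ is of the form $\alpha \wtu(p, \ell) = \alpha (p \ell^\top)$ flattened, for $\alpha \ge 0$, $p \in \sP$, $\ell \in \sL$, so $\sZ$ is the conic hull of the set of rank-one matrices $\{p\ell^\top : p \in \sP, \ell \in \sL\}$. The key simplification comes from the orthant-generating assumption on $\sL$: since $\lambda_j e_j \in \sL$ for each $j \in [m]$, the point $\wtu(p, \lambda_j e_j)$ equals $\lambda_j$ times the matrix with $p$ in column $j$ and zeros elsewhere. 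Taking conic combinations of these over $j$, the cone $\sZ$ contains every matrix whose $j$-th column is a nonnegative multiple of some element of $\cone(\sP)$; conversely, since $\ell \in \sL \subseteq [0,\infty)^m$, we have $p\ell^\top = \sum_j \ell_j\, p\, e_j^\top$, which is a sum of column-supported matrices each in $\cone(\sP)$ (scaled). Hence $\sZ$ is exactly the set of $n \times m$ matrices each of whose columns lies in $\cone(\sP)$.

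Given this characterization, I would reduce membership-in-$\sZ$ to $m$ independent membership-in-$\cone(\sP)$ tests, one per column of $z$ (viewed as an $n \times m$ matrix). Membership in $\cone(\sP)$ can be decided with a separation oracle built from the separation oracle for $\sP$: to test whether a column $c \in \Rset^n$ lies in $\cone(\sP)$, one checks whether $c$ can be written as $\beta q$ with $q \in \sP$, $\beta \ge 0$; this is a linear feasibility problem over $(\beta, q)$, solvable via the ellipsoid method in $\poly(n)$ time using the separation oracle for $\sP$ (and using the boundedness of $\sP$, which is contained in $[-1,1]^n$, to bound the search region and handle the origin case separately). If each column $c_j$ of $z$ is found to lie in $\cone(\sP)$, the ellipsoid computation also returns the representation $c_j = \beta_j q_j$ with $q_j \in \sP$; then $z = \sum_{j=1}^m \beta_j\, q_j e_j^\top = \sum_{j=1}^m (\beta_j/\lambda_j)\, \wtu(q_j, \lambda_j e_j)$, which is the desired expression $\sum_k \alpha_k \wtu(p_k, \ell_k)$ with $\alpha_k = \beta_k/\lambda_k$, $p_k = q_k$, and $\ell_k = \lambda_k e_k$ — matching the claimed form with exactly $m$ terms.

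The main obstacle I anticipate is the careful handling of the conic (as opposed to polytopal) membership test: $\cone(\sP)$ need not be closed or full-dimensional, and the origin is always in it, so the ellipsoid-based feasibility test must be set up to avoid these degeneracies — e.g., by working with the cross-section $\{(\beta, q) : q \in \sP, \beta \ge 0, \text{scaled appropriately}\}$ and treating $c = 0$ as a trivial case. A secondary technical point is confirming that the separation oracle for $\sP$ (which, as stated in the paper, returns a hyperplane $h$ with $\langle p, h\rangle > \max_{p' \in \sP}\langle p', h\rangle$) can be lifted to a separation oracle for the lifted feasibility region in the $(\beta, q)$ space; this is routine but must be stated. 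Everything else is bookkeeping: the decomposition $p\ell^\top = \sum_j \ell_j p e_j^\top$ and the orthant-generating property do all the real work in reducing the problem to something the oracles can handle.
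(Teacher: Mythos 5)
Your overall strategy matches the paper's: use the orthant-generating property of $\sL$ to decompose $z$ column by column (equivalently, write $z = \sum_k \wtu(z_k, e_k)$ uniquely), show $z \in \sZ$ iff each column lies in $\cone(\sP)$, test each column against $\cone(\sP)$ via convex optimization with the separation oracle for $\sP$, and then assemble the explicit conic combination using the $\lambda_k e_k \in \sL$ generators. That reduction is exactly right and is what the paper does.

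However, there is a flaw in the cone-membership subroutine as you state it. You claim that checking whether $c = \beta q$ with $q \in \sP$, $\beta \ge 0$ ``is a linear feasibility problem over $(\beta, q)$.'' It is not: the constraint $c = \beta q$ is bilinear in $(\beta, q)$, so the feasible set is nonconvex and the ellipsoid method cannot be applied directly in that parameterization. Your remark about ``working with the cross-section'' does not remove this nonconvexity. The fix, which is how the paper phrases it, is to eliminate $q$ altogether: for $c \neq 0$, $q$ is forced to be a positive rescaling of $c$, so one should search over the single scalar $\beta > 0$ for which $\beta c \in \sP$ (this is your $1/\beta$). The set of feasible $\beta$ is a convex subset of $\Rset_{>0}$, for which the separation oracle for $\sP$ immediately gives a separation oracle, and the boundedness $\sP \subseteq [-1,1]^n$ bounds the search region. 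Handle $c = 0$ trivially. Once the subroutine is corrected to the one-dimensional convex program, your final decomposition $z = \sum_{j} (\beta_j/\lambda_j)\,\wtu(q_j, \lambda_j e_j)$ is correct and equivalent (up to the change of variable $\beta \leftrightarrow 1/\beta$) to the paper's.
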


Note that expressing $z$ in the form $\sum_{k=1}^{m}\alpha_k \wtu(p_k,
\ell_k)$ allows us to directly apply our efficient regret oracle (with
$R = m$). We therefore gain the following optimization oracle as a
corollary.

\begin{corollary}
\label{cor:eval_max}
Given an efficient regret oracle, for any $z \in \sZ$ we can efficiently (in time $\poly(n, m)$) compute $\max_{i \in [d]} \langle v_i, z \rangle$.
\end{corollary}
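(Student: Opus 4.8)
The plan is to simply compose the two preceding results. Given $z \in \sZ = \cone(\wtu(\sP, \sL))$, Lemma~\ref{lem:z_membership} lets us efficiently (in $\poly(n,m)$ time) produce an explicit decomposition $z = \sum_{k=1}^{m} \alpha_k \wtu(p_k, \ell_k)$ with each $\alpha_k \geq 0$ and each $(p_k, \ell_k) \in \sP \times \sL$. Since each component $v_i$ of the coefficient vectors satisfies $u_i(p,\ell) = \tri{v_i, \wtu(p,\ell)}$ by the defining property of the basis map $\wtu$ (namely that every bilinear function on $\sP \times \sL$ is a linear functional of $\wtu(p,\ell)$), linearity of the inner product gives
\[
\tri{v_i, z} = \tri*{v_i, \sum_{k=1}^{m} \alpha_k \wtu(p_k, \ell_k)}
= \sum_{k=1}^{m} \alpha_k \tri{v_i, \wtu(p_k, \ell_k)}
= \sum_{k=1}^{m} \alpha_k u_i(p_k, \ell_k).
\]
Taking the maximum over $i \in [d]$ on both sides,
\[
\max_{i \in [d]} \tri{v_i, z} = \max_{i \in [d]} \sum_{k=1}^{m} \alpha_k u_i(p_k, \ell_k),
\]
and the right-hand side is exactly the quantity returned by the efficient regret oracle when invoked on the $R = m$ action/loss pairs $(p_k, \ell_k)$ with weights $\alpha_k$. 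Since the regret oracle runs in time $\poly(n,m,R) = \poly(n,m)$ and the decomposition step runs in $\poly(n,m)$ time, the total running time is $\poly(n,m)$.

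There is essentially no obstacle here — the corollary is a direct packaging of Lemma~\ref{lem:z_membership} (which supplies the decomposition and whose proof carries the real work, e.g.\ an LP or separation-oracle argument over the generators of the cone) with the definition of the efficient regret oracle. The only point that needs a line of care is that the decomposition has at most $m$ terms, so that $R = m$ and the oracle's $\poly(n,m,R)$ running time is genuinely polynomial in $n$ and $m$; this is precisely what Lemma~\ref{lem:z_membership} asserts (one expects this to come from a Carathéodory-type bound, since $\wtu(\sP,\sL)$ naturally sits in an $m$-dimensional slice once $p$ is fixed appropriately, or from the structure of the cone exploited in that lemma's proof). I would state the proof in two sentences: invoke Lemma~\ref{lem:z_membership} to decompose $z$, then apply the regret oracle to the decomposition and note the displayed identity above.
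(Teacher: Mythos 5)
Your proof is correct and matches the paper's intended argument exactly: invoke Lemma~\ref{lem:z_membership} to produce the explicit decomposition $z = \sum_{k=1}^{m}\alpha_k \wtu(p_k,\ell_k)$, then pass that $R=m$ weighted collection to the efficient regret oracle, with the displayed identity $\tri{v_i,z} = \sum_k \alpha_k u_i(p_k,\ell_k)$ bridging the two. The paper treats this corollary as immediate from the same composition, so there is nothing to reconcile.
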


Finally, we will show that given these two results, we can efficiently
construct a membership oracle for our set $\sX$. To do this, we will
need the following fact (loosely stated; see Lemma
\ref{lemma:convex-opt} in Appendix \ref{app:fenchel-duality} for an
accurate statement): it is possible to minimize a convex function over
a convex set as long as one has an evaluation oracle for the function
and a membership oracle for the convex set.

\begin{lemma}
\label{lem:x_membership}
Given an efficient regret oracle for $u$, we can construct an efficient membership oracle for the set $\sX$.
\end{lemma}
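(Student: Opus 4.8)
The plan is to build a membership oracle for $\sX = \sT_{f}^* \cap \{\theta \mid \norm{\theta} \leq \rho\}$ by reformulating the defining condition of $\sT_{f}^*$ as a convex optimization problem that can be attacked with the ellipsoid-style machinery referenced in Lemma~\ref{lemma:convex-opt}. Recall that $\wttheta \in \sT_{f}^*$ iff $\tri{\wttheta, x} \leq f(x) = \max_{i \in [d]}\tri{v_i, x}$ for all $x \in \Rset^{nm}$. By positive homogeneity of both sides, it suffices to check this on any fixed generating set; in particular, since $\frac{1}{T}\sum_t \wtu(p_t,\ell_t)$ always lies in $\sZ = \cone(\wtu(\sP,\sL))$, the relevant object is really $\sT_f^*(\sZ)$, and membership there is equivalent to $\sup_{x \in \sZ}\paren*{\tri{\wttheta, x} - f(x)} \leq 0$. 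The first step is therefore to write this supremum as a concrete optimization: after intersecting with the $\rho$-ball (which is harmless by Lemma~\ref{lemma:bound_extension}), it is enough to check $g(\wttheta) \coloneqq \sup_{x \in \sZ,\, \norm{x} \leq \rho'}\paren*{\tri{\wttheta, x} - \max_{i}\tri{v_i,x}} \leq 0$ for a suitable normalization $\rho'$.

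Second, I would observe that the inner problem defining $g(\wttheta)$ is a convex maximization (difference of a linear function and the convex function $f$), which we cannot directly solve; instead, rewrite it via the standard trick of lifting $f$: $\tri{\wttheta,x} - f(x) = \min_{i \in [d]}\paren*{\tri{\wttheta - v_i, x}}$, so $g(\wttheta) = \sup_{x \in \sZ,\,\norm{x}\leq \rho'}\min_{i}\tri{\wttheta - v_i, x}$. By Sion's minimax theorem (the set over $x$ is convex compact, the objective is linear in $x$ and concave — indeed linear over the simplex — in the mixing weights over $i$), this equals $\min_{q \in \Delta_d}\sup_{x}\tri{\wttheta - \sum_i q_i v_i, x}$, i.e.\ $\rho' \cdot \min_{q \in \Delta_d}\norm{\wttheta - \sum_i q_i v_i}$ (using $\sup$ over a ball; the $\sZ$ constraint only makes this smaller, which is exactly the direction we need for $\sT_f^*(\sZ) \supseteq \sT_f^*$). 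Thus checking $g(\wttheta) \leq 0$ reduces to deciding whether $\wttheta$ lies in (a possibly $\sZ$-relaxed version of) $\conv\curl*{v_1,\dots,v_d}$, which by Lemma~\ref{lemma:DualSet} is precisely $\sT_f^*$.

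Third — and this is where the regret oracle enters — the quantity $\max_i \tri{v_i, x}$ is, by Corollary~\ref{cor:eval_max}, efficiently evaluable for any $x \in \sZ$, since we can write such $x$ as $\sum_k \alpha_k \wtu(p_k,\ell_k)$ via Lemma~\ref{lem:z_membership} and then invoke the regret oracle. Combined with the $\poly(n)$ separation oracle for $\sP$ (which, together with Lemma~\ref{lem:z_membership}, gives a membership oracle for $\sZ$, hence for the convex compact feasible set $\sZ \cap \{\norm{x} \leq \rho'\}$), we have both an evaluation oracle for the concave-in-$x$ objective $x \mapsto \tri{\wttheta,x} - \max_i\tri{v_i,x}$ and a membership oracle for its feasible region. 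Applying Lemma~\ref{lemma:convex-opt} to maximize this concave function lets us compute $g(\wttheta)$ to arbitrary accuracy in $\poly(n,m)$ time; we then declare $\wttheta \in \sX$ iff $g(\wttheta) \leq 0$ and $\norm{\wttheta} \leq \rho$, and when it fails, the maximizing $x^*$ (or a separating hyperplane returned by the ellipsoid method) serves as the separation certificate needed for the membership oracle to be useful downstream.

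The main obstacle I expect is handling the $\sZ$-relaxation cleanly: we can only evaluate $\max_i \tri{v_i,x}$ at points of $\sZ$, so the natural object we can test membership in is $\sT_f^*(\sZ)$ rather than $\sT_f^*$ itself, and one must argue — as the excerpt already does in the ``Extending the dual set'' subsection via Lemmas~\ref{lem:dual_extension} and~\ref{lemma:bound_extension} — that this is exactly what the downstream OLO algorithm needs, so no loss is incurred. A secondary technical point is controlling the approximation error from the convex-optimization oracle (it returns an $\eps$-approximate optimizer, not an exact one) and propagating that slack through so that the final approachability/regret bound of Theorem~\ref{thm:poly_regret} degrades only by a lower-order additive term; this is routine but needs the accuracy parameter to be set to something like $\poly(1/T)$.
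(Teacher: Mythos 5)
Your approach is essentially the paper's: reduce membership in $\sX$ to the condition $\sup_{z \in \sZ}\bigl(\tri{\wttheta,z} - \max_i\tri{v_i,z}\bigr) \leq 0$ (equivalently, minimizing the convex function $h(z) = \max_i\tri{v_i,z}-\tri{\wttheta,z}$ over $\sZ$), and solve this via Lemma~\ref{lemma:convex-opt} using Lemma~\ref{lem:z_membership} for $\sZ$-membership and Corollary~\ref{cor:eval_max} for evaluation, with $\sT_f^*(\sZ)$ standing in for $\sT_f^*$ as justified by Lemmas~\ref{lem:dual_extension} and~\ref{lemma:bound_extension}. The only wrinkle is your second paragraph: the claim that the inner problem ``cannot be directly solved'' is wrong --- maximizing the concave function $x \mapsto \tri{\wttheta,x}-f(x)$ over a convex compact set is exactly what Lemma~\ref{lemma:convex-opt} does, and you apply it correctly in the third paragraph --- so the Sion-minimax detour and the reinterpretation via $\min_{q\in\Delta_d}\norm{\wttheta-\sum_i q_i v_i}$ are harmless but unnecessary, and in fact that formula tests membership in $\sT_f^*$ rather than in the set $\sT_f^*(\sZ)$ your oracles can actually reach.
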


\subsubsection{Implementing regret minimization}

Equipped with this membership oracle, we can now state and prove our main theorem.

\begin{theorem}
\label{thm:efficient_alg}
Assume that:

\begin{enumerate}
    \item The convex set $\sL$ is orthant-generating.
    % We have a membership/separation oracle $\sO_{\ell}$ for the convex set $\sL$.
    \item We have an efficient separation oracle for the convex set $\sP$.
    \item We have an efficient regret oracle for the regret function $u$.
\end{enumerate}

\noindent
Then it is possible to implement the algorithm of Theorem \ref{thm:poly_regret} in $\poly(n, m)$ time per round.
\end{theorem}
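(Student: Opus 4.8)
The plan is to show that each round of Algorithm~\ref{algo:algorithm-pseudonorm-approach}, instantiated with $\wtcF$ the quadratic-regularized FTRL algorithm over the bounded extended dual set $\sX = \sT_f^*(\sZ) \cap \{\wttheta : \norm*{\wttheta} \le \rho\}$ (as in the proof of Theorem~\ref{thm:poly_regret} and the preceding subsections), can be executed in $\poly(n,m)$ time, up to an approximation error that is negligible for the regret bound. There are exactly two nontrivial computational steps per round: (i) choosing $p_t \in \sP$ with $\langle\wttheta_t, \wtu(p_t,\ell)\rangle \le 0$ for all $\ell \in \sL$, and (ii) the FTRL update $\wttheta_{t+1} = \argmin_{\wttheta \in \sX}\bigl(\eta\norm*{\wttheta}^2 + \sum_{s \le t}\langle\wttheta, y_s\rangle\bigr)$ with $y_s = -\wtu(p_s,\ell_s)$; everything else (forming $y_t$, averaging) is explicit linear algebra on $nm$-dimensional vectors, since $\wtu(p,\ell)_{i,j}=p_i\ell_j$ is explicit.

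For step (i), the key point is that the orthant-generating assumption collapses the semi-infinite family of constraints indexed by $\ell \in \sL$ into $m$ linear inequalities. Writing $\wttheta_t^{(j)} \in \Rset^n$ for the vector with $(\wttheta_t^{(j)})_i = (\wttheta_t)_{ij}$, we have $\langle\wttheta_t, \wtu(p,\ell)\rangle = \sum_{j=1}^m \ell_j\,\langle\wttheta_t^{(j)}, p\rangle$. Because $\sL \subseteq [0,\infty)^m$, the condition $\langle\wttheta_t^{(j)}, p\rangle \le 0$ for every $j \in [m]$ already implies $\langle\wttheta_t,\wtu(p,\ell)\rangle\le 0$ for all $\ell\in\sL$; conversely, if $\langle\wttheta_t^{(j)}, p\rangle > 0$ for some $j$, then plugging in the point $\lambda_j e_j \in \sL$ violates the constraint. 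Hence step (i) amounts to finding $p \in \sP$ minimizing the convex function $\phi(p) = \max_{j \in [m]} \langle\wttheta_t^{(j)}, p\rangle$, which has an $O(nm)$-time evaluation oracle; combined with the separation oracle for $\sP$, Lemma~\ref{lemma:convex-opt} lets us compute $\hat p$ with $\phi(\hat p) \le \min_{p \in \sP}\phi(p) + \epsilon$ in time $\poly(n, m, \log(1/\epsilon))$. One checks that the separability guarantee of Lemma~\ref{lemma:low-dim-separability}(2) extends to any $\wttheta\in\sT_f^*(\sZ)$ (its proof only uses inequalities $\langle\wttheta,z\rangle\le f(z)$ for $z\in\cone(\wtu(\sP,\sL))=\sZ$), so $\min_{p\in\sP}\phi(p)\le 0$ and therefore $\langle\wttheta_t,\wtu(\hat p,\ell)\rangle \le \phi(\hat p)\,\norm*{\ell}_1 \le m\epsilon$ for all $\ell\in\sL$.

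For step (ii), we must minimize a strongly convex quadratic---an explicit function with an $O(nm)$-time evaluation oracle, since each $y_s$ is an explicitly stored $nm$-vector---over $\sX = \sT_f^*(\sZ)\cap\{\norm*{\wttheta}\le\rho\}$. Lemma~\ref{lem:x_membership} supplies an efficient membership oracle for $\sX$, and $0 \in \sX$ with $\diam(\sX) \le 2\rho = \poly(n,m)$, so Lemma~\ref{lemma:convex-opt} again computes $\wttheta_{t+1}$ to accuracy $\epsilon$ in $\poly(n,m,\log(1/\epsilon))$ time (restricting to the affine hull of $\sX$ if it is not full-dimensional). Thus each round runs in $\poly(n,m,\log(1/\epsilon))$ time. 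Finally, for the error budget, take $\epsilon = 1/T^2$: inspecting the proof of Theorem~\ref{thm:pseudonorm_approachability}, replacing $\langle\wttheta_t,\wtu(p_t,\ell_t)\rangle\le 0$ by $\le m\epsilon$ adds at most $m\epsilon$ to $\Reg(\wtcA)/T$, hence at most $m\epsilon T = o(1)$ overall; and a standard stability bound for quadratic-regularized FTRL shows that using $\epsilon$-approximate minimizers $\wttheta_{t+1}$ perturbs $\Reg(\wtcF)$ by at most $\poly(n,m)\cdot\epsilon T = o(1)$. Both corrections are negligible against the $\Theta(\sqrt T)$ regret of Theorem~\ref{thm:poly_regret}, which is therefore preserved.

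The step I expect to be the main obstacle is (i): recognizing that ``orthant-generating'' is precisely the hypothesis that converts the ``$\forall\,\ell\in\sL$'' quantifier into a polynomial-size linear system in $p$, and then being careful that (a) a feasible $p_t$ exists even though $\wttheta_t$ lives in the enlarged set $\sT_f^*(\sZ)$ rather than $\sT_f^*$, and (b) the separation oracle yields only approximate feasibility, so one must check (as above) that $\epsilon$-feasibility suffices for the regret analysis. Step (ii) is comparatively routine given the membership oracle of Lemma~\ref{lem:x_membership} and the convex-optimization machinery of Lemma~\ref{lemma:convex-opt}.
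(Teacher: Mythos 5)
Your proposal follows the same high-level structure as the paper's proof: you identify the same two nontrivial per-round steps (choosing $p_t$, and the FTRL update), you use the orthant-generating hypothesis in the same way to collapse the $\forall\,\ell\in\sL$ quantifier to $m$ linear inequalities $\langle\wttheta_t^{(j)},p\rangle\le 0$, and you use Lemma~\ref{lem:x_membership} together with Lemma~\ref{lemma:convex-opt} to carry out the quadratic FTRL step. So this is not a different route.

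That said, your version supplies two details the paper's proof leaves implicit, and both are worth having. First, you observe that Lemma~\ref{lemma:low-dim-separability}(2) as stated only guarantees feasibility of step~(i) for $\wttheta\in\sT_f^*$, whereas the algorithm actually produces $\wttheta_t$ in the enlarged set $\sT_f^*(\sZ)$; you correctly note that the minimax argument still goes through because the relevant points $\wtu(p,\ell)$ all lie in $\sZ=\cone(\wtu(\sP,\sL))$, where the defining inequality $\langle\wttheta,z\rangle\le f(z)$ still holds. Second, the paper states that a separation oracle for $\sP$ suffices to ``find'' $p_t$ satisfying $m$ extra linear constraints, but oracle-based convex methods only deliver approximate feasibility; your reformulation as minimizing $\phi(p)=\max_j\langle\wttheta_t^{(j)},p\rangle$ to additive precision $\epsilon$, together with the bound $\langle\wttheta_t,\wtu(\hat p,\ell)\rangle\le m\epsilon$ and the $\epsilon=1/T^2$ error budget in the regret analysis, closes that gap cleanly (and you apply the analogous care to approximate FTRL minimizers). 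Neither change alters the conclusion, but your write-up is the more rigorous one.
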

\begin{proof}
There are two steps of unclear computational complexity in the
description of the algorithm in Section \ref{sec:pseudonorm_alg}: step
2a, where we find a $p_t$ such that $\langle \wttheta_t, \wtu(p_t,
\ell)\rangle \leq 0$ for all $\ell \in \sL$, and step 2d, where we
have to run our OLO subalgorithm over $\sT_{f}^*$ (which we
specifically instantiate as FTRL with a quadratic regularizer).

We begin by describing how to perform step 2a efficiently. Fix a
$\wttheta_t$. Note that since $\sL$ is orthant-generating, to check
whether $\langle \wttheta_t, \wtu(p_t, \ell)\rangle \leq 0$ for all
$\ell \in \sL$, it suffices to check whether $\wtu(p_t, e_k) \leq 0$
for each unit vector $e_k \in \Rset^{m}$. Therefore, to find a valid
$p_t$ we must find a point $p_t \in \sP$ that satisfies an additional
$m$ explicit linear constraints. Since we have an efficient separation
oracle for $\sP$, this is possible in $\poly(n, m)$ time.

To implement the FTRL algorithm $\cF$ over the set $\sX = \sT_{f}^*$
with a quadratic regularizer, each round we must find the minimizer of
the convex function $g(\wttheta) = R(\wttheta) + \eta\sum_{s=1}^{t-1}
\langle \wttheta, \wtu(p_s, \ell_s)\rangle$ over the convex set
$\sX$. To do this, it suffices to exhibit an evaluation oracle for $g$
and a membership oracle for $\sX$. To evaluate $g$, note that we
simply need to be able to compute $R(\wttheta)$ (which is a Euclidean
distance in $\Rset^{mn}$) and $\sum_{s=1}^{t-1} \langle \wttheta,
\wtu(p_s, \ell_s)\rangle$ which we can do in $O(mn)$ time by keeping
track of the cumulative sum $\sum_{s=1}^{t-1} \wtu(p_s, \ell_s)$ and
computing a single inner product in $\Rset^{mn}$. A membership oracle
for $\sX$ is provided by Lemma \ref{lem:x_membership}.
\end{proof}

\subsection{Recovering $\ell_{\infty}$ regret via maxent regularizers}

One interesting aspect of this reduction to pseudonorm approachability
is that, in some cases, the regret bound of $O(\sqrt{T\log d})$
achievable via $\ell_{\infty}$-approachability (Corollary
\ref{cor:linf_regret_minimization}) sometimes outperforms the regret
bound of $O(\lambda mn\sqrt{T})$ achieved by pseudonorm
approachability (Theorem \ref{thm:poly_regret}), for example when $d =
\poly(m, n)$. Of course, this comparison is not completely fair: in
both cases there is flexibility to specify the underlying OLO
algorithm, and the $O(\sqrt{T\log d})$ bound uses an FTRL algorithm
with a negative entropy regularizer, whereas our pseudonorm
approachability bound uses an FTRL algorithm with a quadratic
regularizer. After all, there are well-known cases (e.g.\ for OLO over
a simplex, as in $\ell_{\infty}$-approachability) where the negative
entropy regularizer leads to exponentially better (in $d$) regret
bounds than the quadratic regularizer.

In this subsection, we will show that there exists a different
regularizer for pseudonorm approachability -- one we call a
\textit{maxent regularizer} -- which recovers the $O(\sqrt{T\log d})$
regret bound of Corollary \ref{cor:linf_regret_minimization}.  In
doing so, we will also better understand the parallels between the
regret minimization algorithm $\cA$ (which works via reduction to
$\ell_{\infty}$-approachability in a $d$-dimensional space) and the
regret minimization algorithm $\wtcA$ (which works via reduction to
pseudonorm approachability in an $nm$-dimensional space).

Let $V$ be the $d$-by-$nm$ matrix whose $i$th row equals $v_i$. Note
that $V$ allows us to translate between analogous concepts/quantities
for $\cA$ and $\wtcA$ in the following way.

\begin{lemma}
  \label{lemma:translation}
The following statements are true:

\begin{itemize}

\item For any $p \in \sP$ and $\ell \in \sL$, $u(p, \ell) = V\wtu(p, \ell)$.

\item The dual set $\sT_{f}^* = V^{T}\Delta_d$ (i.e., $\wttheta \in
  \sT_{f}^*$ iff there exists a $\theta \in \Delta_d$ such that
  $\wttheta = V^{T}\theta$).

\item If $\theta \in \Delta_d$ and $\wttheta = V^{T}\theta \in
  \sT_{f}^*$, then for any $p \in \sP$ and $\ell \in \sL$,
  $\tri{\theta, u(p, \ell)} = \tri{\wttheta, \wtu(p, \ell)}$.

\item Fix $\theta_t \in \Delta_d$ and let $\wttheta = V^{T}\theta$. If
  $p_t$ satisfies $\tri{\theta_t, u(p_t, \ell)} \leq 0$ for all $\ell
  \in \sL$, then $p_t$ also satisfies $\tri{\wttheta_t, \wtu(p_t,
    \ell)} \leq 0$ for all $\ell \in \sL$.
\end{itemize}
\end{lemma}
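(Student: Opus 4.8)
The plan is to establish the four bullet points in order, each reducing to a short definition-chase, with the only non-trivial input being Lemma~\ref{lemma:DualSet} (which identifies $\sT_f^*$ with $\conv\{v_1,\dots,v_d\}$). Nothing here presents a real obstacle; the whole lemma is bookkeeping that sets up the translation dictionary between the $d$-dimensional world of $\cA$ and the $nm$-dimensional world of $\wtcA$.

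For the first item, recall that $v_i$ was defined as the coefficient vector satisfying $u_i(p,\ell) = \tri{v_i, \wtu(p,\ell)}$ for all $p \in \sP$, $\ell \in \sL$ (this is exactly what it means for the monomials of $\wtu$ to form a basis for biaffine functions). Since $v_i$ is the $i$th row of $V$, the $i$th component of $V\wtu(p,\ell)$ is precisely $\tri{v_i, \wtu(p,\ell)} = u_i(p,\ell)$; matching all $d$ coordinates gives $u(p,\ell) = V\wtu(p,\ell)$. For the second item, observe that $V^{T}\theta = \sum_{i=1}^{d} \theta_i v_i$, so $V^{T}\Delta_d = \{\sum_i \theta_i v_i : \theta \in \Delta_d\} = \conv\{v_1,\dots,v_d\}$, which equals $\sT_f^*$ by Lemma~\ref{lemma:DualSet}. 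In particular this also certifies that $\wttheta = V^{T}\theta$ genuinely lies in $\sT_f^*$ (not merely in some dual extension), so the hypothesis ``$\wttheta \in \sT_f^*$'' in the third item is automatic.

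For the third item, combine the first item with the adjoint identity $\tri{\theta, Vx} = \tri{V^{T}\theta, x}$: for any $p,\ell$ we get $\tri{\theta, u(p,\ell)} = \tri{\theta, V\wtu(p,\ell)} = \tri{V^{T}\theta, \wtu(p,\ell)} = \tri{\wttheta, \wtu(p,\ell)}$. The fourth item is then an immediate corollary: applying the third item at each $\ell \in \sL$, the scalar $\tri{\theta_t, u(p_t,\ell)}$ equals $\tri{\wttheta_t, \wtu(p_t,\ell)}$, so if the former is $\le 0$ for all $\ell$ then so is the latter. The only thing to be mildly careful about is the row-vs-column convention for $V$ (i.e.\ which adjoint direction is needed), and confirming via the second bullet that $V^{T}\theta$ lands in $\sT_f^*$ — both already handled above — so I expect no genuine difficulty in carrying this out.
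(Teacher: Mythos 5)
Your proof is correct and matches the paper's own argument essentially line for line: the first item from the definition of $v_i$ as the coefficient vector of $u_i$, the second from Lemma~\ref{lemma:DualSet} via $V^{T}\theta = \sum_i \theta_i v_i$, the third from the adjoint identity $\tri{\theta, Vx} = \tri{V^{T}\theta, x}$ combined with the first, and the fourth as an immediate consequence of the third. No gaps and no meaningful deviation from the paper's approach.
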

\begin{proof}
The first statement follows from the fact that $V\wtu(p, \ell)_i =
\langle v_i, \wtu(p, \ell)\rangle = u(p, \ell)_i$. The second claim
follows as a consequence of Lemma \ref{lemma:DualSet} (that
$\sT_{f}^*$ is the convex hull of the vectors $v_i$). The third claim
follows from the first claim: $\langle \theta, u(p, \ell)\rangle =
\langle \theta, V\wtu(p, \ell) \rangle = \langle V^{T}\theta, \wtu(p,
\ell) \rangle = \langle \wttheta, \wtu(p, \ell) \rangle$. Finally, the
fourth claim follows directly from the third claim.
\end{proof}

Now, fix a regret minimization problem (specified by $\sP$, $\sL$, and
$u$) and consider the execution of algorithm $\cA$ on some specific
loss sequence $\bm{\ell} = (\ell_1, \ell_2, \dots, \ell_{T})$. Each
round $t$, $\cA$ runs the entropy-regularized FTRL algorithm $\cF$ to
generate a $\theta_t \in \Delta_d$ (as a function of the actions and
losses up until round $t$) and then uses $\theta_t$ to select a $p_t$
that satisfies $\tri{\theta_t, u(p_t, \ell)} \leq 0$ for all $\ell \in
\sL$. If we execute algorithm $\wtcA$ for the same regret minimization
problem on the same loss sequence, each round $t$, $\wtcA$ runs some
(to be determined) FTRL algorithm $\wtcF$ to generate a $\wttheta_t
\in \sT_{f}^*$, and then uses $\wttheta_t$ to select a $p_t$ that
satisfies $\tri{\wttheta_t, \wtu(p_t, \ell)}$ for all $\ell \in
\sL$. Lemma \ref{lemma:translation} shows that if $\wttheta_t =
V^{T}\theta_t$ for each $t$, then both algorithms will generate
\textit{exactly the same} sequence of actions in response to this loss
sequence\footnote{Technically, there may be some leeway in terms of
which $p_t$ to choose that e.g. satisfies $\tri{\theta_t, u(p_t,
  \ell)} \leq 0$ for all $\ell \in \sL$, and the two different
procedures could result in different choices of $p_t$. But if we break
ties consistently (e.g. add the additional constraint of choosing the
$p_t$ that maximizes the inner product of $p_t$ with some generic
vector), then both procedures will produce the same value of $p_t$.},
and hence the same regret.

The question then becomes: how do we design an OLO algorithm $\wtcF$
that outputs $\wttheta_t = V^{T}\theta_t$ each round that $\cF$ would
output $\theta_t$? Recall that if $\cF$ is an FTRL algorithm with
regularizer $R:\Delta_d \to \Rset$, then

\[
\theta_t = \argmin_{\theta \in \Delta_d}\left(R(\theta) +
\sum_{s=1}^{t-1}\langle \theta, u(p_t, \ell_t)\rangle\right).
\]
Define $\wtR:\sT_{f}^* \to \Rset$ via

\begin{equation}
    \wtR(\wttheta) = \min_{\substack{\theta \in \Delta_d\\V^{T}\theta = \wttheta}}R(\theta).
\end{equation}

We claim that if we let $\wtcF$ be the FTRL algorithm that uses
regularizer $\wtR$, then $\wtcF$ will output our desired sequence of
$\wttheta_t$.

\begin{lemma}
The following equality holds for the output $\wttheta_t$:
$$\wttheta_t = \argmin_{\wttheta \in \sT_{f}^*}\left(\wtR(\wttheta) +
\sum_{s=1}^{t-1}\langle \wttheta, \wtu(p_t, \ell_t)\rangle\right) =
V^{T}\theta_t.$$
\end{lemma}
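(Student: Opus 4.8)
The plan is to recognize the claimed identity as a standard \emph{partial minimization} (infimal-projection) fact: the FTRL objective that $\wtcF$ minimizes over $\sT_{f}^*$ is exactly the FTRL objective that $\cF$ minimizes over $\Delta_d$, precomposed with the linear map $V^T$, and since $V^T$ carries $\Delta_d$ onto $\sT_{f}^*$, their minimizers correspond under $V^T$.

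Concretely, I would first extract the two consequences of Lemma~\ref{lemma:translation} that make the objectives comparable: (i) $\sT_{f}^* = V^T\Delta_d$, so $V^T$ maps $\Delta_d$ \emph{onto} $\sT_{f}^*$; and (ii) $\langle V^T\theta, \wtu(p_s,\ell_s)\rangle = \langle \theta, u(p_s,\ell_s)\rangle$ for every $\theta \in \Delta_d$ and every $s$. Write $H(\theta) := R(\theta) + \sum_{s=1}^{t-1}\langle \theta, u(p_s,\ell_s)\rangle$, so that $\theta_t = \argmin_{\theta\in\Delta_d} H(\theta)$ by definition of $\cF$. Fixing $\wttheta\in\sT_{f}^*$, I would pull the term $\sum_s\langle\wttheta,\wtu(p_s,\ell_s)\rangle$ (constant with respect to the variable $\theta$ of the minimization defining $\wtR$) inside that minimization and apply (ii):
\begin{align*}
\wtR(\wttheta) + \sum_{s=1}^{t-1}\langle \wttheta, \wtu(p_s,\ell_s)\rangle
&= \min_{\substack{\theta\in\Delta_d\\ V^T\theta = \wttheta}} \Bigl( R(\theta) + \sum_{s=1}^{t-1}\langle V^T\theta, \wtu(p_s,\ell_s)\rangle \Bigr)\\
&= \min_{\substack{\theta\in\Delta_d\\ V^T\theta = \wttheta}} H(\theta),
\end{align*}
where the inner minimum is attained since $\{\theta\in\Delta_d : V^T\theta = \wttheta\}$ is compact and $H$ continuous. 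Minimizing over $\wttheta\in\sT_{f}^* = V^T\Delta_d$ and collapsing the two nested minimizations using surjectivity (i),
\begin{align*}
\min_{\wttheta\in\sT_{f}^*}\Bigl(\wtR(\wttheta) + \sum_{s=1}^{t-1}\langle \wttheta, \wtu(p_s,\ell_s)\rangle\Bigr) = \min_{\theta\in\Delta_d} H(\theta) = H(\theta_t),
\end{align*}
and the outer minimizer is $V^T$ applied to any inner minimizer, i.e.\ $V^T\theta_t$ attains the left-hand side.

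To upgrade ``$V^T\theta_t$ is \emph{a} minimizer'' to the stated equality $\wttheta_t = V^T\theta_t$, I would argue uniqueness: if $\wttheta^\star$ also attains the minimum, pick $\theta^\star\in\Delta_d$ with $V^T\theta^\star = \wttheta^\star$ (possible by (i)) realizing the inner minimum; then $H(\theta^\star) = H(\theta_t)$, and since $R$ (the negative entropy) is strictly convex on $\Delta_d$, $H$ has a unique minimizer, so $\theta^\star = \theta_t$ and $\wttheta^\star = V^T\theta_t$. (If one only assumes $R$ convex, the same computation gives $V^T\theta_t\in\argmin$, which is all that is needed downstream.)

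I do not expect a genuine obstacle here — the argument is essentially bookkeeping. The one point that deserves care, and which I would flag explicitly, is the legitimacy of the partial-minimization interchange: we need $V^T$ to map $\Delta_d$ \emph{onto} $\sT_{f}^*$ so that no minimizing mass is lost in passing from the $\wttheta$-problem to the $\theta$-problem (this is exactly Lemma~\ref{lemma:DualSet}/Lemma~\ref{lemma:translation}), and we need the infimum defining $\wtR$ to be attained so the substitution inside the minimum is valid (guaranteed by compactness of $\Delta_d$ and continuity of $R$).
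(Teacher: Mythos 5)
Your proof is correct and follows essentially the same route as the paper: write $\wtR$ as a partial minimization over $\{\theta \in \Delta_d : V^T\theta = \wttheta\}$, pull the linear term inside the inner minimum, use Lemma~\ref{lemma:translation} to convert $\langle \wttheta, \wtu(p_s,\ell_s)\rangle$ to $\langle \theta, u(p_s,\ell_s)\rangle$, and collapse the nested minimizations via the surjectivity $\sT_{f}^* = V^T\Delta_d$. The only thing you add beyond the paper's proof is the explicit treatment of attainment of the inner infimum and of uniqueness of the minimizer (via strict convexity of $R$), which the paper's chain of $\argmin$ equalities passes over without comment.
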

\begin{proof}
Note that we can write

\begin{eqnarray*}
\wttheta_t &=& \argmin_{\wttheta \in \sT_{f}^*}\left(\wtR(\wttheta) + \sum_{s=1}^{t-1}\langle \wttheta, \wtu(p_t, \ell_t)\rangle\right)\\
&=& \argmin_{\wttheta \in \sT_{f}^*}\left(\left(\min_{\substack{\theta \in \Delta_d\\V^{T}\theta = \wttheta}}R(\theta)\right) + \sum_{s=1}^{t-1}\langle \wttheta, \wtu(p_t, \ell_t)\rangle\right)\\
&=& \argmin_{\wttheta \in \sT_{f}^*}\left(\min_{\substack{\theta \in \Delta_d\\V^{T}\theta = \wttheta}}\left(R(\theta) + \sum_{s=1}^{t-1}\tri{\theta, u(p_t, \ell_t)}\right)\right) \\
&=& V^{T}\argmin_{\theta \in \Delta_d}\left(R(\theta) + \sum_{s=1}^{t-1}\tri{\theta, u(p_t, \ell_t)}\right) \\
&=& V^{T}\theta_t.
\end{eqnarray*}

\noindent
The third equality follows from the fact that if $\wttheta =
V^{T}\theta$, then $\tri{\theta, u(p, \ell)} = \tri{\wttheta, \wtu(p,
  \ell)}$ (by Lemma~\ref{lemma:translation}).
\end{proof}

\begin{corollary}
\label{cor:maxent_regularizer}
Let $\cF$ be an FTRL algorithm over $\Delta_{d}$ with regularizer $R$,
and let $\wtcF$ be an FTRL algorithm over $\sT_{f}^*$ with regularizer
$\wtR$. Then $\Reg(\cA) = \Reg(\wtcA)$.
\end{corollary}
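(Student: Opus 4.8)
The plan is to show that on every fixed loss sequence $\bm{\ell} \in \sL^T$ the two algorithms $\cA$ and $\wtcA$ generate \emph{identical} action sequences $\bm{p}$; the claim $\Reg(\cA) = \Reg(\wtcA)$ then follows immediately by taking the worst case over $\bm{\ell}$ in the definition \eqref{eq:regret_def2}, since $\Reg(\bm{p},\bm{\ell})$ depends only on $\bm{p}$ and $\bm{\ell}$.

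I would proceed by induction on the round $t$, maintaining the invariant that after round $t-1$ both algorithms have observed the same action/loss history and that the FTRL iterates satisfy $\wttheta_t = V^{T}\theta_t$. The base case $t=1$ is the preceding lemma applied with an empty loss history (concretely, $\theta_1 = \argmin_\theta R(\theta)$ and $\wttheta_1 = \argmin_{\wttheta} \wtR(\wttheta) = V^{T}\theta_1$). For the inductive step, the preceding lemma gives $\wttheta_t = V^{T}\theta_t$; then the fourth bullet of Lemma \ref{lemma:translation} shows that any $p_t \in \sP$ chosen by $\cA$ in step 2 — i.e.\ satisfying $\tri{\theta_t, u(p_t,\ell)} \le 0$ for all $\ell \in \sL$ — automatically satisfies $\tri{\wttheta_t, \wtu(p_t,\ell)} \le 0$ for all $\ell \in \sL$, hence is a legal choice for $\wtcA$ as well (and conversely, using the third bullet, the two feasible sets coincide). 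Fixing a common deterministic tie-breaking rule among feasible actions, both algorithms select the same $p_t$, then observe the same $\ell_t$, which restores the invariant for round $t+1$.

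The only genuinely new thing to verify — and the step I would expect to need the most care, though it is routine — is that $\wtcF$ is a well-defined FTRL algorithm: that the constrained minimum defining $\wtR(\wttheta)$ is attained (immediate, since $\Delta_d$ is compact and $R$ is continuous on it) and that $\wtR$ is convex (partial minimization of the jointly convex function $(\theta, \wttheta) \mapsto R(\theta)$ subject to the affine constraint $V^{T}\theta = \wttheta$), so that the $\argmin$ appearing in the preceding lemma is meaningful. With that in hand, the induction closes, the action sequences coincide on every $\bm{\ell}$, and therefore $\Reg(\cA) = \Reg(\wtcA)$.
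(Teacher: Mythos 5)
Your proof is correct and follows the same route the paper takes: the preceding lemma (establishing $\wttheta_t = V^{T}\theta_t$ for FTRL with regularizer $\wtR$) combined with Lemma~\ref{lemma:translation} and a consistent tie-breaking rule forces both algorithms to produce identical action sequences on every loss sequence, hence identical regret. You make the inductive structure explicit and additionally verify well-definedness and convexity of $\wtR$ (via partial minimization of a jointly convex function), which the paper leaves implicit.
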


We now consider the specific case where $R$ is the negentropy
function; i.e., for $\theta \in \Delta_{d}$, $R(\theta) = -H(\theta) =
\sum_{i=1}^{d}\theta_i \log \theta_i$. For this choice of $R$, $\wtR$
becomes

\begin{equation}\label{eq:maxent_reg}
    \wtR(\wttheta) = -\max_{\substack{\theta \in \Delta_d\\V^{T}\theta = \wttheta}}H(\theta).
\end{equation}

In other words, $-\wtR(\wttheta)$ is the maximum entropy of any distribution $\theta$ in $\Delta_d$ that satisfies the $nm$ linear constraints imposed by $V^{T}\theta = \wttheta$. This is exactly an instance of the Maxent problem studied in
\citep{BergerDellaPietraDellaPietra1996, 
Rosenfeld1996, 
DellaPietraDellaPietraLafferty1997, 
DudikPhillipsSchapire2007} and
\citep{MohriRostamizadehTalwalkar2018}[chapter~12]. 
\ignore{MM: I think it is fine to comment this out.
Unfortunately, it is not clear whether it is possible to efficiently
solve this problem (i.e., compute $\wtR$); however, there are multiple
things we can say the general solution that may be useful in specific
cases.  }

It is known \citep{DellaPietraDellaPietraLafferty1997,
  DudikPhillipsSchapire2007,MohriRostamizadehTalwalkar2018} that the
entropy maximizing distribution is a \textit{Gibbs distribution}. In
particular, the $\theta$ maximizing the expression in
\eqref{eq:maxent_reg} satisfies (for some real constants
$\lambda_{jk}$ for $j \in [n]$, $k \in [m]$)

\begin{equation}
\label{eq:gibbs}
\theta_i = \frac{\exp\left(\sum_{j=1}^{n}\sum_{k=1}^{m}
  \lambda_{jk}v_{ijk}\right)}{Z(\lambda)},
\end{equation}

\noindent
where $Z(\lambda)$ (the ``partition function'') is defined via 

\begin{equation}
  Z(\lambda) = \sum_{i=1}^{d}\exp\left(\sum_{j=1}^{n}\sum_{k=1}^{m}
  \lambda_{jk}v_{ijk}\right).
\end{equation}
Generally, there is exactly one choice of $\lambda_{jk}$ which results
in $V^{T}\theta = \wttheta$ (since there are $mn$ free variables and
$mn$ linear constraints). For this optimal $\lambda$, it is known that
the maximum entropy is given by $\langle \lambda, \wttheta \rangle -
\log Z(\lambda)$. If it is possible to solve this system for
$\lambda_{jk}$ and evaluate $Z(\lambda)$ efficiently, we can then
evaluate $\wtR$ efficiently. In Section \ref{sec:swap} we will see how
to do this for the specific case of swap regret (where we are helped
by the fact that \eqref{eq:gibbs} guarantees that $\theta$ is a
product distribution).

Regardless of how we compute the maxent regularizer $\wtR$, efficient
computation leads to an efficient $O(\sqrt{T\log d})$ regret
minimization algorithm.

\begin{corollary}
  \label{cor:efficient_entropy}
If there exists an efficient ($\poly(n, m)$ time) algorithm for
computing the maxent regularizer $\wtR(\wttheta)$, then there exists a
regret minimization algorithm $\wtcA$ for $u$ with regret $O(\sqrt{T
  \log d})$ and that can be implemented in $\poly(n, m)$ time per
round.
\end{corollary}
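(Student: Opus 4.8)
The plan is to combine the exact regret-transfer identity of Corollary~\ref{cor:maxent_regularizer} with the efficiency machinery behind Theorem~\ref{thm:efficient_alg}. Concretely, we run algorithm $\wtcA$ (Algorithm~\ref{algo:algorithm-pseudonorm-approach}) with its OLO subroutine $\wtcF$ set to FTRL over $\sT_{f}^*$ with the maxent regularizer $\wtR$ of \eqref{eq:maxent_reg}, i.e.\ the lift $\wtR(\wttheta) = \min_{\theta\in\Delta_d,\, V^{T}\theta=\wttheta} R(\theta)$ of the negative entropy $R(\theta)=\sum_i\theta_i\log\theta_i$. The regret bound is then immediate: by Corollary~\ref{cor:maxent_regularizer}, $\Reg(\wtcA)=\Reg(\cA)$ with $\cA$ the $\ell_\infty$-approachability algorithm driven by negentropy FTRL, and Corollary~\ref{cor:linf_regret_minimization} gives $\Reg(\cA)=O(\sqrt{T\log d})$. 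So all the work is in the per-round complexity.

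For that, I would retrace the proof of Theorem~\ref{thm:efficient_alg} under its standing structural assumptions ($\sL$ orthant-generating, $\sP$ equipped with an efficient separation oracle), substituting the quadratic regularizer by $\wtR$. Step~2a of Algorithm~\ref{algo:algorithm-pseudonorm-approach} is unchanged: orthant-generation of $\sL$ reduces ``$\tri{\wttheta_t,\wtu(p_t,\ell)}\le 0$ for all $\ell\in\sL$'' to $m$ explicit linear constraints on $p_t$, satisfiable in $\poly(n,m)$ time via the separation oracle. The one genuinely new step is the FTRL update (step~2d),
\[
\wttheta_{t} = \argmin_{\wttheta \in \sT_{f}^*}\left(\wtR(\wttheta) + \sum_{s=1}^{t-1}\tri{\wttheta,\, \wtu(p_s,\ell_s)}\right),
\]
a minimization of a convex function (the partial minimization $\wtR$ of the convex $R$ under a linear constraint is convex) over the bounded convex set $\sT_{f}^*=\conv\{v_1,\dots,v_d\}$ (Lemma~\ref{lemma:DualSet}); crucially, unlike the quadratic case, $\sT_f^*$ is already bounded, so the dual-set-extension step is not needed. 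By Lemma~\ref{lemma:convex-opt} it suffices to supply (i) an evaluation oracle for the objective and (ii) a membership oracle for $\sT_{f}^*$. For (i), the linear term is computed in $O(nm)$ time by carrying the running sum $\sum_{s}\wtu(p_s,\ell_s)$, and $\wtR(\wttheta)$ is computed in $\poly(n,m)$ time by hypothesis. For (ii), $\wtR(\wttheta)=-\max\{H(\theta):\theta\in\Delta_d,\,V^{T}\theta=\wttheta\}$ is finite exactly when the feasible set is nonempty, i.e.\ exactly when $\wttheta\in V^{T}\Delta_d=\sT_f^*$ (Lemma~\ref{lemma:translation}), so the assumed $\wtR$-oracle (returning $+\infty$ on infeasible inputs) is itself a membership oracle for $\sT_f^*$ --- in particular, the regret oracle of Theorem~\ref{thm:efficient_alg} is not needed in this pipeline.

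The remaining point to verify is that the convex program in step~2d returns precisely $\wttheta_t = V^{T}\theta_t$, as required for the regret transfer through Corollary~\ref{cor:maxent_regularizer}; this holds because the minimizer is unique. Indeed, by the identity established in the (unlabeled) lemma preceding Corollary~\ref{cor:maxent_regularizer}, the minimum of the displayed objective over $\sT_f^*$ equals $\min_{\theta\in\Delta_d}\bigl(R(\theta)+\sum_{s<t}\tri{\theta,u(p_s,\ell_s)}\bigr)$, whose minimizer $\theta_t$ is unique by strict convexity of $R$ on $\Delta_d$; hence any minimizer of the $\wttheta$-problem is $V^{T}\theta$ for a $\theta$ attaining this value, forcing $\theta=\theta_t$. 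I expect the main obstacle to be precisely this plumbing --- checking that the maxent regularizer slots into the Theorem~\ref{thm:efficient_alg} pipeline with $\sT_f^*$ used directly (no extension) and with the single $\wtR$-oracle doing double duty as objective evaluator and membership test --- together with the routine, and here suppressed, caveats attached to Lemma~\ref{lemma:convex-opt} (solving convex programs only to inverse-polynomial accuracy, boundedness of $\sT_f^*$, availability of an interior point, etc.).
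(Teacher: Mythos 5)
Your proof is correct and follows the route the paper intends but leaves implicit: Corollary~\ref{cor:maxent_regularizer} plus Corollary~\ref{cor:linf_regret_minimization} give the $O(\sqrt{T\log d})$ bound, and efficiency comes from retracing Theorem~\ref{thm:efficient_alg}'s pipeline with the FTRL step driven by the assumed $\wtR$-oracle via Lemma~\ref{lemma:convex-opt}. You also fill in details the paper glosses over: that the $\wtR$-oracle doubles as a membership oracle for $\sT_f^*$ (so the regret oracle and the $\sT_f^*(\sZ)$-extension machinery of Lemmas~\ref{lem:dual_extension}--\ref{lem:x_membership} are not needed here), and that uniqueness of the minimizer --- via strict convexity of negative entropy and the lifting identity --- is what guarantees the computed $\wttheta_t$ really is $V^{T}\theta_t$, making Corollary~\ref{cor:maxent_regularizer} applicable. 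These are sound and genuinely clarifying additions, not deviations from the paper's approach.
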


\section{Applications}

\subsection{Swap regret}
\label{sec:swap}

Recall that in the setting of swap regret, the action set is $\sP =
\Delta_K$ (distributions over $K$ actions), the loss set is $\sL = [0,
  1]^K$, and the \textit{swap regret} of playing a sequence of actions
$\bm{p}$ on a sequence of losses $\bm{\ell}$ is given by

$$\SwapReg(\bm{p}, \bm{\ell}) = \max_{\pi: [K] \to [K]}
\sum_{t=1}^{T} \sum_{i=1}^{K}(p_{ti}\ell_{ti} -
p_{ti}\ell_{t\pi(i)}).$$

In words, swap regret compares the total loss achieved by this
sequence of actions with the loss achieved by any transformed action
sequence formed by applying an arbitrary ``swap function'' to the
actions (i.e., always playing action $\pi(i)$ instead of action
$i$). Swap regret minimization can be directly written as an
$\ell_{\infty}$-approachability problem for the bilinear function
$u\colon \sP \times \sL \to \Rset^{K^K}$ where
\[
u_{\pi}(p, \ell) = \sum_{i=1}^{K}(p_i\ell_i - p_i\ell_{\pi(i)}).
\]

\noindent
(here we index $u$ by the $K^K$ functions $\pi: [K] \to
    [K]$). Note that the negative orthant is indeed separable with
    respect to $u$ since if we let $p^*(\ell) = \argmin_{p \in \sP}
    \langle p, \ell\rangle$, $u_{\pi}(p, \ell) \leq 0$ for any swap
    function $\pi$.

We can now apply the theory developed in Section
\ref{sec:main}. First, since $m = n = K$ and the maximum absolute
value of any coefficient in $u$ is $1$, Theorem \ref{thm:poly_regret}
immediately results in a swap regret algorithm with regret
$O(K^2\sqrt{T})$. Moreover, since we can write

$$\max_{\pi: [K] \to [K]} \sum_{t=1}^{T} \sum_{i=1}^{K}
(p_{ti}\ell_{ti} - p_{ti}\ell_{t\pi(i)}) = \sum_{i=1}^{K}
\max_{j \in [K]}\sum_{t=1}^{T} (p_{ti}\ell_{ti} - p_{ti}\ell_{tj}),$$

\noindent
we can compute $\SwapReg$ efficiently. By Theorem
\ref{thm:efficient_alg}, we can therefore implement this
$O(K^2\sqrt{T})$-regret algorithm efficiently (in $\poly(K)$ time per
round). We can improve upon this regret bound by noting that
$\diam(\sP) = O(1)$, $\diam(\sL) = O(\sqrt{K})$, and $D_z =
O(\sqrt{K})$ (the coefficient vector $v_{\pi}$ corresponding to
$u_{\pi}(p, \ell)$ contains at most $2K$ coefficients that are $\pm
1$).  It follows that $D_z(\diam \sP)(\diam \sL) = O(K)$, and it
follows from the first part of Theorem \ref{thm:poly_regret} that the
regret of the aforementioned algorithm is actually only
$O(K\sqrt{T})$.

This is still a factor of approximately $O(\sqrt{K})$ larger than the
optimal $O(\sqrt{TK\log K})$ bound. To achieve the optimal regret
bound, we will show how to compute the maxent regularizer for swap
regret (and hence can efficiently implement an $O(\sqrt{T\log d}) =
O(\sqrt{TK\log K})$ algorithm via Corollary
\ref{cor:efficient_entropy}). Recall that the maxent regularizer
$\wtR(\wttheta)$ is defined for $\wttheta \in V^{T}\Delta_d$ is the
negative of the maximum entropy of a distribution $\theta \in
\Delta_{d}$ that satisfies $\wttheta = V^{T}\theta$. For our problem,
$\wttheta$ is $(K^2)$-dimensional, and this imposes the following
linear constraints on $\theta$ (which we view as a distribution over
swap functions $\pi$):

\begin{eqnarray*}
\wttheta_{ij} &=& - \sum_{\pi \mid \pi(i) = j} \theta_{\pi} \hspace{10mm} \mbox{ for $j \neq i$}\\
\wttheta_{ii} &=& 1 - \sum_{\pi \mid \pi(i) = i} \theta_{\pi}.
\end{eqnarray*}

Now, by the characterization presented in \eqref{eq:gibbs}\footnote{In
particular, in the case of swap regret we have that $v_{\pi, i, j} =
\bm{1}(k = j) - \bm{1}(k = \pi(j))$. Since the first term
($\bm{1}(k=j)$) does not depend on $\pi$, we can ignore its
contribution to $\theta_{\pi}$ (it cancels from the numerator and
denominator).}, we know the entropy maximizing $\theta$ satisfies (for
some $K^2$ constants $\lambda_{ij}$)

\begin{eqnarray*}
\theta_{\pi} = 
\frac{\exp\left(\sum_{i=1}^{K}\lambda_{i \pi(i)}\right)}{\sum_{\pi'}\exp\left(\sum_{i=1}^{K}\lambda_{i \pi'(i)}\right)} 
= \frac{\prod_{i=1}^{K}\exp\left(\lambda_{i \pi(i)}\right)}{\sum_{\pi'}\prod_{i=1}^{K}\exp\left(\lambda_{i \pi'(i)}\right)} 
= \prod_{i=1}^{K} \frac{\exp(\lambda_{i\pi(i)})}{\sum_{j=1}^{K}\exp(\lambda_{ij})}.
\end{eqnarray*}

In particular, this shows that the entropy maximizing distribution
$\theta$ is a product distribution over the set of swap functions
$\pi$ where for each $i \in [K]$ the value of $\pi(i)$ is chosen
independently. Moreover, from $\wttheta$ we can recover the overall
marginal probability $q_{ij} = \Pr_{\pi \sim \theta}[\pi(i) = j]$ (it
is $-\wttheta_{ij}$ if $j \neq i$ and $1 - \wttheta_{ii}$ if $j =
i$). The entropy of this product distribution can therefore be written
as:
\[
H(\theta) = \sum_{i=1}^{K} H(\pi (i)) = -\sum_{i=1}^{K} \sum_{j=1}^{K}
q_{ij}\log q_{ij}.
\]
Our regularizer $\wtR(\wttheta)$ is simply $-H(\theta)$ and can
clearly be efficiently computed as a function of $\wttheta$. It
follows from Corollary \ref{cor:efficient_entropy} that there exists
an efficient ($\poly(K)$ time per round) regret minimization algorithm
for swap regret that incurs $O(\sqrt{T\log d}) = O(\sqrt{TK\log K})$
worst-case regret.

Finally, we briefly remark that the pseudonorm $f$ we construct here
is closely related to the $\ell_{1, \infty}$ \textit{group norm}
defined over $K$-by-$K$ square matrices as the $\ell_{1}$ norm of
vector formed by the $\ell_{\infty}$ norms of the rows (i.e.,
$||M||_{1, \infty} = \sum_{i=1}^{K}\max_{j \in [K]} |M_{ij}|$). This
is not unique to swap regret; in many of our applications, the
relevant pseudonorm can be thought of as a composition of multiple
smaller norms (often $\ell_1$ or $\ell_{\infty}$ norms).

\subsection{Procrustean swap regret}

To illustrate the power of Theorem \ref{thm:efficient_alg}, we present
a toy variant of swap regret where the learner must compete against an
\emph{infinite} set of swap functions (in particular, all orthogonal
linear transformations of their sequence of actions) and yet can do
this efficiently while incurring low (polynomial in the dimension of
their action set) regret.

In this problem, the action set $\sP = \{x \in \Rset^{n} \,;\,
||x||_{2} \leq 1\}$ is the set of unit vectors with norm at most 1 and
the loss set $\sL = [-1, 1]^n$. The learner would like to minimize the
following notion of regret:
\begin{equation}
  \label{eq:procrustes}
  \PReg(\bm{p}, \bm{\ell}) = \max_{Q \in O(n)}
  \sum_{t=1}^{T} (\langle p_t, \ell_t \rangle - \langle Qp_t, \ell_t \rangle).
\end{equation}
Here, $O(n)$ is the set of all orthogonal $n$-by-$n$ matrices. We call
this notion of regret \emph{Procrustean swap regret} due to its
similarity with the \emph{orthogonal Procrustes problem} from linear
algebra, which (loosely) asks for the orthogonal matrix which most
closely maps one sequence of points $\bm{x}_1$ onto another sequence
of points $\bm{x}_2$ (in our setting, we intuitively want to map
$\bm{x}$ onto $-\bm{\ell}$ to minimize the loss of our benchmark). See
\cite{gower2004procrustes} for a more detailed discussion of the
Procrustes problem. Regardless, note that we can compute
$\PReg(\bm{p}, \bm{\ell})$ efficiently, since we have an efficient
membership oracle for the convex hull $\conv(O(n))$ of the set of
orthogonal matrices (specifically, an $n$-by-$n$ matrix $M$ belongs to
$\conv(O(n))$ iff all its singular values are at most $1$ in absolute
value).

In our approachability framework, we can capture this notion of regret
with the bilinear function $u(p, \ell)$ with coordinates indexed by $Q
\in O(n)$ with $u(p, \ell)_Q = \langle p, \ell \rangle - \langle Qp,
\ell \rangle$ (which is separable with respect to the negative
orthant, since for $p = -\ell/||\ell||_2$, $u(p, \ell)_Q \leq
0$). Since all the conditions of Theorem \ref{thm:efficient_alg} hold,
there is an efficient learning algorithm which incurs at most
$O(\sqrt{nT})$ Procrustean swap regret (in particular, $\diam \sP =
O(1)$, $\diam \sL = O(\sqrt{n})$, and $D_z = O(1)$).

\subsection{Converging to Bayesian correlated equilibria}
\label{sec:bayes_correlated}

Swap regret has the nice property that if in a repeated $n$-player
normal-form game, all players run a low-swap regret algorithm to
select their actions, their time-averaged strategy profile will
converge to a correlated equilibrium (indeed, this is one of the major
motivations for studying swap regret).

In repeated Bayesian games (games where each player has private
information drawn independently from some distribution each round) the
analogue of correlated equilibria is \textit{Bayesian correlated
  equilibria}. Playing a repeated Bayesian game requires a contextual
learning algorithm, which can observe the private information of the
player (the ``context'') and select an action based on
this. \cite{mansour2022} show that there is a notion of regret (that
we call $\BSReg$) such that if all learners are playing an algorithm
with low $\BSReg$, then over time they converge on average to a
Bayesian correlated equilibrium. However, while \cite{mansour2022}
provide a low $\BSReg$ algorithm, their algorithm is not provably
efficient (it requires finding the fixed point of a system of
quadratic equations); by applying our framework, we show that it is
possible to obtain a polynomial-time, low-$\BSReg$ algorithm for this
problem.

Formally, we study the following full-information contextual online
learning setting. As before, there are $K$ actions, but there are now
$C$ different contexts (``types''). Every round $t$, the adversary
specifies a loss function $\ell_t \in \sL = [0, 1]^{CK}$, where
$\ell_{t, i, c}$ represents the loss from playing action $i$ in
context $c$. Simultaneously, the learner specifies an action $p_t \in
\sP = \Delta([K])^C \subset \Rset^{CK}$ which we view as a function
mapping each context to a distribution over actions. Overall, the
learner receives expected utility $\sum_{c=1}^{C} \Pr_{\cC}[c]
\sum_{i=1}^{K} p_{t}(c)_i \ell_{t, i, c}$ this round (the learner's
context is drawn iid from the publicly known distribution $\cC$ each
round). In this formulation $\BSReg$ can be written in the form
\begin{equation}
\label{eq:bsreg_main1}
\BSReg
= \max_{\kappa, \pi_{c}}\sum_{t=1}^{T}\sum_{c=1}^{C} \Pr_{\cC}[c]
\sum_{i=1}^{K} \left(p_{t}(c)_i\ell_{t, i, c}
- p_{t}(\kappa(c))_{i}\ell_{t, \pi_{c}(i), c}\right),
\end{equation}
where the maximum is over all ``type swap functions'' $\kappa\colon [C]
\to [C]$ and $C$-tuples of ``action-deviation swap functions''
$\pi_{c}\colon [K] \to [K]$. It is straightforward to verify that
$\BSReg$ (as written in \eqref{eq:bsreg_main1}) can be written as an
$\ell_{\infty}$-approachability problem for a bilinear function $u\colon
\sP \times \sL \to \Rset^{d}$ for $d = C^C K^{KC}$. The theory
of $\ell_{\infty}$-approachability guarantees the existence of an
algorithm with $O(\sqrt{T\log d}) = O(\sqrt{T(C\log C + KC \log K)})$
regret, but this algorithm has time/space complexity $O(d)$ and is
very inefficient for even moderate values of $C$ or $K$.

Instead, in Appendix \ref{app:bayesian}, we show that $\BSReg$ can be
written in the form

$$\BSReg
= \sum_{c = 1}^{C} \max_{c' \in [C]} \sum_{i = 1}^{K} \max_{j \in [K]}
\sum_{t=1}^{T} \Pr_{\cC}[c] \left(p_{t}(c)_i\ell_{t, i, c}
- p_{t}(c')_{i}\ell_{t, j, c}\right).$$

\noindent
This allows us to evaluate $\BSReg$ in $\poly(C, K, T)$ time and apply
our pseudonorm approachability framework. Directly from Theorems
\ref{thm:poly_regret} and \ref{thm:efficient_alg}, we know that there
exists an efficient ($\poly(C, K)$ time per round) learning algorithm
that incurs at most $O(K^2C^2\sqrt{T})$ swap regret. As with swap
regret, we can tighten this bound somewhat by examining the values of
$\diam(\sX)$ and $\diam(\sY)$ and show that this algorithm actually
incurs at most $O(KC\sqrt{T})$ regret (details left to Appendix
\ref{app:bayesian}).

This is within approximately an $O(\sqrt{KC})$ factor of
optimal. Interestingly, unlike with swap regret, it is unclear if it
is possible to efficiently solve the relevant entropy maximization
problem for Bayesian swap regret (and hence achieve the optimal regret
bound). We pose this as an open question.

\begin{question}
Is it possible to efficiently (in $\poly(K, C)$ time) evaluate the
maximum entropy regularizer $\wtR(\wttheta)$ for the problem of
Bayesian swap regret?
\end{question}

\subsection{Reinforcement learning in constrained MDPs}
\label{sec:rl}

We consider episodic reinforcement learning in constrained Markov
decision processes (MDPs). Here, the agent receives a vectorial reward
(loss) in each time step and aims to find a policy that achieves a
certain minimum total reward (maximum total loss) in each dimension.
Approachability has been used to derive reinforcement learning
algorithms for constrained MDPs before
\citep{MiryoosefiBrantleyDaumeDudikSchapire2019, yu2021provably,
  MiryoosefiJin2021}, however, exclusively using $\ell_2$ geometry. As
a result, these methods aim to bound the $\ell_2$ distance to the
feasible set and the bounds scale with the $\ell_2$ norm of the reward
vector. This deviates from the more common, and perhaps more natural,
formulation for constrained MDPs studied in other works
\citep{efroni2020exploration,brantley2020constrained,ding2021provably}. Here,
each component of the loss vector is within a given range (e.g.  $[0,
  1]$) and the goal is to minimize the largest constraint violation
among all components. We will show that $\ell_\infty$-approachability
is the natural approach for this problem and yields algorithms that
avoid the $\sqrt{d}$ factor in the regret, where $d$ is the number of
constraints, that algorithms based on $\ell_2$ approachability suffer.
While the number of constraints $d$ can sometimes be small, there are
many applications where $d$ is large and a $\operatorname{poly}(d)$
dependency in the regret is undesirable, even when a computational
dependency on $d$ is manageable. For example, constraints may arise
from fairness considerations like demographic disparity that ensure
that the policy behaves similarly across all protected groups. This
would require a constraint for each pair of groups which could be very
many.

The formal problem setup is as follows. We consider an MDP $(\sX, \sA,
P, \curl*{\ell}_{t \in [T]})$ defined by a state space $\sX$, an
action set $\sA$, a transition function $P \colon \sX \times \sA
\times \sX \to [0, 1]$, where $P(x' | x, a)$ is the probability of
reaching state $x'$ when choosing action $a$ at state $x$, and
$\ell\colon \sX \times \sA \to [0, 1]^d$ the loss vector function with
$\ell = (\ell^1, \ldots, \ell^d)$.  We work with losses for
consistency with the other sections but our results also readily apply
to rewards.  To simplify the presentation, we will assume a
\emph{layered MDP} with $L + 1$ layers with $\sX = \bigcup_{l = 0}^L
\sX_l$, $\sX_i \cap \sX_j = \emptyset$ for $i \neq j$, and with $\sX_0
= \{x_0\}$ and $\sX_L = \{x_L\}$.

We define a (stochastic) policy $\pi$ as mapping from $\sA \times \sX
\to [0, 1]$ where $\pi(a | x)$ represents the probability of action
$a$ in state $x$. Given a policy $\pi$ and the transition probability
$P$, we define the \emph{occupancy measure} $\sfq^{P, \pi}$ as the
probability of visiting state-action pair $(x, a)$ when following
policy $P$ \citep{Altman1999,NeuGyorgySzepesvari2012}: $\sfq^{P,
  \pi}(x, a) = \P\bracket*{x, a \mid P, \pi}$.  We will denote by
$\Delta(P)$ the set of all occupancy measures, obtained by varying the
policy $\pi$. It is known that $\Delta(P)$ forms a polytope.

We consider the feasibility problem in CMDPs with stochastic rewards
and unknown dynamics. The loss vector $\ell\colon \sX \times \sA \to
[0, 1]^d$ is the same in all episodes, $\sL = \{ \ell \}$. Our goal is
to learn a policy $\pi \in \Pi$ such that $\langle \sfq^{P, \pi}, \ell
\rangle \leq \bdc $ for a given threshold vector $\bdc \in [0,
  L]^d$. The payoff function $u\colon \Delta(P) \times \sL \to
\Rset^d$ is defined as: $u(\sfp, \ell) = \bracket*{
\begin{smallmatrix}
\sfp \cdot  \ell^1 - c_1\\
\vdots\\
\sfp \cdot  \ell^d - c_d
\end{smallmatrix}}$.
The set $\sS = (-\infty, 0]^d$ is separable as long as there is a
policy that satisfies all constraints. Although we define the payoff
function $u$ in terms of occupancy measures $\Delta(P)$, they will be
implicit in the algorithm.

To aide the comparison of our approach to existing work in this
setting, we omit the dimensionality reduction with pseudonorms in
this application and directly work in the $d$-dimensional space. We
will analyze Algorithm~\ref{algo:algorithm-linf-approach} for
$\ell_\infty$ approachability, which can be implemented in MDPs by
adopting the following oracles from prior work
\citep{MiryoosefiBrantleyDaumeDudikSchapire2019} on CMDPs:
\begin{itemize}

\item \textbf{\textsc{BestResponse}-Oracle} For a given $\theta \in
  \Rset^d$, this oracle returns a policy $\pi$ that is
  $\epsilon_0$-optimal with respect to the reward function $(s,a)
  \mapsto -\langle \ell(s,a), \theta\rangle$.
  
\item \textbf{\textsc{Est}-Oracle} For a given policy $\pi$, this
  oracle returns a vector $z \in [0, L]^d$ such that $\|z - \langle
  \sfq^{P, \pi}, \ell \rangle\|_\infty \leq \epsilon_1$.
\end{itemize}

Consider first the case without approximation errors, $\epsilon_0 =
\epsilon_1 = 0$, for illustration. For a vector $\theta_t \in
\Delta_{d+1}$, let $\theta_t' \in \Rset^d$ be the vector that contains
only the first $d$ dimensions of $\theta_t$. When we call the
\textsc{BestResponse} oracle with vector $\theta_t'$, it returns a
policy $\pi_t$ such that its occupancy measure satisfies $\langle
\sfq^{P, \pi_t}, - \theta_t' \cdot \ell \rangle \geq \max_{\sfq^\star
  \in \Delta(P)} \langle \sfq^\star, - \theta_t' \cdot \ell \rangle$.
We can use this to show:
\begin{align*}
    \theta_t \cdot [u(\sfq^{P, \pi_t}, \ell), ~0]
    & = \theta_t' \cdot u(\sfq^{P, \pi_t}, \ell)
    = \langle \sfq^{P, \pi_t}, \theta_t' \cdot \ell \rangle
    - c \leq \langle \sfq^\star, \theta_t' \cdot \ell \rangle - c \leq 0
\end{align*}
where $\sfq^\star$ is the occupancy measure of a policy that satisfies
all constraints.

Thus $\sfq^{P, \pi_t}$ is a valid choice for $p_t$ in Line~3 of
Algorithm~\ref{algo:algorithm-linf-approach}. Passing the policy
$\pi_t$ to the \textsc{Est} yields $\hat z_t = u(\sfq^{P, \pi_t},
\ell) + c$; this is enough to compute
\begin{align*}
    y_t &= - [\hat z_t - c,~ 0] = -  [u(p_t, \ell), ~0]
\end{align*}
in Line~5 of
Algorithm~\ref{algo:algorithm-linf-approach}\footnote{Since FTRL with
negative entropy regularizer operates on the simplex with $\|
\theta_{t}\| = 1$, we pad the inputs with a zero dimension obtain an
OLO algorithm on the interior of $\Delta_{d}$.}. Finally, we obtain
$\theta_{t+1} \in \mathbb R^{d+1}$ by passing $y_1, y_2, \dots, y_t$ to
a negative entropy FTRL algorithm as the OLO algorithm $\cF$ (Line~6
of Alg.~\ref{algo:algorithm-linf-approach}).

Using similar steps as for Theorem~\ref{thm:linf_approachability} and
relying on the regret bound for $\cF$ in
Lemma~\ref{lem:negent_regularizer}, we can show the following
guarantee:

\begin{restatable}{proposition}{appropoinfty}
\label{prop:constrainedmdp}
Consider a constrained episodic MDP with horizon $L$, fixed loss
vectors $\ell$ with $\ell(s,a) \in [0, 1]^d$ for all state-action
pairs $(s,a) \in \sX \times \sA$ and a constraint threshold vector $c
\in [0, L]^d$.  Assume that there exists a feasible policy that
satisfies all constraints and let $\bar \pi$ be the mixture policy of
$\pi_1, \dots, \pi_T$ generated by the approach described above. Then
the maximum constraint violation of $\bar \pi$ satisfies
\begin{align*}
    \max_{i \in [d]} \{\sfq^{P, \bar \pi} \cdot \ell^i - c_i \}
    = d_{\infty}\left(\frac{1}{T}\sum_{t=1}^T u(\sfq^{P, \pi_t}, \ell), \sS \right)
    \leq O\left(L \sqrt{\frac{\log(d)}{T}}\right) + \epsilon_0 + 2\epsilon_1.
\end{align*}
\end{restatable}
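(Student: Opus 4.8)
The plan is to follow the proof of Theorem~\ref{thm:linf_approachability} almost verbatim, working directly in the $d$-dimensional payoff space, while carrying along the two oracle errors. First I would dispense with the middle equality. Since $\bar\pi$ is the uniform mixture of $\pi_1,\dots,\pi_T$ (at the start of an episode sample $t\in[T]$ uniformly and then run $\pi_t$ for the whole episode), occupancy measures mix linearly, so $\sfq^{P,\bar\pi} = \frac1T\sum_{t=1}^T\sfq^{P,\pi_t}$ and therefore $\sfq^{P,\bar\pi}\cdot\ell^i - c_i = \frac1T\sum_{t=1}^T u(\sfq^{P,\pi_t},\ell)_i$ for every $i$. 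Taking the maximum over $i\in[d]$ and recalling that $d_\infty(v,(-\infty,0]^d) = \max(\max_i v_i, 0)$ gives the asserted identity (read as the trivially true bound in the degenerate case where every constraint has slack, so the left side is negative while $d_\infty = 0$), with $\sS = (-\infty,0]^d$.

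For the inequality, abbreviate $p_t = \sfq^{P,\pi_t}$ and $\hat u_t = [u(p_t,\ell),0]\in\Rset^{d+1}$, so that the Est-oracle feedback is $y_t = -\hat u_t + \delta_t$ with $\|\delta_t\|_\infty\le\epsilon_1$. Applying Corollary~\ref{cor:quasidistance_olo} with $f = \|\cdot\|_\infty$ and $\sC = (-\infty,0]^d$ (whose polar cone is the nonnegative orthant, so $\sT^{\sC}_f = \ODelta_d$) gives $T\cdot d_\infty(\frac1T\sum_t u(p_t,\ell),\sS) = \sup_{\theta\in\ODelta_d}\sum_t\langle\theta,u(p_t,\ell)\rangle$. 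Identifying $\theta\in\ODelta_d$ with its padding $\tilde\theta = [\theta,\,1-\sum_i\theta_i]\in\Delta_{d+1}$, each summand is $\langle\tilde\theta,\hat u_t\rangle = -\langle\tilde\theta,y_t\rangle + \langle\tilde\theta,\delta_t\rangle$, so after inserting the FTRL iterates $\theta_t$ of $\cF$ and using $\|\delta_t\|_\infty\le\epsilon_1$ together with $-\langle\theta_t,y_t\rangle = \langle\theta_t,\hat u_t - \delta_t\rangle$ one gets $\sum_t\langle\theta,u(p_t,\ell)\rangle \le \bigl(\sum_t\langle\theta_t,y_t\rangle - \sum_t\langle\tilde\theta,y_t\rangle\bigr) + \sum_t\langle\theta_t,\hat u_t\rangle + 2T\epsilon_1 \le \Reg(\cF) + \sum_t\langle\theta_t,\hat u_t\rangle + 2T\epsilon_1$, where the last step uses that the bracketed quantity, maximized over $\tilde\theta\in\Delta_{d+1}$, is exactly $\Reg(\cF)$.

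The heart of the argument is then bounding $\sum_t\langle\theta_t,\hat u_t\rangle$ using the \textsc{BestResponse} oracle. Writing $\theta_t'\in\Rset^d$ for the first $d$ coordinates of $\theta_t$, we have $\langle\theta_t,\hat u_t\rangle = \langle\theta_t',u(p_t,\ell)\rangle = \langle\sfq^{P,\pi_t},\theta_t'\cdot\ell\rangle - \langle\theta_t',c\rangle$, and $\epsilon_0$-optimality of $\pi_t$ for the reward $-\theta_t'\cdot\ell$ gives $\langle\sfq^{P,\pi_t},\theta_t'\cdot\ell\rangle \le \langle\sfq^\star,\theta_t'\cdot\ell\rangle + \epsilon_0$ for the occupancy measure $\sfq^\star$ of any feasible policy. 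Hence $\langle\theta_t,\hat u_t\rangle \le \langle\theta_t',\sfq^\star\cdot\ell - c\rangle + \epsilon_0 \le \epsilon_0$, the last inequality because $\theta_t'\ge 0$ and $\sfq^\star\cdot\ell^i\le c_i$ for all $i$; summing gives $\sum_t\langle\theta_t,\hat u_t\rangle\le T\epsilon_0$. Plugging this into the previous display and dividing by $T$ yields $d_\infty(\cdot,\sS) \le \Reg(\cF)/T + \epsilon_0 + 2\epsilon_1$. Finally, since the Est oracle returns values in $[0,L]^d$ and $c\in[0,L]^d$, we have $y_t\in[-L,L]^{d+1}$; rescaling to $[0,1]^{d+1}$ (the additive shift and scale factor only rescale the regret) and applying Lemma~\ref{lem:negent_regularizer} to the negative-entropy FTRL algorithm $\cF$ over $\Delta_{d+1}$ gives $\Reg(\cF) = O(L\sqrt{T\log d})$, which completes the bound.

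I expect the main obstacle to be bookkeeping rather than anything conceptual: keeping every appearance of $\epsilon_0$ and $\epsilon_1$ (and all the sign conventions around $y_t = -\hat u_t + \delta_t$ and the zero-padding coordinate) consistent, and verifying that the horizon $L$ enters $\Reg(\cF)$ only linearly after the affine reparametrization required to match the normalization of Lemma~\ref{lem:negent_regularizer}. A minor additional care is needed to state the middle equality correctly in the fully-feasible case, as noted above.
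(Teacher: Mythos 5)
Your proposal is correct and follows essentially the same route as the paper's proof: convert the $\ell_\infty$ distance to a supremum over the padded simplex $\Delta_{d+1}$, run negative-entropy FTRL on the noisy Est-oracle feedback, bound the inner products $\langle\theta_t,[u(p_t,\ell),0]\rangle$ by $\epsilon_0$ via the BestResponse oracle, and collect the two $\epsilon_1$ errors (one from converting the true payoff into the observed feedback, one from converting the iterates' pairing back). The only cosmetic differences are that the paper carries separate symbols for the noiseless and noisy feedback vectors ($y_t$ vs.\ $\tilde y_t$) rather than a single error term $\delta_t$, and it does not explicitly flag the degenerate equality-vs.-inequality case you noted.
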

Applying the results from prior work
\citep{MiryoosefiBrantleyDaumeDudikSchapire2019} based on $\ell_2$
approachability would yield a bound of $(L\sqrt{d} + \epsilon_1)
T^{-1/2} + \epsilon_0 + 2\epsilon_1$ in our setting with an additional
$\sqrt{d}$ and $\epsilon_1$ factor in front of $T^{-1/2}$.  For the
sake of exposition, we illustrated the benefit of $\ell_\infty$
approachability using the oracles adopted by
\citet{MiryoosefiBrantleyDaumeDudikSchapire2019}, but our approach can
also be applied with similar advantages to other works that make
oracle calls explicit \citep{yu2021provably, MiryoosefiJin2021}.

\section{Conclusion}

  We presented a new algorithmic framework for
  $\ell_\infty$-approachability, which we argued is the most suitable
  notion of approachability for a variety of applications such as
  regret minimization. Our algorithms leverage a key dimensionality
  reduction and a reduction to an online linear optimization. These
  ideas can be similarly used to derive useful algorithms for
  approachability with alternative distance metrics. In fact, as
  already pointed out, some of our algorithms can be similarly
  viewed as a reduction to an online linear optimization of an
  equivalent group-norm approachability.

\bibliographystyle{plainnat}
\bibliography{pseudonorm_approachability.bib}

\newpage
\appendix

\renewcommand{\contentsname}{Contents of Appendix}
\tableofcontents
\addtocontents{toc}{\protect\setcounter{tocdepth}{3}} 
\clearpage

\section{Omitted proofs}
\label{app:omitted}

\subsection{Proof of Theorem~\ref{thm:linf_approachability}}

\begin{proof}[Proof of Theorem~\ref{thm:linf_approachability}]
Similar results appear in e.g. \cite{Kwon2021}. For completeness, we
include a proof here.

We will need the following fact: for any $x \in \Rset^d$,
$d_{\infty}(x, (\infty, 0]^d) = \sup_{\theta \in \ODelta_d}
  \tri{\theta, x}$. This is easy to verify (both sides are equal to
  $\max(\max_{i \in [d]} x_i, 0)$), but is also a consequence of
  Fenchel duality (see e.g. Appendix \ref{app:fenchel-duality} and the
  proof of Theorem \ref{th:general-function}).

Armed with this fact, note that
\begin{eqnarray*}
d_\infty\left(\frac{1}{T}\sum_{t=1}^{T}u(p_{t}, \ell_t), (-\infty, 0]^d \right) &=& \sup_{\theta \in \ODelta_d} \left\langle \theta, \frac{1}{T}\sum_{t=1}^{T}u(p_{t}, \ell_t)\right\rangle\\
&=& -\inf_{\theta \in \sX} \left\langle \theta, \frac{1}{T}\sum_{t=1}^{T}\left(-u(p_{t}, \ell_t)\right)\right\rangle\\
&=& \frac{1}{T}\Reg(\cF) - \frac{1}{T}\sum_{t=1}^{T}\left\langle \theta_t, -u(p_{t}, \ell_t)\right\rangle \\
&\leq& \frac{1}{T}\Reg(\cF).
\end{eqnarray*}

\end{proof}

\subsection{Proof of Corollary~\ref{cor:quasidistance_olo}}

\begin{proof}[Proof of Corollary~\ref{cor:quasidistance_olo}]
Note that if $\theta \not \in \sC^{\circ}$, then there exists a $z
\in \sC$ such that $\langle \theta, z \rangle > 0$. Therefore, if
$\theta \not \in \sC^{\circ}$, then the supremum $\sup_{s \in \sC}
\theta \cdot s = \infty$ (we can take $s$ to be a large multiple of
$z$). On the other hand, if $\theta \in \sC^{\circ}$, then the
supremum $\sup_{s \in \sC}\theta \cdot s = 0$ (taking $s = 0$). It
follows that the supremum $\sup_{\theta \in \sT^*_f} \curl*{\theta
  \cdot z - \sup_{s \in \sC} \theta \cdot s}$ in Theorem
\ref{th:general-function} must be achieved for a $\theta \in
\sC^{\circ}$ (we know that $d_f(z, \sC)$ is not infinite since $0 \in
\sC$, so $d_f(z, \sC) \leq f(z)$). For such $\theta$, the $\sup_{s \in
  \sS} \theta \cdot s$ term vanishes, and we are left with the
statement of this corollary.
\end{proof}

\subsection{Proof of Theorem~\ref{thm:dimension_reduction}}

\begin{proof}[Proof of Theorem~\ref{thm:dimension_reduction}]
Note that for any $\rho \geq 0$ we have the following chain of equivalences:

\begin{eqnarray}
& & d_{\infty}\left(\frac{1}{T}\sum_{t=1}^{T} u(p_t, \ell_t), (-\infty, 0]^{d}\right) \leq \rho\nonumber\\
&\Leftrightarrow& \frac{1}{T}\sum_{t=1}^{T} u_i(p_t, \ell_t) \leq \rho \; \forall i \in [d]\nonumber \\
&\Leftrightarrow& \frac{1}{T}\sum_{t=1}^{T} \langle \wtu(p_t, \ell_t), v_i \rangle \leq \rho \; \forall i \in [d]\nonumber \\
&\Leftrightarrow& \left\langle\frac{1}{T}\sum_{t=1}^{T}  \wtu(p_t, \ell_t), v_i \right\rangle \leq \rho \; \forall i \in [d]\nonumber \\
&\Leftrightarrow& f\left(\frac{1}{T}\sum_{t=1}^{T}  \wtu(p_t, \ell_t)\right) \leq \rho. \label{eq:lasteq}
\end{eqnarray}

We will now prove that this final inequality on the norm $f$ \eqref{eq:lasteq} implies the desired inequality on $d_f$:

\begin{equation}\label{eq:df_ineq}
d_{f}\left(\frac{1}{T}\sum_{t=1}^{T} \wtu(p_t, \ell_t), \sC\right) \leq \rho.
\end{equation}

Note that since $0 \in \sC$, \eqref{eq:df_ineq} directly implies \ref{eq:lasteq}. To show that \eqref{eq:lasteq} implies \eqref{eq:df_ineq}, we must show that

$$f\left(\left(\frac{1}{T}\sum_{t=1}^{T}  \wtu(p_t, \ell_t)\right) - c\right) \leq \rho.$$

\noindent
for any $c \in \sC$. This in turn is equivalent to proving that:

$$\left\langle\left(\frac{1}{T}\sum_{t=1}^{T}  \wtu(p_t, \ell_t)\right) - c, v_i \right\rangle \leq \rho \; \forall i \in [d]\nonumber.$$

\noindent
But since $c \in \sC$, $\langle c, v_i \rangle \geq 0$ for all $i \in [d]$. Therefore, as long as 

$$\left\langle\frac{1}{T}\sum_{t=1}^{T}  \wtu(p_t, \ell_t), v_i \right\rangle \leq \rho \; \forall i \in [d],$$

\noindent
(which is true by \eqref{eq:lasteq}), \eqref{eq:df_ineq} will be true as well, as desired.

\end{proof}

\subsection{Proof of Lemma~\ref{lemma:DualSet}}

\begin{proof}[Proof of Lemma~\ref{lemma:DualSet}]
  Let $\sH$ denote $\conv \curl*{v_1, \ldots, v_d}$. If $\wttheta$ is
  in $\sH$, then we can write $\wttheta = \sum_{i = 1}^d \alpha_i v_i$
  for some $\alpha_i \geq 0$, $i \in [d]$, with $\sum_{i = 1}^d
  \alpha_i = 1$. Thus, for any $x \in \Rset^{mn}$, we have
  \[
  \tri{\wttheta, x} = \sum_{i = 1}^d \alpha_i \tri{v_i, x}
  \leq \sum_{i = 1}^d \alpha_i \max_{j \in [d]} \tri{v_j, x}
  = \max_{j \in [d]} \tri{v_j, x},
  \]
  which implies that $\wttheta$ is in $\sT^*_{f}$.
  Conversely, if $\wttheta$ is not in $\sH$, since $\sH$ is a non-empty
  closed convex set, $\wttheta$ can be separated from $\sH$, that is
  there exists $x \in \Rset^d$ such that
  \[
\tri{\wttheta, x} > \sup_{v \in \sH} \tri{v, x} \geq \max_{i \in [d]} \tri{v_i, x},
\]
which implies that $\wttheta$ is not in $\sT^*_{f}$. This
completes the proof.
\end{proof}

\subsection{Proof of Lemma~\ref{lemma:low-dim-separability}}

\begin{proof}[Proof of Lemma~\ref{lemma:low-dim-separability}]
We begin with the first statement. Note that since the negative
orthant $(-\infty, 0]^d$ is separable with respect to $u$, for any
  $\ell \in \sL$, there exists a $p \in \sP$ such that $u_i(p, \ell)
  \leq 0$ for all $i \in [d]$. Now, recall that we can write $u_i(p,
  \ell) = \tri{\wtu(p, \ell), v_i}$, so it follows that $\tri{\wtu(p,
    \ell), v_i} \leq 0$ for all $i \in [d]$. This implies $\wtu(p,
  \ell) \in \sC$, as desired.

We next prove the second statement. Note that since $\wttheta \in
\sT_{f}^* \subseteq \sC^{\circ}$, this implies that if $\wtu(p, \ell)
\in \sC$, then $\tri{\wttheta, \wtu(p, \ell)} \leq 0$. By the first
statement, this means that for any $\ell \in \sL$, there exists a $p
\in \sP$ such that $\tri{\wttheta, \wtu(p, \ell)} \leq 0$. By the
minimax theorem (since $\sP$ and $\sL$ are both convex sets and
$\tri{\wttheta, \wtu(p, \ell)}$ is a bilinear function of $p$ and
$\ell$), this implies that there exists a $p \in \sP$ such that for
all $\ell \in \sL$, $\tri{\wttheta, \wtu(p, \ell)} \leq 0$, as
desired.
\end{proof}

\subsection{Proof of Lemma~\ref{lem:dual_extension}}

\begin{proof}[Proof of Lemma~\ref{lem:dual_extension}]
Note that by the definition of $\sT^{*}_{f}(\sZ)$, it must be the case
that $\sup_{\theta \in \sT^{*}_f(\sZ)} \tri{\theta, z} \leq f(z)$
(this is one of the constraints defining $\sT^{*}_f(\sZ)$). Since
$\sT^{*}_{f} \subseteq \sT^{*}_{f}(\sZ)$, if we show that
$\sup_{\theta \in \sT^{*}_f} \tri{\theta, z} = f(z)$ this proves the
theorem.
But this follows directly from Theorem \ref{th:general-function}
applied to the closed convex set $\sS = \{0\}$.
\end{proof}

\subsection{Proof of Lemma~\ref{lemma:bound_extension}}

\begin{proof}[Proof of Lemma~\ref{lemma:bound_extension}]
Consider the element

$$\wttheta' = \argmin_{\wttheta \in \sT_{f}^*} - \langle \wttheta, z \rangle$$

\noindent
which minimizes the unregularized objective over the smaller set
$\sT_{f}^*$. By Lemma \ref{lem:dual_extension}, we must have that
$\langle \wttheta', z\rangle \geq \langle \wttheta_{opt},
z\rangle$. But also, by definition of $\wttheta_{opt}$, we must have
that $\eta R(\wttheta_{opt}) - \langle \wttheta_{opt}, z \rangle \leq
\eta R(\wttheta') - \langle \wttheta', z \rangle$. It follows that
$R(\wttheta_{opt}) \leq R(\wttheta')$, and thus $||\wttheta_{opt}||
\leq ||\wttheta'|| \leq \diam(\sT_{f}^*)$.
\end{proof}

\subsection{Proof of Lemma~\ref{lem:z_membership}}

\begin{proof}[Proof of Lemma~\ref{lem:z_membership}]
Extend the domain of $\wtu$ as a function to $\Rset^{n} \times
\Rset^{m}$, and note that there is a unique way to write $z =
\sum_{k=1}^{m} \wtu(z_{k}, e_k)$, where for each $k \in [m]$, $z_{k}$
is an element of $\Rset^{n}$ and $e_k$ is the $k$th unit vector in
$\Rset^{m}$. We first claim $z \in \sZ = \cone(\wtu(\sP, \sL))$ iff
each $z_{k} \in \cone(\sP)$.

To see this, first note that since $\sL$ is orthant-generating, there
exist a sequence of $\lambda_k$ such that $\lambda_ke_k \in \sL$ for
each $k \in [m]$. Now, if each $z_{k} \in \cone(\sP)$, then
$\wtu(z_{k}, e_k) \in \cone(\wtu(\sP, \sL))$ (since $e_k \in
\cone(\sL)$), so $z \in \sZ$. Conversely, if $z \in \sZ$, then we can
write $z = \sum \alpha_{r=1}^{R} \wtu(p_r, \ell_r)$ for some $R > 0$
and $\alpha_r \geq 0$. Expanding each $\ell_{r}$ in the
$\{\lambda_ke_k\}$ basis, we find that each $z_k$ must be a positive
linear combination of the values $p_r$ and therefore $z_{k} \in
\cone(\sP)$.

Therefore, to check whether $z \in \sZ$, it suffices to check whether
each component $z_k$ of $z$ belongs to $\cone(\sP)$. This is possible
to do efficiently given an efficient separation oracle for $\sP$ (we
can write the convex program $\beta z_k \in \sP$ for $\beta >
0$). Finally, if each $z_k \in \cone(\sP)$ we can also recover a value
$\beta_k$ for each $k$ such that $\beta_kz_k \in \sP$ (via the same
convex program). This allows us to explicitly write $z =
\sum_{k=1}^{m}\alpha_k \wtu(p_k, \ell_k)$ with $p_k = \beta_kz_k$,
$\ell_k = \lambda_ke_k$, and $\alpha_k = 1/(\beta_k\lambda_k)$.
\end{proof}

\subsection{Proof of Lemma~\ref{lem:x_membership}}

\begin{proof}[Proof of Lemma~\ref{lem:x_membership}]
Checking for membership in the ball of radius $\rho$ is
straightforward, so it suffices to exhibit a membership oracle for the
set $\sT_{f}^{*}(\sZ)$. Fix a $\wttheta$ and consider the convex
function $h(z) = \max_{i \in [d]}\tri{v_i, z} - \tri{\wttheta,
  z}$. Note that by the definition of $\wttheta \in \sT_{f}^{*}(\sZ)$
iff $\min_{z \in \sZ}h(z) \leq 0$, so it suffices to compute the
minimum of $h(z)$ over the convex set $\sZ$.

As mentioned, to do this it suffices to exhibit a membership oracle
for $\sZ$ and an evaluation oracle for $h(z)$. But now, Lemma
\ref{lem:z_membership} provides a membership oracle for $\sZ$ and
Corollary \ref{cor:eval_max} allows us to efficiently evaluate $h(z)$
for $z \in \sZ$.
\end{proof}

\subsection{Proof of Proposition~\ref{prop:constrainedmdp}}

\begin{proof}[Proof of Proposition~\ref{prop:constrainedmdp}]
We define $y_t = - [ u(p_t, \ell), ~0]$ the noiseless and $\tilde y_t = -[\hat z_t - c, 0]$ the actual loss passed to the OLO algorithm.
We bound
\begin{align*}
    \max_{i \in [k]} \{\sfq^{P, \bar \pi} \cdot \ell - c \}
    &= d_{\infty}\left(\frac{1}{T}\sum_{t=1}^T u(\sfq^{P, \pi_t}, \ell), \sS \right)\\
    &= \max_{\theta \in \Delta_{d+1}} \theta \cdot \begin{pmatrix}
    \frac{1}{T}\sum_{t=1}^T u(\sfq^{P, \pi_t}, \ell)\\
    0
  \end{pmatrix} \\ %\tag{Lemma~\ref{lemma:simplelinear}}
  &= \max_{\theta \in \Delta_{d+1}} \frac{1}{T}\sum_{t=1}^T \theta \cdot \begin{pmatrix}
     u(\sfq^{P, \pi_t}, \ell)\\
    0
  \end{pmatrix}\\
  &\leq \max_{\theta \in \Delta_{d+1}} \left\{-\frac{1}{T}\sum_{t=1}^T \langle \theta, \tilde y_t \rangle + \epsilon_1 \right\} \tag{def. $\tilde y_t$ and \textsc{Est} oracle}\\
  &= - \min_{\theta \in \Delta_{d+1}} \left\{\frac{1}{T}\sum_{t=1}^T \langle \theta, \tilde y_t \rangle  \right\} + \epsilon_1 
  \\
  &\leq \frac{\Reg(\cF)}{T} - \frac{1}{T} \sum_{t = 1}^T \langle \theta_t, \tilde y_t \rangle + \epsilon_1 
    \\
  &\leq \frac{\Reg(\cF)}{T} - \frac{1}{T} \sum_{t = 1}^T \langle \theta_t, y_t \rangle + \epsilon_1 \tag{\textsc{Est} oracle}
  \\
    &\leq \frac{\Reg(\cF)}{T}  + \frac{1}{T} \sum_{t = 1}^T \left\langle \theta_t, \begin{pmatrix}
     u(\sfq^{P, \pi_t}, \ell)\\
    0
  \end{pmatrix}\right\rangle + 2\epsilon_1 \tag{definition $y_t$}\\
      &\leq \frac{\Reg(\cF)}{T}  + \frac{1}{T} \sum_{t = 1}^T \left\langle \theta_t \cdot \begin{pmatrix}
     u(\sfq^{P, \pi_t}, \ell)\\
    0
  \end{pmatrix}\right\rangle + 2\epsilon_1 \tag{definition $y_t$}\\
  &\leq \frac{\Reg(\cF)}{T}  + \epsilon_0 + 2\epsilon_1 \tag{\textsc{BestResponse} oracle}\\
  &\leq 2 L \sqrt{\frac{\log(d+1)}{T}} + \epsilon_0 + 2\epsilon_1 \tag{Lemma~\ref{lem:negent_regularizer}}
\end{align*}
\end{proof}

\section{General theorems from convex optimization}
\label{app:fenchel-duality}

We will use the following Fenchel duality theorem
\citep{BorweinZhu2005,Rockafellar1970}, see also \citep{MohriRostamizadehTalwalkar2018}.

\begin{theorem}[Fenchel duality]
\label{th:fenchel-duality}
Let $\sX$ and $\sY$ be Banach spaces, $f\colon \sX \to \Rset \cup
\{+\infty\}$ and $g \colon \sY \to \Rset \cup \{+\infty\}$
convex functions and $A \colon \sX \to \sY$ a bounded linear map.
Assume that $f$, $g$ and $A$ satisfy one of the following conditions:
\begin{itemize}

\item $f$ and $g$ are lower semi-continuous and $0 \in \core(\dom(g) -
  A \dom(f))$;

\item $A \dom(f) \cap \cont(g) \neq \emptyset$;
  
\end{itemize}
then $p = d$ for the dual optimization problems
\begin{align*}
  p & = \inf_{x \in \sX} \curl*{f(x) + g(Ax)}\\
  d & = \sup_{x^* \in \sY^*} \curl*{-f^*(A^*x^*) - g^*(-x^*)}
\end{align*}
and the supremum in the second problem is attained if finite.

\end{theorem}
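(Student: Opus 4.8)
The plan is to prove the two inequalities $d \le p$ (weak duality) and $p \le d$ (strong duality) separately; the first is elementary, and the second is where the constraint qualifications enter.

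For weak duality I would apply the Fenchel--Young inequality twice. Fix any $x \in \sX$ and any $x^* \in \sY^*$. The definition of the conjugate $f^*$ evaluated at $A^* x^* \in \sX^*$ gives $f(x) \ge \tri{A^*x^*, x} - f^*(A^*x^*)$, and the definition of $g^*$ evaluated at $-x^*$ gives $g(Ax) \ge \tri{-x^*, Ax} - g^*(-x^*)$. Adding these and using $\tri{A^*x^*, x} = \tri{x^*, Ax}$, the two pairing terms cancel, so $f(x) + g(Ax) \ge -f^*(A^*x^*) - g^*(-x^*)$. Taking the infimum over $x$ on the left and the supremum over $x^*$ on the right yields $p \ge d$ (which in particular makes $p = d$ trivial whenever $p = -\infty$, and shows $p$ is finite under either hypothesis since both CQs force $\dom f \cap A^{-1}(\dom g)$ to be nonempty).

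For strong duality I would introduce the perturbation (value) function $h \colon \sY \to \Rset \cup \{\pm\infty\}$ defined by $h(y) = \inf_{x \in \sX}\{f(x) + g(Ax + y)\}$. One checks directly that $h$ is convex, being the infimal projection of the jointly convex map $(x,y) \mapsto f(x) + g(Ax + y)$, and that $h(0) = p$. The crux is to show that, under either constraint qualification, $h$ is subdifferentiable at $0$, i.e. there is $x^* \in \sY^*$ with $-x^* \in \partial h(0)$. Granting this, the subgradient inequality $h(y) + \tri{x^*, y} \ge h(0) = p$ for all $y$ (with equality at $y = 0$) gives $\inf_y\{h(y) + \tri{x^*, y}\} = p$; expanding $h$, substituting $w = Ax + y$, and splitting the joint infimum into an infimum over $x$ and one over $w$ turns the left side into $-f^*(A^*x^*) - g^*(-x^*)$. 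Hence $p = -f^*(A^*x^*) - g^*(-x^*) \le d$, which together with weak duality gives $p = d$ with the supremum attained at this $x^*$.

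The main obstacle is exactly the subdifferentiability of $h$ at $0$. Under the second hypothesis, $A\dom(f) \cap \cont(g) \ne \emptyset$, I would pick $x_0$ with $Ax_0 \in \cont(g)$ and observe that $y \mapsto f(x_0) + g(Ax_0 + y)$ is finite and bounded above on a neighborhood of $0$, so $h$ is bounded above near $0$; a convex function bounded above on a neighborhood of a point is continuous, hence subdifferentiable, there. Under the first hypothesis (lower semicontinuity plus $0 \in \core(\dom(g) - A\dom(f))$) the argument is genuinely harder and uses completeness of the Banach spaces: one upgrades the algebraic ``core'' condition to a topological interiority statement via a Baire-category / open-mapping argument (of Robinson--Ursescu type), which then forces $h$ to be continuous at $0$. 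Since this is a classical result of convex analysis, in the write-up I would simply invoke it by citing \citep{BorweinZhu2005, Rockafellar1970}; the sketch above is the structure one reconstructs, and it is all that is needed downstream (the paper only ever applies the theorem through its second, easier, constraint qualification).
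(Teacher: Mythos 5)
The paper does not prove Theorem~\ref{th:fenchel-duality}: it is stated in Appendix~\ref{app:fenchel-duality} as a classical result and simply cited to \citet{BorweinZhu2005} and \citet{Rockafellar1970}, which is exactly what you say you would do in the end. Your bonus sketch is a correct rendition of the standard perturbation-function proof: weak duality from Fenchel--Young applied to $f^*$ at $A^*x^*$ and to $g^*$ at $-x^*$; strong duality by forming the value function $h(y)=\inf_x\{f(x)+g(Ax+y)\}$, showing that some $-x^*$ lies in $\partial h(0)$, and then recovering $p=-f^*(A^*x^*)-g^*(-x^*)$ by expanding the subgradient inequality, substituting $w=Ax+y$, and decoupling the infima over $x$ and $w$ --- that algebra checks out and automatically delivers attainment in the dual. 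Your discussion of the two constraint qualifications is also right: under the second CQ, $h$ is bounded above near $0$ by $f(x_0)+g(Ax_0+\cdot)$ for any $x_0\in\dom f$ with $Ax_0\in\cont(g)$, hence continuous and subdifferentiable at $0$; under the first CQ one genuinely needs lower semicontinuity plus a Baire-category/Robinson--Ursescu argument to convert the algebraic $\core$ hypothesis into topological interiority. Two small housekeeping remarks. First, your parenthetical claim that the CQs show ``$p$ is finite'' is slightly off: nonemptiness of $\dom f\cap A^{-1}(\dom g)$ gives $p<+\infty$ but not $p>-\infty$; the case $p=-\infty$ must be (and is, by your weak-duality remark) dispatched separately before one can treat $h$ as a proper convex function and speak of $\partial h(0)$. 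Second, one should note in passing that once $h$ is bounded above near $0$ and $h(0)>-\infty$, convexity plus $0\in\operatorname{int}\dom h$ rules out $h$ ever taking the value $-\infty$, so $h$ is proper and the subdifferential calculus applies cleanly.
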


When constructing efficient algorithms, we will need a efficient
method for minimizing a convex function over a convex set given only a
membership oracle to the set and an evaluation oracle to the
function. This is provided by the following lemma.

\begin{lemma}
\label{lemma:convex-opt}
Let $K$ be a bounded convex subset of $\Rset^{d}$ and $f:K \to
[0, 1]$ a convex function over $K$. Then given access to a membership
oracle for $K$ (along with an interior point $x_0 \in K$ satisfying
$B(x_0, r) \subseteq K \subseteq B(x_0, R)$ for some given radii $r, R
> 0$) and an evaluation oracle for $f$, there exists an algorithm
which computes the minimum value over $f$ (to within precision $\eps$)
using $\poly(d, \log(R/r), \log(1/\eps))$ time and queries to these
oracles.
\end{lemma}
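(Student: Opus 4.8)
The plan is to reduce the problem to weakly minimizing a \emph{linear} functional over a convex body that is accessed only through a membership oracle, and then appeal to the classical apparatus of the ellipsoid method. The first move, which is what makes evaluation-oracle-only access to $f$ enough, is to pass to the epigraph. Define
\[
\h K = \curl*{(x, s) \in \Rset^{d+1} \colon x \in K,\ 0 \le s \le 2,\ f(x) \le s}.
\]
Since $f$ maps $K$ into $[0,1]$, $\h K$ is a bounded convex set, $\min_{(x,s) \in \h K} s = \min_{x \in K} f(x)$, and every $(\wt x, \wt s) \in \h K$ satisfies $f(\wt x) \le \wt s$; so if we can approximately minimize the coordinate functional $s$ over $\h K$, a single evaluation of $f$ at the returned point recovers a value within $\eps$ of $\min_K f$ (two-sided, since $f(\wt x) \ge \min_K f$ automatically). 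A membership oracle for $\h K$ is built immediately from the membership oracle for $K$, the evaluation oracle for $f$, and two comparisons.

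Next I would assemble the geometric data needed to run the ellipsoid method on $\h K$. A circumscribed ball is free: $\h K \subseteq B\paren*{(x_0,1),\, R+2}$. For an inscribed ball, observe that a $[0,1]$-valued convex function with $B(x_0, r) \subseteq K$ is $(2/r)$-Lipschitz on $B(x_0, r/2)$; hence for a small parameter $\delta > 0$ the ball $B\paren*{(x_0,\, f(x_0)+\delta),\ \rho}$ with $\rho = \Theta(\delta r)$ lies inside $\h K$, and its center and radius are computable from a single evaluation of $f$ at $x_0$. Choosing $\delta = \Theta(\eps)$ keeps $\log(R'/\rho) = O\paren*{\log(R/r) + \log(1/\eps)}$, where $R' = R+2$.

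Now I would invoke the classical fact — a cornerstone of the theory of the ellipsoid method — that a membership oracle for a convex body, together with given inscribed and circumscribed balls, can be converted in oracle-polynomial time into an (approximate) separation oracle for that body, and that the ellipsoid method run with such a separation oracle approximately minimizes any prescribed linear functional over the body: to accuracy $\eps'$ this takes $\poly\paren*{d', \log(R'/\rho), \log(1/\eps')}$ arithmetic operations and oracle calls, where $d'$ is the ambient dimension. Applying this to $\h K \subseteq \Rset^{d+1}$ with the functional $(x,s)\mapsto s$ and $\eps' = \Theta(\eps)$ returns $(\wt x, \wt s)$ approximately in $\h K$ with $\wt s \le \min_K f + O(\eps)$; one final evaluation of $f(\wt x)$ then certifies a value within $O(\eps)$ of $\min_K f$. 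Rescaling $\eps$ by an absolute constant and substituting $d' = d+1$ and the bounds on $R'$ and $\rho$ gives the stated $\poly\paren*{d, \log(R/r), \log(1/\eps)}$ complexity.

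The main obstacle is not the epigraph reduction, which is routine, but the black box it leans on — the membership-to-separation conversion and the error analysis of the ellipsoid method. There one must be careful that: (i) only approximate separation is obtainable from membership queries, which is why only $\eps$-optimization (not exact) is claimed; (ii) the inscribed ball of $\h K$ degrades like $\Theta(\delta r)$ as the accuracy parameter $\delta \to 0$, so one must verify the logarithmic dependence survives; and (iii) we genuinely have no subgradient access to $f$, which is precisely the reason for lifting to $\h K$: it trades the convex objective for a linear one over a set whose membership we \emph{can} test, sidestepping any need to estimate $\partial f$ by finite differences near the possibly ill-behaved boundary of $K$.
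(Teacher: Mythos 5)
Your proposal is correct, but it is a genuinely different route from the paper's. The paper offers no argument of its own: it proves the lemma by citing \citet{lee2018efficient}, who study exactly this problem (minimizing a convex function given a membership oracle for the domain and an evaluation oracle for the function, plus inner and outer balls) and give an algorithm with a near-optimal oracle-query count. You instead reconstruct the result from the classical Gr\"otschel--Lov\'asz--Schrijver machinery: lift to the epigraph $\h K \subseteq \Rset^{d+1}$ so that the convex objective becomes a linear functional over a convex body whose membership can be tested from the two given oracles; manufacture the required inner ball from the local Lipschitz bound that a $[0,1]$-valued convex function enjoys on $B(x_0, r/2)$; and then invoke membership-to-separation plus the ellipsoid method for weak linear optimization. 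Your bookkeeping is sound: the inscribed radius $\rho = \Theta(\delta r)$ does degrade with the target accuracy $\delta = \Theta(\eps)$, but this enters only through $\log(R'/\rho)$, so the claimed $\poly(d, \log(R/r), \log(1/\eps))$ bound survives. The trade-off between the two approaches is the usual one: your argument is self-contained and uses only textbook ellipsoid facts, whereas the citation the paper uses buys a much smaller number of oracle calls (roughly $\widetilde{O}(d^2)$ rather than the ellipsoid method's larger polynomial), which does not matter for the $\poly(\cdot)$ statement being proved but does matter in practice.
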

\begin{proof}
See \cite{lee2018efficient}.
\end{proof}

\section{Bayesian correlated equilibria}
\label{app:bayesian}

Here we provide another application of the $\ell_{\infty}$
approachability framework, to the problem of constructing learning
algorithms that converge to correlated equilibria in Bayesian
games. Correlated equilibria in Bayesian games (alternatively, ``games
with incomplete information'') are well-studied throughout the
economics and game theory literature; see e.g. \citep{forges1993five,
  forges2006correlated, bergemann2016bayes}. Unlike ordinary
correlated equilibria, which are also well-studied from a learning
perspective, relatively little is known about algorithms that converge
to correlated equilibria in Bayesian games.  \cite{hartline2015no}
study no-regret learning in Bayesian games, showing that no-regret
algorithms converge to a Bayesian coarse correlated equilibrium. More
recently, \cite{mansour2022} introduce a notion of Bayesian swap
regret with the property that learners with sublinear Bayesian swap
regret converge to correlated equilibria in Bayesian
games. \cite{mansour2022} construct a learning algorithm that achieves
low Bayesian swap regret, albeit not a provably efficient one. In this
section, we apply our approachability framework to develop the first
efficient low-regret algorithm for Bayesian swap regret.

We begin with some preliminaries about standard (non-Bayesian)
normal-form games. In a normal form game $G$ with $N$ players, each
player $n$ must choose a mixed action $\bx_n \in \Delta([K])$ (for
simplicity we will assume each player has the same number of pure
actions). We call the collection of mixed strategies $\bx = (\bx_1,
\bx_2, \dots, \bx_N)$ played by all players a \textit{strategy
  profile}. We will let $U_i(\bx)$ denote the utility of player $n$
under strategy profile $\bx$ (and insist that $U_i$ is linear in each
player's strategy).

Given a function $\pi\colon [K] \to [K]$ and a mixed action
$\bx_n \in \Delta([K])$, let $\pi(\bx_n)$ be the mixed action in
$\Delta([K])$ formed by sampling an action from $\bx_n$ and then
applying the function $\pi$ (i.e., $\pi(\bx_n)_i = \sum_{i' | \pi(i')
  = i} \bx_{n, i'}$). A \textit{correlated equilibrium} of $G$ is a
distribution $\cD$ over strategy profiles $\bx$ such that for any
player $n$ and function $\pi\colon [K] \to [K]$, it is the
case that

$$\E_{\bx \sim \cD}[U_{n}(\bx')] \leq \E_{\bx \sim \cD}[U_{n}(\bx)],$$ 
\noindent
where $\bx' = (\bx_1, \bx_2, \dots, \pi(\bx_n), \dots,
\bx_N)$. Similarly, an $\eps$-correlated equilibrium of $G$ is a
distribution $\cD$ with the property that

$$\E_{\bx \sim \cD}[U_{n}(\bx')] \leq \E_{\bx \sim \cD}[U_{n}(\bx)] + \eps,$$

\noindent
for any $n \in [N]$ and $\pi\colon [K] \to [K]$. Correlated
equilibria have the following natural interpretation: a mediator
samples a strategy profile $\bx$ from $\cD$ and tells to each player
$n$ a pure action $a_n$ randomly sampled from $\bx_n$. If each player
is incentivized to play the action $a_n$ they are told, then $\cD$ is
a correlated equilibrium.

We define Bayesian games similarly to standard games, with the
modification that now each player $n$ also has some private
information $c_n \in [C]$ drawn from some public distribution
$\cC_n$. We call the vector of realized types $\bdc = (c_1, c_2,
\dots, c_N)$ the \textit{type profile} of the players (drawn randomly
from $\cC = \cC_1 \times \dots \times \cC_N$), and now let the utility
$U_n(\bx, \bdc)$ of player $n$ depend on both the strategy profile and
type profile of the players. Note that we can alternately think of the
strategy of player $n$ as a function $\bx_n\colon [C] \to
\Delta([K])$ mapping contexts to mixed actions; in this case we can
again treat the expected utility for player $n$ (with expectation
taken over the random type profile) as a multilinear function
$U_{n}(\bx)$ function of the strategy profile $\bx = (\bx_1, \dots,
\bx_N)$.

As with regular correlated equilibria, we can motivate the definition
of Bayesian correlated equilibria via the introduction of a
mediator. In the Bayesian case, all players begin by revealing their
private types to the mediator, and the mediator observes a type
profile $\bdc$. The mediator then samples an joint action profile
$\bx$ from a distribution $\cD(\bdc)$ that \textit{depends} on the
observed type profile. Finally, for each player $n$, the mediator
samples a pure action $a_n \sim \bx_n$ from the mixed strategy for
$n$, and relays $a_n$ to player $n$ (which they should follow). In
order for this to be a valid correlated equilibrium, the following
incentive compatibility constraints must be met:

\begin{itemize}
    \item Players must have no incentive to deviate from the strategy
      $\bx_n$ relayed to them. As in correlated equilibria, this
      includes deviations of the form ``if I am told to play action
      $i$, I will instead play action $j$''.

    \item Players also must have no incentive to misreport their type
      (thus affecting the distribution $\cD(\bdc)$ over joint strategy
      profiles).

    \item Moreover, no combination of the above two deviations should
      result in improved utility for a player.
\end{itemize}

Formally, we define a \textit{Bayesian correlated equilibrium} for a
Bayesian game as follows. The distributions $\cD(\bdc)$ form a
Bayesian correlated equilibrium if, for any player $n \in [N]$, any
``type deviation'' $\kappa\colon [C] \to[C]$, and any collection of
``action deviations'' $\pi_{c}\colon [K] \to [K]$ (for each $c \in
[C]$),

$$\E_{\bx \sim \cD(c), \bdc \sim \cC}[U_{n}(\bx')]
\leq \E_{\bx \sim \cD(c), \bdc \sim \cC}[U_{n}(\bx)]$$

\noindent
where $\bx'$ is derived from $\bx$ by deviating from $\bx_n$ in the
following way: when the player $n$ has type $c$, they first report
type $c' = \kappa(c)$ to the mediator; if the mediator then tells them
to play action $a$, they instead play $\pi_{c}(a)$. No such deviation
should improve the utility of an agent in a Bayesian correlated
equilibrium. Likewise, an $\eps$-Bayesian correlated equilibrium is a
collection of distributions $\cD(\bdc)$ where no deviation increases
the utility of a player by more than $\eps$.

In order to play a Bayesian game, a learning algorithm must be
contextual (using the agent's private information to decide what
action to play). We study the following setting of full-information
contextual online learning. As before, there are $K$ actions, but
there are now $C$ different contexts (/types). Every round $t$, the
adversary specifies a loss function $\ell_t \in \cL = [0, 1]^{CK}$,
where $\ell_{t, i, c}$ represents the loss from playing action $i$ in
context $c$. Simultaneously, the learner specifies an action $p_t \colon
[C] \to \Delta([K])$ mapping each context to a distribution
over actions. Overall, the learner receives expected utility
$\sum_{c=1}^{C} \Pr_{\cC}[c] \sum_{i=1}^{K} p_{t}(c)_i \ell_{t, i,
  c}$ this round (here $\cC$ is a distribution over contexts; we
assume that the learner's context is drawn i.i.d.\ from $\cC$ each round,
and that this distribution $\cC$ is publicly known).

Motivated by the deviations considered in Bayesian correlated
equilibria, we can define the following notion of swap regret in
Bayesian games (``Bayesian swap regret''):

\begin{equation}\label{eq:bs_regret}
\BSReg = \max_{\kappa, \pi_{c}}\sum_{t=1}^{T}\sum_{c=1}^{C} \Pr_{\cC}[c] \sum_{i=1}^{K} \left(p_{t}(c)_i\ell_{t, i, c}  - p_{t}(\kappa(c))_{i}\ell_{t, \pi_{c}(i), c}\right).
\end{equation}

\begin{lemma}
Let $\cA$ be an algorithm with $\BSReg(\cA) = o(T)$. Assume each
player $i$ in a repeated Bayesian game $G$ (over $T$ rounds) runs a
copy of algorithm $\cA$, and let $p_{t} = (p^{(1)}_t, p^{(2)}_t,
\dots, p^{(N)}_t)$ be the strategy profile at time $t$. Then the
time-averaged strategy profile $\cD(\bdc)\colon [C]^N \to
\Delta([K]^{N})$, defined by sampling a $t$ uniformly at random from
$[T]$ and returning $p_{t}$ is an $\eps$-Bayesian correlated
equilibrium with $\eps = o(1)$.
\end{lemma}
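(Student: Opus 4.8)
The plan is to run the standard argument that no-swap-regret dynamics converge to correlated equilibria, here in the Bayesian/contextual form (as in \cite{mansour2022}), by reducing, for each player, the multi-agent repeated game to the single-player contextual online learning instance that $\cA$ is built for.

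First I would fix a player $n$ and reconstruct, from $n$'s viewpoint, the loss sequence $\ell_1, \dots, \ell_T \in [0,1]^{CK}$ against which $\cA$ is effectively being run. At round $t$ the other players have committed to their policies $p_t^{(-n)}$ as a function of their own histories (hence of the play in rounds $< t$), so from $n$'s perspective they are fixed; using the multilinearity of $U_n$ and the independence of the type draws, the expected utility of $n$ playing an arbitrary policy $q\colon [C] \to \Delta([K])$ against $p_t^{(-n)}$ equals $\sum_{c} \Pr_{\cC_n}[c] \sum_i q(c)_i\, w_{t,i,c}$, where $w_{t,i,c}$ is $n$'s expected utility when $n$ has type $c$, plays pure action $i$, and the others play $p_t^{(-n)}$ against a fresh type draw. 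Taking $\ell_{t,i,c}$ to be an affine rescaling of $-w_{t,i,c}$ into $[0,1]$ (the additive and multiplicative constants wash out of every regret difference) makes $\sum_{c}\Pr_{\cC_n}[c]\sum_i q(c)_i \ell_{t,i,c}$ equal $-U_n(q, p_t^{(-n)})$ up to those constants, simultaneously for all $q$. Since $p_t^{(-n)}$ is chosen adaptively, $\ell_t$ is an adversarial loss sequence, but the hypothesis $\BSReg(\cA) = o(T)$ is a worst-case bound over loss sequences, so it applies to the realized $\ell_t$.

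Next I would substitute this $\ell_t$ into the definition \eqref{eq:bs_regret} of $\BSReg$: for every type-swap $\kappa\colon[C]\to[C]$ and every family of action-swaps $\pi_c\colon[K]\to[K]$,
\[
\sum_{t=1}^T\sum_{c=1}^C\Pr_{\cC_n}[c]\sum_{i=1}^K\Bigl(p_t^{(n)}(c)_i\,\ell_{t,i,c} - p_t^{(n)}(\kappa(c))_i\,\ell_{t,\pi_c(i),c}\Bigr) \le \BSReg(\cA).
\]
The first inner sum equals $-U_n(p_t)$ by construction. For the second, reindexing over the fibers of $\pi_c$ gives $\sum_i p_t^{(n)}(\kappa(c))_i \ell_{t,\pi_c(i),c} = \sum_j \bigl(\pi_c(p_t^{(n)}(\kappa(c)))\bigr)_j \ell_{t,j,c}$, which is exactly $-U_n(\bx'_t)$ for $\bx'_t$ the strategy profile obtained from $p_t$ by replacing $n$'s policy with the deviation that, when $n$'s type is $c$, reports type $\kappa(c)$ and then plays $\pi_c$ of whatever action is recommended. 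So the displayed inequality reads $\sum_{t=1}^T\bigl(U_n(\bx'_t)-U_n(p_t)\bigr)\le\BSReg(\cA)$.

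Finally, since $\bx\sim\cD$ is by definition a uniformly random round $t$ returning $p_t$, we have $\E_{\bx\sim\cD}[U_n(\bx)]=\tfrac{1}{T}\sum_t U_n(p_t)$, and the deviation commutes with this uniform mixture, so $\E_{\bx\sim\cD}[U_n(\bx')]=\tfrac{1}{T}\sum_t U_n(\bx'_t)$; dividing the last inequality by $T$ yields $\E_{\bx\sim\cD}[U_n(\bx')]-\E_{\bx\sim\cD}[U_n(\bx)]\le\BSReg(\cA)/T$. As $n$, $\kappa$, and $\{\pi_c\}_c$ were arbitrary, and $\BSReg(\cA)\ge 0$ (the identity deviation achieves $0$), $\cD$ is an $\eps$-Bayesian correlated equilibrium with $\eps=\BSReg(\cA)/T=o(1)$. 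I expect the only genuine work to be the bookkeeping in the reduction step: defining $\ell_t$ so that both the realized utility $U_n(p_t)$ and the utility $U_n(\bx'_t)$ of every contextual double-deviation are recovered from the \emph{same} loss vector, together with the affine rescaling needed to land $\ell_t$ in $[0,1]^{CK}$; once that is in place, the remaining steps are pure substitution.
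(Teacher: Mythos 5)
Your proof is correct and follows essentially the same route as the paper's: for a fixed player $n$, use the multilinearity of $U_n$ together with the independence of the type draws to collapse the opponents' round-$t$ play $p_t^{(-n)}$ into a single contextual loss vector $\ell_t$, observe that the expected advantage from the $(\kappa,\{\pi_c\})$-deviation in round $t$ is exactly the per-round term appearing in the definition \eqref{eq:bs_regret} of $\BSReg$, sum over $t$, invoke the worst-case swap-regret bound, and divide by $T$. You are somewhat more explicit than the paper on a few bookkeeping points that the paper leaves implicit or states imprecisely: the sign convention relating utilities to losses and the affine rescaling into $[0,1]^{CK}$; the reindexing $\sum_i p(\kappa(c))_i\,\ell_{t,\pi_c(i),c}=\sum_j \bigl(\pi_c(p(\kappa(c)))\bigr)_j\ell_{t,j,c}$ identifying the deviated profile's loss; and the remark that the realized $\ell_t$ form an adaptive adversarial sequence and are covered by the worst-case $\BSReg$ bound. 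These are all compatible with, and slightly tighten, the paper's argument, so I'd classify this as the same proof stated more carefully rather than a different one.
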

\begin{proof}
We will show that there exists no deviation for player $n$ which
increases their utility by more than $\BSReg(\cA)/T = o(1)$.

Fix a type deviation $\kappa\colon [C] \to [C]$ and set of
action deviations $\pi_{c}\colon [K] \to [K]$. This collection of
deviations transforms an arbitrary strategy $p \in \Delta([K])^{C}$
into the strategy $p'$ satisfying $p'(c)_{i} = \sum_{i' \mid
  \pi_{c}(i') = i} p(\kappa(c))_{i'}$.

For each $t \in [T]$, with probability $1/T$ the mediator will return
the strategy profiles $p_{t} = (p^{(1)}_t, p^{(2)}_t, \dots,
p^{(N)}_t)$. Now, since $U_{n}$ is multilinear in the strategies of
each player, there exists some vector $\ell_{t}$ such that the utility
of player $n$ if they defect to a some strategy $p'$ is given by the
inner product $\langle p, \ell_t \rangle$.

In particular, conditioned on the mediator returning $p_t$, the
difference in utility for player $n$ between playing $p^{(n)}$ and the
strategy $p'$ formed by applying the above deviations to $p^{(n)}$ is
exactly

$$\sum_{c=1}^{C} \Pr_{\cC}[c] \sum_{i=1}^{K} \left(p^{(n)}_{t}(c)_i\ell_{t, i, c}  - p^{(n)}_{t}(\kappa(c))_{i}\ell_{t, \pi_{c}(i), c}\right).$$

Taking expectations over all $t$, we have that the expected difference
in utility by deviating is

$$\frac{1}{T}\sum_{t=1}^{T}\sum_{c=1}^{C} \Pr_{\cC}[c] \sum_{i=1}^{K} \left(p^{(n)}_{t}(c)_i\ell_{t, i, c}  - p^{(n)}_{t}(\kappa(c))_{i}\ell_{t, \pi_{c}(i), c}\right).$$

But since player $n$ selected their strategies by playing $\cA$, this
is at most $\BSReg(\cA)/T = o(1)$, as desired.
\end{proof}

It is possible to phrase \eqref{eq:bs_regret} in the language of
$\ell_{\infty}$-approachability by considering the $C^C \cdot K^{KC}$
dimensional vectorial payoff $u(p, \ell)$ given by:

$$u(p, \ell)_{\kappa, \pi_{c}} = \sum_{c=1}^{C} \Pr_{\cC}[c]
\sum_{i=1}^{K} \left(p(c)_i\ell_{i, c}  - p(\kappa(c))_{i}\ell_{\pi_{c}(i), c}\right).$$

A straightforward computation shows that the negative orthant is
separable with respect to $u$.

\begin{lemma}
The set $\sS = (-\infty, 0]^d$ is separable with respect to the vectorial payoff $u$. 
\end{lemma}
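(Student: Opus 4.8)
The plan is to exhibit, for an arbitrary loss vector $\ell \in \sL = [0,1]^{CK}$, a single action $p \in \sP = \Delta([K])^C$ that achieves $u(p,\ell)_{\kappa,\pi_c} \le 0$ simultaneously for every type deviation $\kappa\colon[C]\to[C]$ and every tuple of action deviations $(\pi_c)_{c\in[C]}$, $\pi_c\colon[K]\to[K]$. Since $\sP$ is a product of simplices, one per context, and the ``positive'' part of $u$ decomposes as a sum over contexts of terms each depending only on $p(c)$, the natural candidate is the greedy one: for each $c$, pick $a^\ast(c) \in \argmin_{i\in[K]} \ell_{i,c}$ and let $p(c) = e_{a^\ast(c)}$ be the point mass on $a^\ast(c)$. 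Equivalently, $p$ minimizes $\sum_c \Pr_\cC[c]\sum_i p(c)_i\ell_{i,c}$ over $\sP$, which is the analogue for this product action set of the choice $p^\ast(\ell) = \argmin_p\langle p,\ell\rangle$ used in the separability proof for ordinary swap regret.

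With this choice, I would fix any $\kappa$ and any $(\pi_c)$ and examine the inner sum for a single fixed context $c$. The first group of terms contributes $\sum_{i} p(c)_i \ell_{i,c} = \ell_{a^\ast(c),c} = \min_{i\in[K]}\ell_{i,c}$. The second group contributes $\sum_i p(\kappa(c))_i \ell_{\pi_c(i),c} = \ell_{\pi_c(a^\ast(\kappa(c))),c}$, which is a single coordinate of the vector $(\ell_{j,c})_{j\in[K]}$ and is therefore at least $\min_{i\in[K]}\ell_{i,c}$. Hence the bracketed difference $\sum_i\big(p(c)_i\ell_{i,c} - p(\kappa(c))_i\ell_{\pi_c(i),c}\big)$ is at most $0$ for every $c$.

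Finally, since $\Pr_\cC[c]\ge 0$ for all $c$, multiplying each context's (non-positive) contribution by $\Pr_\cC[c]$ and summing over $c$ preserves the sign, so $u(p,\ell)_{\kappa,\pi_c}\le 0$ for all $\kappa$ and $(\pi_c)$; that is, $u(p,\ell)\in\sS$. As this holds for every $\ell\in\sL$, the separability condition follows. There is no genuine obstacle here: the only point worth emphasizing is that the product structure of $\sP$ is exactly what lets the per-context greedy choice defeat \emph{all} deviations at once — a type deviation $\kappa$ followed by an action deviation $\pi_c$ can only steer the benchmark term onto some (possibly suboptimal) action within context $c$, never below the per-context minimum that $p$ already attains.
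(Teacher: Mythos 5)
Your proof is correct and takes essentially the same approach as the paper: choose the per-context greedy action $p(c)=e_{a^\ast(c)}$ with $a^\ast(c)\in\argmin_j\ell_{j,c}$, observe that any deviation $(\kappa,\pi_c)$ can only land on some coordinate of $(\ell_{j,c})_j$ which is at least the per-context minimum, and conclude by nonnegativity of $\Pr_\cC[c]$. Your write-up is a bit more explicit about the per-context decomposition and the role of the product structure of $\sP$, but the argument is the same.
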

\begin{proof}
Fix an $\ell \in [0, 1]^{CK}$. Then note that if we let $p(c) =
e_i$, where $i = \argmin_{j \in [K]} \ell_{t, j, c}$ (i.e., $p(c)$
is entirely supported on the best fixed action to play in context
$c$), it follows that $p(c)_i\ell_{t, i, c} \leq
p(c')_{i}\ell_{t, j, c}$ for all $c' \in [C]$ and $j \in [K]$, and
therefore that $u(p, \ell) \in \sS$.
\end{proof}

This in turn leads (via Theorem \ref{thm:linf_approachability}) to a
low-regret (albeit computationally inefficient) algorithm for Bayesian
swap regret. Instead, as in the case of swap regret, we will apply our
pseudonorm approachability framework. First, we will show that we can
rewrite \eqref{eq:bs_regret} in such a way that allows us to easily
evaluate $\BSReg$.

\begin{lemma}
We have that

\begin{equation}\label{eq:bs_regret2}
\BSReg = \sum_{c = 1}^{C} \max_{c' \in [C]} \sum_{i = 1}^{K} \max_{j \in [K]} \sum_{t=1}^{T} \Pr_{\cC}[c] \left(p_{t}(c)_i\ell_{t, i, c}  - p_{t}(c')_{i}\ell_{t, j, c}\right).
\end{equation}
\end{lemma}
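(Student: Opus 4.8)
The plan is to exploit the fact that the maximization defining $\BSReg$ fully decouples across contexts $c$. The key observation is that in \eqref{eq:bs_regret} the summand indexed by $c$ depends on the deviation only through the single value $\kappa(c)\in[C]$ and the single function $\pi_c\colon[K]\to[K]$; and a type deviation $\kappa$ is nothing more than an arbitrary assignment of a value $\kappa(c)$ to each $c$, while the action deviations $\pi_c$ are chosen independently for each $c$. Hence the outer maximum can be pushed inside the sum over $c$, giving
$$\BSReg = \sum_{c=1}^{C} \max_{c'\in[C],\ \pi\colon[K]\to[K]} \sum_{t=1}^{T}\Pr_{\cC}[c]\sum_{i=1}^{K}\bigl(p_t(c)_i\ell_{t,i,c} - p_t(c')_i\ell_{t,\pi(i),c}\bigr),$$
where I have renamed $\kappa(c)$ to $c'$ and $\pi_c$ to $\pi$.

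Next I would analyze the inner maximum for a fixed $c$. The term $\sum_t\Pr_{\cC}[c]\sum_i p_t(c)_i\ell_{t,i,c}$ does not depend on $c'$ or $\pi$, so it pulls out of both maxima, and what remains is $-\min_{c',\pi}\sum_i\sum_t\Pr_{\cC}[c]\,p_t(c')_i\ell_{t,\pi(i),c}$. For a fixed $c'$, since $\pi$ ranges over \emph{all} functions $[K]\to[K]$, the value $\pi(i)$ is unconstrained and chosen independently across $i$, so $\min_\pi\sum_i(\cdots)=\sum_i\min_{j\in[K]}\sum_t\Pr_{\cC}[c]\,p_t(c')_i\ell_{t,j,c}$.

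Finally I would reassemble: for each fixed $c$ and $c'$,
$$\sum_t\Pr_{\cC}[c]\sum_i p_t(c)_i\ell_{t,i,c} - \sum_i\min_{j}\sum_t\Pr_{\cC}[c]\,p_t(c')_i\ell_{t,j,c} = \sum_{i=1}^{K}\max_{j\in[K]}\sum_{t=1}^{T}\Pr_{\cC}[c]\bigl(p_t(c)_i\ell_{t,i,c} - p_t(c')_i\ell_{t,j,c}\bigr),$$
using that only the $p_t(c')_i\ell_{t,j,c}$ term depends on $j$, so $\max_j$ distributes over the sum on $i$ (and $-\min_j=\max_j(-\,\cdot\,)$). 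Taking $\max_{c'}$ of both sides and summing over $c$ produces exactly \eqref{eq:bs_regret2}. The only point demanding care — and the closest thing to an obstacle — is the bookkeeping: one must be certain that each reindexing step (pushing $\max$ past $\sum_c$, then $\min$ past $\sum_i$) does not secretly impose a constraint coupling distinct values of $c$ or of $i$. Once that is checked, the identity is a routine chain of equalities, so no genuine difficulty remains.
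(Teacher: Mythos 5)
Your proposal is correct and follows essentially the same route as the paper: both arguments rest on the observation that the choice of $\kappa(c)$ and $\pi_c$ decouples across $c$, and that $\pi_c(i)$ decouples across $i$, so the maxima can be pushed through the sums. The paper just executes the same reindexing in slightly different order (first $\max_{\pi_c}$ past $\sum_c$, then $\max_\kappa$ past $\sum_c$ together with $\max_{\pi_c}$ past $\sum_i$), while you push the joint max past $\sum_c$ in one step and then handle $\sum_i$; the extra $-\min_j = \max_j(-\cdot)$ bookkeeping you do is harmless but unnecessary.
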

\begin{proof}
From \eqref{eq:bs_regret}, we have that

\begin{eqnarray*}
\BSReg &=& \max_{\kappa, \pi_{c}}\sum_{t=1}^{T}\sum_{c=1}^{C} \Pr_{\cC}[c] \sum_{i=1}^{K} \left(p_{t}(c)_i\ell_{t, i, c}  - p_{t}(\kappa(c))_{i}\ell_{t, \pi_{c}(i), c}\right)\\
&=& \max_{\kappa, \pi_{c}}\sum_{c=1}^{C}  \sum_{i=1}^{K} \sum_{t=1}^{T}\Pr_{\cC}[c]\left(p_{t}(c)_i\ell_{t, i, c}  - p_{t}(\kappa(c))_{i}\ell_{t, \pi_{c}(i), c}\right)\\
&=& \max_{\kappa:[C]\to[C]}\sum_{c=1}^{C}  \max_{\pi_{c}\colon [K]\to[K]} \sum_{i=1}^{K} \sum_{t=1}^{T}\Pr_{\cC}[c]\left(p_{t}(c)_i\ell_{t, i, c}  - p_{t}(\kappa(c))_{i}\ell_{t, \pi_{c}(i), c}\right)\\
&=& \sum_{c=1}^{C} \max_{c' \in [C]} \sum_{i=1}^{K} \max_{j \in [K]} \sum_{t=1}^{T}\Pr_{\cC}[c]\left(p_{t}(c)_i\ell_{t, i, c}  - p_{t}(c')_{i}\ell_{t, j, c}\right).
\end{eqnarray*}
\end{proof}

Note that \eqref{eq:bs_regret2} allows us to efficiently (in $\poly(K,
C, T)$ time) evaluate $\BSReg$. As mentioned in the main text,
directly from Theorems \ref{thm:poly_regret} and
\ref{thm:efficient_alg} this gives us an efficient ($\poly(C, K)$ time
per round) learning algorithm that incurs at most $O(K^2C^2\sqrt{T})$
swap regret. We will now examine the values of $\diam(\sP)$,
$\diam(\sL)$ and $D_z$ and show that this algorithm actually incurs at
most $O(KC\sqrt{T})$ regret.

First, note that since $\sP = \Delta([K])^C$, elements $p \in \sP$ can
be thought of as $C$ $K$-tuples of positive numbers that add to
$1$. Each such $K$-tuple has squared distance at most $1$, so
$\diam(\sP) \leq \sqrt{C}$. Second, since $\sL = [0, 1]^{KC}$,
$\diam(\sL) = \sqrt{KC}$. Finally, the $KC$ coefficients of each
$z_{\kappa, \pi_c}$ consist of $2K$ copies of the distribution
$\Pr[c]$; this has $\ell_{2}$ norm at most $\sqrt{2K}$, so $D_{z} =
O(\sqrt{2K})$. Combining these three quantities according to Theorem
\ref{thm:poly_regret}, we obtain the following corollary.

\begin{corollary}
\sloppy{There exists an efficient contextual learning algorithm with
  $\BSReg = O(CK\sqrt{T})$.}
\end{corollary}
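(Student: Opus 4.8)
The plan is to apply Theorems~\ref{thm:poly_regret} and~\ref{thm:efficient_alg} to the $\ell_\infty$-approachability instance for Bayesian swap regret constructed above, and then sharpen the generic regret bound by computing the three geometric quantities $\diam\sP$, $\diam\sL$, and $D_z$ explicitly.

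First I would verify the three hypotheses of Theorem~\ref{thm:efficient_alg}. The loss set $\sL = [0,1]^{CK}$ is orthant-generating, as it contains every standard basis vector. The action set $\sP = \Delta([K])^C$ is a Cartesian product of $C$ simplices, so an efficient separation oracle is immediate: given a candidate $p$, check membership in each of the $C$ blocks and return a violated simplex constraint if one exists. Finally, the rewriting \eqref{eq:bs_regret2} furnishes an efficient regret oracle for $u$, since for action/loss pairs $(p_r,\ell_r)$ with weights $\alpha_r$ one has $\max_{i\in[d]}\sum_r\alpha_r u_i(p_r,\ell_r) = \sum_c\max_{c'}\sum_i\max_j\sum_r\alpha_r\Pr_{\cC}[c]\big(p_r(c)_i\ell_{r,i,c} - p_r(c')_i\ell_{r,j,c}\big)$, which is computable in $\poly(C,K,R)$ time. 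Hence Theorem~\ref{thm:efficient_alg} gives a $\poly(C,K)$-per-round implementation of the algorithm $\wtcA$ of Theorem~\ref{thm:poly_regret}.

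It then remains to bound $\Reg(\wtcA) = O\big(D_z(\diam\sP)(\diam\sL)\sqrt{T}\big)$. For $\diam\sP$, any $p\in\Delta([K])^C$ satisfies $\|p\|_2^2 = \sum_{c=1}^C\|p(c)\|_2^2 \le C$, so $\diam\sP = O(\sqrt{C})$; and $\diam\sL = \sqrt{CK}$ since $\sL = [0,1]^{CK}$. The crux is $D_z = \max_i\|v_i\|_2$. Although $d = C^CK^{KC}$ is astronomically large, each coefficient vector $v_{\kappa,\pi_c}\in\Rset^{(CK)^2}$ is sparse: for each $(c,i)\in[C]\times[K]$ it carries one entry $+\Pr_{\cC}[c]$ (the coefficient of the monomial $p(c)_i\ell_{i,c}$) and one entry $-\Pr_{\cC}[c]$ (the coefficient of $p(\kappa(c))_i\ell_{\pi_c(i),c}$), and vanishes on all other coordinates; since the positive entries sit at pairwise distinct coordinates and likewise the negative entries, $\|v_{\kappa,\pi_c}\|_2^2 \le 2\sum_{c=1}^C\sum_{i=1}^K\Pr_{\cC}[c]^2 = 2K\sum_{c=1}^C\Pr_{\cC}[c]^2 \le 2K$, so $D_z = O(\sqrt{K})$. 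Multiplying, $D_z(\diam\sP)(\diam\sL) = O(\sqrt{K}\cdot\sqrt{C}\cdot\sqrt{CK}) = O(CK)$, giving $\Reg(\wtcA) = O(CK\sqrt{T})$; together with the efficiency established above, this is the corollary.

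The main obstacle is the estimate of $D_z$: one must look past the enormous value of $d$, observe that each $v_i$ has only $O(CK)$ nonzero entries, and exploit $\sum_c\Pr_{\cC}[c]^2 \le \sum_c\Pr_{\cC}[c] = 1$ to collapse the naive bound $O(\sqrt{CK})$ down to $O(\sqrt{K})$ — this is exactly the slack that turns the crude $O(C^2K^2\sqrt{T})$ bound from a direct (unoptimized) application of Theorem~\ref{thm:poly_regret} into the claimed $O(CK\sqrt{T})$.
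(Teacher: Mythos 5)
Your proposal is correct and follows essentially the same route as the paper: compute $\diam\sP = O(\sqrt{C})$, $\diam\sL = \sqrt{CK}$, and $D_z = O(\sqrt{K})$, then plug into the first bound of Theorem~\ref{thm:poly_regret}, with efficiency via Theorem~\ref{thm:efficient_alg} and the rewriting~\eqref{eq:bs_regret2}. Your $D_z$ computation is written out more carefully than the paper's (you track the exact nonzero coordinates of each $v_{\kappa,\pi_c}$ and use $\sum_c\Pr_{\cC}[c]^2\le 1$, whereas the paper invokes the same fact somewhat tersely), and you explicitly check the three hypotheses of Theorem~\ref{thm:efficient_alg}, which the paper only gestures at; but these are expository differences, not a different argument.
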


\end{document}